\newcommand\backleadsto{\mathrel{\reflectbox{$\leadsto$}}}
\newtheorem*{rep@theorem}{\rep@title}
\newcommand{\newreptheorem}[2]{%
\newenvironment{rep#1}[1]{%
 \def\rep@title{#2 \ref{##1}}%
 \begin{rep@theorem}}%
 {\end{rep@theorem}}}
\newcommand\ci{\perp\!\!\!\perp}
\def\circarrow{{\circ\hspace{0.3mm}\!\!\! \rightarrow}}
\def\circlinecirc{{\circ \hspace{0.4mm}\!\!\! - \hspace{0.4mm}\!\!\!\circ}}
\def\circline{{\circ \! -}}
\def\linecirc{{- \! \circ}}
\newtheorem{definition}{Definition}
\newtheorem{theorem}{Theorem}
\newtheorem{lemma}{Lemma}
\newtheorem{proposition}{Proposition}
\newtheorem{corollary}{Corollary}
\def\ps@pprintTitle{%
 \let\@oddhead\@empty
 \let\@evenhead\@empty
 \def\@oddfoot{}%
 \let\@evenfoot\@oddfoot}
\begin{document}

\title{A Constraint-Based Algorithm For Causal Discovery\\ with Cycles, Latent Variables \& Selection Bias}

\author{ 
     Eric V. Strobl
}

\address{
    Pittsburgh, PA
}

\begin{abstract}
Causal processes in nature may contain cycles, and real datasets may violate causal sufficiency as well as contain selection bias. No constraint-based causal discovery algorithm can currently handle cycles, latent variables and selection bias (CLS) simultaneously. I therefore introduce an algorithm called Cyclic Causal Inference (CCI) that makes sound inferences with a conditional independence oracle under CLS, provided that we can represent the cyclic causal process as a non-recursive linear structural equation model with independent errors. Empirical results show that CCI outperforms CCD in the cyclic case as well as rivals FCI and RFCI in the acyclic case.
\end{abstract}

\maketitle

\section{The Problem}

Scientists often infer causation using data collected from randomized controlled experiments. However, randomized experiments can be slow, non-generalizable, unethical or expensive. Consider for example trying to discover the causes of a human illness, a common scenario in modern medical science. Performing interventions with possibly harmful consequences on humans is unethical, so scientists often perform experiments on animals instead knowing full well that the causal relationships discovered in animals may not generalize to humans. Moreover, many possible causes for an illness often exist, so scientists typically perform numerous animal experiments in order to discover the causes. A lengthy trial and error process therefore ensues at considerable financial expense. We would ideally like to speed up this scientific process by discovering causation directly from human observational data, which we can more easily acquire.

The need for faster causal discovery has motivated many to develop algorithms for inferring causation from observational data. The PC algorithm, for example, represents one the earliest algorithms for inferring causation using i.i.d. data collected from an underlying acyclic causal process \citep{Spirtes00}. PC actually falls within a wider class of causal discovery algorithms called constraint-based (CB) algorithms which utilize a conditional independence (CI) oracle, or a CI test in the finite sample case, to re-construct the underlying causal graph. The FCI algorithm is another example of a CB algorithm which extends PC to handle latent variables and selection bias \citep{Spirtes00,Zhang08}. Yet another CB algorithm called CCD cannot handle latent variables (and perhaps selection bias) like FCI, but CCD can infer cyclic causal structure provided that all causal relations are linear \citep{Richardson96, Richardson99}. Many other CB algorithms exist, and most of these methods come with some guarantee of soundness in the sense that their outputs are provably correct with a CI oracle.

The aforementioned CB algorithms and many other non-CB algorithms have been successful in inferring causation under their respective assumptions. However, causal processes and datasets encountered in practice may not satisfy the assumptions of the algorithms. In particular, many causal processes are known to contain feedback loops \citep{Sachs05}, and datasets may contain latent variables as well as some degree of selection bias \citep{Spirtes95}. Few algorithms can handle cycles, latent variables and selection bias (CLS) simultaneously, so scientists often must unwillingly apply other methods knowing that the outputs may introduce unwanted bias \citep{Sachs05,Mooij13}. Solving the problem of causal discovery under CLS would therefore provide a much needed basis for justifying the output of causal discovery algorithms when run on real data.

A few investigators have devised non-CB based solutions for the problem of causal discovery  under CLS. \cite{Hyttinen13} introduced the first approach, where CI constraints are fed into a SAT solver which then outputs a graph consistent with the constraints. However, the method can be slow because the SAT solver does not construct efficient test schedules like CB algorithms. \cite{Strobl_thesis17} provided a different solution in the Gaussian case, provided that the cyclic causal process can be decomposed into a set of acyclic ones. The method uses both conditional independence testing and mixture modeling, but the mixture modeling inhibits a straightforward extension of the method to the non-parametric setting even in the linear case. Existing solutions to the problem of causal discovery under CLS thus fall short in either efficiency or generalizability.

The purpose of this paper is to introduce the first CB algorithm that is sound under CLS. The method is efficient because it constructs small test schedules, and it is generalizable to the non-parametric setting because the algorithm only requires a sound CI test. We introduce the new CB algorithm as follows. We first provide provide background material on causal discovery without cycles in Sections \ref{sec_graph} through \ref{sec_MAGs}. We then review causal discovery with cycles in Section \ref{sec_DCGs}. Section \ref{sec_MAAGs} introduces the new notion of a maximal almost ancestral graph (MAAG) for summarizing cyclic graphs with latent variables and selection bias. Next, Section \ref{sec_overview} contains an overview of the proposed algorithm, while Section \ref{sec_trace} outlines an algorithm trace. Section \ref{sec_details} subsequently lists the details of the proposed CB algorithm. Experimental results are included in Section \ref{sec_exp}. We finally conclude the paper in Section \ref{sec_conc}. Most of the proofs are located in the Appendix.

\section{Graph Terminology} \label{sec_graph}

Let italicized capital letters such as $A$ denote a single variable and bolded as well as italicized capital letters such as $\bm{A}$ denote a set of variables (unless specified otherwise). We will also use the terms ``variables'' and ``vertices'' interchangeably.

A graph $\mathbb{G}=(\bm{X}, \mathcal{E})$ consists of a set of vertices $\bm{X}=\{ X_1, \dots, X_p \}$ and a set of edges $\mathcal{E}$ between each pair of vertices. The edge set $\mathcal{E}$ may contain the following six edge types: $\rightarrow$ (directed), $\leftrightarrow$ (bidirected), --- (undirected), $\circarrow$ (partially directed), $\circline$ (partially undirected) and $\circlinecirc$ (nondirected). Notice that these six edges utilize three types of endpoints including \textit{tails}, \textit{arrowheads}, and \textit{circles}.

We call a graph containing only directed edges as a \textit{directed graph}. We will only consider directed graphs without self-loops in this paper. On the other hand, a \textit{mixed graph} contains directed, bidirected and undirected edges. We say that $X_i$ and $X_j$ are \textit{adjacent} in a graph, if they are connected by an edge independent of the edge's type. An \textit{(undirected) path} $\Pi$ between $X_i$ and $X_j$ is a set of consecutive edges (also independent of their type) connecting the variables such that no vertex is visited more than once. A \textit{directed path} from $X_i$ to $X_j$ is a set of consecutive directed edges from $X_i$ to $X_j$ in the direction of the arrowheads. A \textit{cycle} occurs when a path exists from $X_i$ to $X_j$, and $X_j$ and $X_i$ are adjacent. More specifically, a directed path from $X_i$ to $X_j$ forms a \textit{directed cycle} with the directed edge $X_j \rightarrow X_i$ and an \textit{almost directed cycle} with the bidirected edge $X_j \leftrightarrow X_i$. We call a directed graph a \textit{directed acyclic graph} (DAG), if it does not contain directed cycles.

Three vertices $\{X_i,X_j,X_k\}$ form an \textit{unshielded triple}, if $X_i$ and $X_j$ are adjacent, $X_j$ and $X_k$ are adjacent, but $X_i$ and $X_k$ are not adjacent. On the other hand, the three vertices form a \textit{triangle} when $X_i$ and $X_k$ are also adjacent. We call a nonendpoint vertex $X_j$ on a path $\Pi$ a \textit{collider} on $\Pi$, if both the edges immediately preceding and succeeding the vertex have an arrowhead at $X_j$. Likewise, we refer to a nonendpoint vertex $X_j$ on $\Pi$ which is not a collider as a \textit{non-collider}. Finally, an unshielded triple involving $\{X_i,X_j,X_k\}$ is more specifically called a \textit{v-structure}, if $X_j$ is a collider on the subpath $\langle X_i,X_j,X_k \rangle$.

We say that $X_i$ is an \textit{ancestor} of $X_j$ (and $X_j$ is a \textit{descendant} of $X_i$) if and only if there exists a directed path from $X_i$ to $X_j$ or $X_i = X_j$. We write $X_i \in \textnormal{Anc}(X_j)$ to mean $X_i$ is an ancestor of $X_j$ and $X_j \in \textnormal{Des}(X_i)$ to mean $X_j$ is a descendant of $X_i$. We also apply the definitions of an ancestor and descendant to a set of vertices $\bm{Y} \subseteq \bm{X}$ as follows: 
\begin{equation} \nonumber
\begin{aligned}
\textnormal{Anc}(\bm{Y}) &= \{X_i | X_i \in \textnormal{Anc}(X_j) \text{ for some } X_j \in \bm{Y}\},\\
\textnormal{Des}(\bm{Y}) &= \{X_i | X_i \in \textnormal{Des}(X_j) \text{ for some } X_j \in \bm{Y}\}.
\end{aligned}
\end{equation}

\section{Causal \& Probabilistic Interpretations of DAGs} \label{sec_DAGs}

We will interpret DAGs in a causal fashion \citep{Spirtes00,Pearl09}. To do this, we consider a stochastic causal process with a distribution $\mathbb{P}$ over $\bm{X}$ that satisfies the \textit{Markov property}. A distribution satisfies the Markov property if it admits a density that ``factorizes according to the DAG'' as follows:
\begin{equation} \label{eq_fac}
f(\bm{X})=\prod_{i=1}^{p} f(X_i | \textnormal{Pa}(X_i)).
\end{equation}
\noindent We can in turn relate the above equation to a graphical criterion called d-connection. Specifically, if $\mathbb{G}$ is a directed graph in which $\bm{A}$, $\bm{B}$ and $\bm{C}$ are disjoint sets of vertices in $\bm{X}$, then $\bm{A}$ and $\bm{B}$ are \textit{d-connected} by $\bm{C}$ in the directed graph $\mathbb{G}$ if and only if there exists an \textit{active or d-connecting path} $\Pi$ between some vertex in $\bm{A}$ and some vertex in $\bm{B}$ given $\bm{C}$. An active path between $\bm{A}$ and $\bm{B}$ given $\bm{C}$ refers to an undirected path $\Pi$ between some vertex in $\bm{A}$ and some vertex in $\bm{B}$ such that, for any collider $X_i$ on $\Pi$, a descendant of $X_i$ is in $\bm{C}$ and no non-collider on $\Pi$ is in $\bm{C}$. A path is \textit{inactive} when it is not active. Now $\bm{A}$ and $\bm{B}$ are \textit{d-separated} by $\bm{C}$ in $\mathbb{G}$ if and only if they are not d-connected by $\bm{C}$ in $\mathbb{G}$. For shorthand, we will write $\bm{A} \ci_d \bm{B} | \bm{C}$ and $\bm{A} \not \ci_d \bm{B} | \bm{C}$ when $\bm{A}$ and $\bm{B}$ are d-separated or d-connected given $\bm{C}$, respectively. The conditioning set $\bm{C}$ is called a \textit{minimal separating set} if and only if $\bm{A} \ci_d \bm{B} | \bm{C}$ but $\bm{A}$ and $\bm{B}$ are d-connected given any proper subset of $\bm{C}$. 

If we have $\bm{A} \ci_d \bm{B} | \bm{C}$, then $\bm{A}$ and $\bm{B}$ are conditionally independent given $\bm{C}$, denoted as $\bm{A} \ci \bm{B} | \bm{C}$, in any joint density factorizing according to \eqref{eq_fac} \citep{Lauritzen90}; we refer to this property as the \textit{global directed Markov property}. We also refer to the converse of the global directed Markov property as \textit{d-separation faithfulness}; that is, if $\bm{A} \ci \bm{B}|\bm{C}$, then $\bm{A}$ and $\bm{B}$ are d-separated given $\bm{C}$. One can in fact show that the factorization in \eqref{eq_fac} and the global directed Markov property are equivalent, so long as the distribution over $\bm{X}$ admits a density \citep{Lauritzen90}. We will only consider distributions which admit densities in this report, so we will use the terms ``distribution'' and ``density'' interchangeably from here on out.

\section{Ancestral Graphs for DAGs} \label{sec_MAGs}

We can associate a directed graph $\mathbb{G}$ with a mixed graph $\mathbb{G}^\prime$ with arbitrary edges as follows. For any directed graph $\mathbb{G}$ with vertices $\bm{X}$, we consider the partition $\bm{X} = \bm{O} \cup \bm{L} \cup \bm{S}$, where $\bm{O}$, $\bm{L}$ and $\bm{S}$ are non-overlapping sets of observable, latent and selection variables, respectively. We then consider a mixed graph $\mathbb{G}^\prime$ over $\bm{O}$, where the arrowheads and tails have the following interpretations. If we have the arrowhead $O_i * \!\! \rightarrow O_j$, where the asterisk is a meta-symbol denoting either a tail or an arrowhead, then we say that $O_j$ is not an ancestor of $O_i \cup \bm{S}$ in $\mathbb{G}$. On the other hand, if we have the tail $O_i * \!\! - O_j$ then we say that $O_j$ is an ancestor of $O_i \cup \bm{S}$ in $\mathbb{G}$. Let $\textnormal{Anc}(X_i)$ denote the ancestors of $X_i$ in $\mathbb{G}$. Obviously then, any $\mathbb{G}^\prime$ constructed from a directed graph cannot have a \textit{directed cycle}, where $O_i \rightarrow O_j$ in $\mathbb{G}^\prime$ and $O_j \in \textnormal{Anc}(O_i \cup \bm{S})$ in $\mathbb{G}$. Similarly, $\mathbb{G}^\prime$ cannot have an \textit{almost directed cycle}, where $O_i \leftrightarrow O_j$ is in $\mathbb{G}^\prime$ and $O_j \in \textnormal{Anc}(O_i \cup \bm{S})$ in $\mathbb{G}$.

One can also show that, if $\mathbb{G}$ is acyclic, then any mixed graph constructed from $\mathbb{G}$ cannot have a undirected edge $O_i - O_j$ with incoming arrowheads at $O_i$ or $O_j$ \citep{Richardson00}. We therefore find it useful to consider a subclass of mixed graphs called \textit{ancestral graphs}:
\begin{definition}\label{def_AG}(Ancestral Graphs) A mixed graph $\mathbb{G}^\prime$ is more specifically called an ancestral graph if and only if $\mathbb{G}^\prime$ satisfies the following three properties:
\begin{enumerate} \label{AG_prop}
\item There is no directed cycle.
\item There is no almost directed cycle.
\item For any undirected edge $O_i - O_j$, $O_i$ and $O_j$ have no incoming arrowheads.
\end{enumerate}
\end{definition}
\noindent Observe that every mixed graph of a DAG is an ancestral graph.

A \textit{maximal ancestral graph} (MAG) is an ancestral graph where every missing edge corresponds to a conditional independence relation. One can transform a DAG $\mathbb{G}$ into a MAG $\mathbb{G}^\prime$ as follows. First, for any pair of vertices $\{O_i, O_j\}$, make them adjacent in $\mathbb{G}^\prime$ if and only if there is an \textit{inducing path} between $O_i$ and $O_j$ in $\mathbb{G}$. We define an inducing path as follows:
\begin{definition} \label{def_IP}
(Inducing Path) A path $\Pi$ between $O_i$ and $O_j$ in $\mathbb{G}$ is called an inducing path if and only if every collider on $\Pi$ is an ancestor of $\{O_i,O_j\} \cup \bm{S}$, and every non-collider on $\Pi$ (except for the endpoints) is in $\bm{L}$.
\end{definition}
\noindent Note that two observables $O_i$ and $O_j$ are connected by an inducing path if and only if they are d-connected given any $\bm{W} \subseteq \bm{O} \setminus \{ O_i, O_j \}$ as well as $\bm{S}$ \citep{Spirtes00}. Then, for each adjacency $O_i *\!\! -\!\! * O_j$ in $\mathbb{G}^\prime$, we have the following edge interpretations:
\begin{enumerate}
\item If we have $O_i * \!\! \rightarrow O_j$, then $O_j \not \in \textnormal{Anc}(O_i \cup \bm{S})$ in $\mathbb{G}$.
\item If we have $O_i * \!\! \textnormal{---} O_j$, then $O_j \in \textnormal{Anc}(O_i \cup \bm{S})$ in $\mathbb{G}$.
\end{enumerate}
The MAG of a DAG is therefore a kind of marginal graph that does not contain the latent or selection variables, but does contain information about the ancestral relations between the observable and selection variables in the DAG. The MAG also has the same d-separation relations as the DAG, specifically among the observable variables conditional on the selection variables \citep{Spirtes96}. 

\section{Directed Cyclic Graphs as Equilibriated Causal Processes} \label{sec_DCGs}
We now allow cycles in a directed graph. Multiple different causal representations of a directed cyclic graph exist in the literature. Examples include dynamic Bayesian networks \citep{Dagum95}, structural equation models with feedback \citep{Spirtes95c}, chain graphs \citep{Lauritzen02} and mixtures of DAGs \citep{Strobl_thesis17}. See \citep{Strobl_thesis17} for a discussion of each of their strengths and weaknesses. In this report, we will only consider structural equation models with feedback. 

Recall that the density $f(\bm{X})$ associated with a DAG $\mathbb{G}$ obeys the global Markov property. However, the density may not obey the global Markov property if $\mathbb{G}$ contains cycles. We therefore must impose certain assumptions on $\mathbb{P}$ such that its density does obey the property.

\cite{Spirtes95c} proposed the following assumptions on $\mathbb{P}$. We say that a distribution $\mathbb{P}$ obeys a \textit{structural equation model with independent errors} (SEM-IE) with respect to $\mathbb{G}$ if and only if we can describe $\bm{X}$ as $X_i = g_i(\textnormal{Pa}(X_i), \varepsilon_i)$ for all $X_i \in \bm{X}$ such that $X_i$ is $\sigma(\textnormal{Pa}(X_i), \varepsilon_i)$ measurable and $\varepsilon_i \in \bm{\varepsilon}$ \citep{Evans16}. Here, we have a set of jointly independent errors $\bm{\varepsilon}$, and $\sigma(Y)$ refers to the sigma-algebra generated by the random variable $Y$. An example of an SEM-IE is illustrated below with an associated directed graph drawn in Figure \ref{fig_SEMIE}:
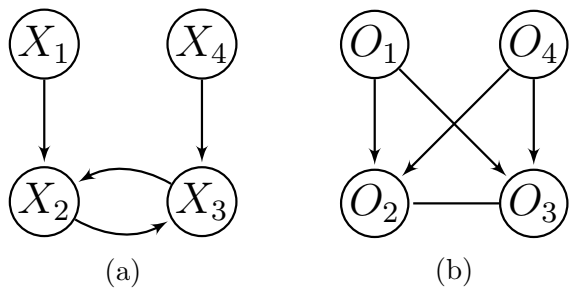
\begin{figure}
\centering
\begin{subfigure}{.25\linewidth}
\centering
\resizebox{\linewidth}{!}{
\begin{tikzpicture}[scale=1.0, shorten >=1pt,auto,node distance=2.8cm, semithick]
                    
\tikzset{vertex/.style = {shape=circle,draw,inner sep=0.4pt}}
 
\node[vertex] (1) at  (0,1) {$X_1$};
\node[vertex] (4) at  (1.5,1) {$X_4$};
\node[vertex] (3) at  (1.5,-0.5) {$X_3$};
\node[vertex] (2) at  (0,-0.5) {$X_2$};

\draw[->,> = latex'] (1) to (2);
\draw[->,> = latex', bend right] (2) to (3);
\draw[->,> = latex', bend right] (3) to (2);
\draw[->,> = latex'] (4) to (3);
\end{tikzpicture}
} 
\caption{} \label{fig_SEMIE}
\end{subfigure} \hspace{7mm}
\begin{subfigure}{.25\linewidth}
\centering
\resizebox{\linewidth}{!}{
\begin{tikzpicture}[scale=1.0, shorten >=1pt,auto,node distance=2.8cm, semithick]
                    
\tikzset{vertex/.style = {shape=circle,draw,inner sep=0.4pt}}
 
\node[vertex] (1) at  (0,1) {$O_1$};
\node[vertex] (4) at  (1.5,1) {$O_4$};
\node[vertex] (3) at  (1.5,-0.5) {$O_3$};
\node[vertex] (2) at  (0,-0.5) {$O_2$};

\draw[->,> = latex'] (1) to (2);
\draw[-] (3) to (2);
\draw[->,> = latex'] (4) to (2);
\draw[->,> = latex'] (4) to (3);
\draw[->,> = latex'] (1) to (3);
\end{tikzpicture}
}
\caption{} \label{fig_SEMIE_MAAG}
\end{subfigure}
\caption{(a) The graph $\mathbb{G}$ associated with the SEM-IE in Equation 
\eqref{struc_eqs}. (b) The associated MAAG.} 
\end{figure}

\begin{equation} \label{struc_eqs}
\begin{aligned}
X_1 &= \varepsilon_1, \\
X_2 &= B_{21}X_1 + B_{23}X_3 + \varepsilon_2, \\
X_3 &= B_{34}X_4 + B_{32}X_2 + \varepsilon_3, \\
X_4 &= \varepsilon_4,
\end{aligned}
\end{equation}

\noindent where $\bm{\varepsilon}$ denotes a set of jointly independent standard Gaussian error terms, and $B$ is a 4 by 4 coefficient matrix. Notice that the structural equations in \eqref{struc_eqs} are linear structural equations.

We can simulate data from an SEM-IE using the \textit{fixed point method} \citep{Fisher70}. The fixed point method involves two steps per sample. We first sample the error terms according to their independent distributions and initialize $\bm{X}$ to some values. Next, we apply the structural equations iteratively until the values of the random variables converge to values which satisfy the structural equations; in other words, the values converge almost surely to a fixed point.\footnote{We can perform the fixed point method more efficiently in the linear case by first representing the structural equations in matrix format: $\bm{X} = B \bm{X} + \bm{\varepsilon}$. Then, after drawing the values of $\bm{\varepsilon}$, we can obtain the values of $\bm{X}$ by solving the following system of equations: $\bm{X} = (\mathbb{I}-B)^{-1}\bm{\varepsilon}$, where $\mathbb{I}$ denotes the identity matrix.} Note that the values of the random variables may not necessarily converge to a fixed point all of the time for every set of structural equations and error distributions, but we will only consider those structural equations and error distributions which do satisfy this property. We call the distribution reached at the fixed points as the \textit{equilibrium distribution}.

\cite{Spirtes95c} proved the following regarding \textit{linear SEM-IEs}, or SEM-IEs with linear structural equations:
\begin{theorem} \label{thm_linearSEMIE}
The equilibrium distribution $\mathbb{P}$ of a linear SEM-IE satisfies the global directed Markov property with respect to the SEM-IE's directed graph $\mathbb{G}$ (acyclic or cyclic).
\end{theorem}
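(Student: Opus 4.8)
The plan is to reduce the statement to the classical result for DAGs by producing a DAG-style density factorization for the equilibrium law. By the standing assumptions behind the fixed-point method, $\mathbb{I}-B$ is invertible and the equilibrium law is exactly the law of $\bm{X}=(\mathbb{I}-B)^{-1}\bm{\varepsilon}$; since every distribution considered in this paper admits a density and the $\varepsilon_i$ are jointly independent, $\bm{\varepsilon}$ has product density $\prod_i f_i(\varepsilon_i)$. Fix disjoint $\bm{A},\bm{B},\bm{C}\subseteq\bm{X}$ with $\bm{A}\ci_d\bm{B}\mid\bm{C}$ in $\mathbb{G}$; the goal is $\bm{X}_{\bm{A}}\ci\bm{X}_{\bm{B}}\mid\bm{X}_{\bm{C}}$.

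First I would restrict to the subgraph $\mathbb{G}_D$ of $\mathbb{G}$ induced on $D:=\textnormal{Anc}(\bm{A}\cup\bm{B}\cup\bm{C})$. Because $D$ is ancestrally closed, no vertex of $D$ has a parent outside $D$, so $B$ is block lower-triangular for the ordering $(D,\bm{X}\setminus D)$; hence $\mathbb{I}-B$ is block lower-triangular, its diagonal block $\mathbb{I}-B_{DD}$ is invertible, the sub-process $\bm{X}_D=B_{DD}\bm{X}_D+\bm{\varepsilon}_D$ is self-contained, and $\bm{X}_D=(\mathbb{I}-B_{DD})^{-1}\bm{\varepsilon}_D$. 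Applying the change of variables $\bm{x}_D\mapsto(\mathbb{I}-B_{DD})\bm{x}_D=\bm{\varepsilon}_D$ to the product density gives
\[
 f_{\bm{X}_D}(\bm{x}_D)=\bigl|\det(\mathbb{I}-B_{DD})\bigr|\prod_{i\in D}f_i\bigl([(\mathbb{I}-B_{DD})\,\bm{x}_D]_i\bigr).
\]
The $i$-th entry $[(\mathbb{I}-B_{DD})\,\bm{x}_D]_i=x_i-\sum_{j\in D}B_{ij}x_j$ depends only on $X_i$ together with its parents $\textnormal{Pa}(X_i)$, which form a complete set in the moral graph $(\mathbb{G}_D)^m$; therefore $f_{\bm{X}_D}$ factorizes according to the undirected graph $(\mathbb{G}_D)^m$, and by the implication ``factorization $\Rightarrow$ global Markov property'' for undirected graphs (which needs no positivity assumption), $\bm{X}_D$ obeys the global Markov property with respect to $(\mathbb{G}_D)^m$.

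Finally I would link the two separation notions: by the moralization characterization of $d$-separation, $\bm{A}\ci_d\bm{B}\mid\bm{C}$ in $\mathbb{G}$ is equivalent to $\bm{A}$ being separated from $\bm{B}$ by $\bm{C}$ in the moral graph of the ancestral subgraph, that is, in $(\mathbb{G}_D)^m$; together with the previous paragraph this yields $\bm{X}_{\bm{A}}\ci\bm{X}_{\bm{B}}\mid\bm{X}_{\bm{C}}$, which is the theorem. The one step I expect to require genuine care is precisely this equivalence — that the path-based $d$-separation of Section~\ref{sec_DAGs} still coincides with separation in the moralized ancestral subgraph when $\mathbb{G}$ may contain cycles. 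Only the direction ``a path in $(\mathbb{G}_D)^m$ avoiding $\bm{C}$ lifts to a $d$-connecting path in $\mathbb{G}$ given $\bm{C}$'' is needed; the standard DAG lifting argument (expand each moral edge $u-w$ that comes from $u\to v\leftarrow w$ into the collider $v$, whose membership in $D$ either activates the collider or permits rerouting directly into $\bm{A}\cup\bm{B}$, then prune repeated vertices) never invokes acyclicity, but it should nonetheless be verified in detail or cited from the literature on cyclic graphical models. The remaining ingredients — invertibility of $\mathbb{I}-B$ and of $\mathbb{I}-B_{DD}$, the density change of variables, and the undirected ``factorization $\Rightarrow$ global Markov'' implication — are routine.
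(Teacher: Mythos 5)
The paper does not actually prove Theorem~\ref{thm_linearSEMIE}; it imports it verbatim from \cite{Spirtes95c}, so there is no in-paper argument to compare against. Your proposal is, in essence, a reconstruction of the standard proof of that imported result (the route taken by Spirtes and by Koster for non-recursive linear SEMs): restrict to the ancestrally closed set $D$, exploit block lower-triangularity of $\mathbb{I}-B$ to get a self-contained linear system for $\bm{X}_D$, change variables to obtain a density that factorizes over complete sets of the moral graph $(\mathbb{G}_D)^m$, invoke ``factorization $\Rightarrow$ global Markov'' for undirected graphs (which indeed needs no positivity), and finish with the moralization characterization of d-separation. The outline is correct, and you have correctly isolated the one genuinely delicate ingredient: that path-based d-separation in a \emph{cyclic} directed graph coincides with separation in the moral graph of the ancestral subgraph. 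That equivalence is true but is a real lemma, not a freebie; within your sketched lifting argument the step that most needs care in the cyclic setting is the final ``prune repeated vertices'' step, i.e.\ converting the d-connecting \emph{walk} produced by expanding moral edges and rerouting colliders into $\bm{A}\cup\bm{B}$ into a d-connecting \emph{path} --- the usual DAG shortcut argument must be re-examined when a vertex can be both an ancestor and a descendant of another. This should be proved carefully or cited (it appears in Spirtes's 1995 work and in Koster's and Richardson's treatments of cyclic graphs).

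One point worth making explicit, since it is where the hypothesis of linearity actually enters and why the theorem is false for general cyclic SEM-IEs: in your change of variables the Jacobian $\bigl|\det(\mathbb{I}-B_{DD})\bigr|$ is a \emph{constant}, so it does not disturb the clique-wise factorization of $f_{\bm{X}_D}$. For a non-linear cyclic system the map from $\bm{x}_D$ to the errors has a non-constant Jacobian determinant that in general does not factor over complete sets of the moral graph, and the whole argument (and the theorem) collapses; in the acyclic case the Jacobian is identically $1$, which is why no linearity is needed there. Your proof uses this fact implicitly; stating it would make clear that the argument is sharp with respect to the theorem's hypotheses. Minor loose ends --- invertibility of $\mathbb{I}-B_{DD}$ follows from $\det(\mathbb{I}-B)=\det(\mathbb{I}-B_{DD})\det(\mathbb{I}-B_{D^cD^c})\neq 0$, and the collider-activation clause ``has a descendant in $\bm{C}$'' matches the paper's definition of an active path --- are handled correctly.
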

\noindent The above theorem provided a basis from which Richardson started constructing the Cyclic Causal Discovery (CCD) algorithm \citep{Richardson96,Richardson99} for causal discovery with feedback.

\section{Almost Ancestral Graphs for Directed Graphs} \label{sec_MAAGs}
Recall that an ancestral graph satisfies the three properties listed in Definition \ref{def_AG}. We now define an \textit{almost ancestral graph} (AAG) which only satisfies the first two conditions of an ancestral graph. The following result should be obvious:
\begin{proposition}
Any mixed graph $\mathbb{G}^\prime$ constructed from a directed graph $\mathbb{G}$ (cyclic or acyclic) over $\bm{O}$ is an AAG.
\end{proposition}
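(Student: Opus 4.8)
Recall from Section~\ref{sec_MAGs} that the mixed graph $\mathbb{G}'$ over $\bm{O}$ has every tail and every arrowhead obeying the ancestral reading relative to $\mathbb{G}$, and that, by definition, an AAG is exactly a mixed graph satisfying properties~1 and~2 of Definition~\ref{def_AG}. So it suffices to show $\mathbb{G}'$ contains no directed cycle and no almost directed cycle. The plan is to reduce both statements to a single observation about directed paths in $\mathbb{G}'$, using only the endpoint readings of Section~\ref{sec_MAGs} together with transitivity of ancestorship in $\mathbb{G}$.

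First I would record that the relation ``$A \in \textnormal{Anc}(B \cup \bm{S})$ in $\mathbb{G}$'' is transitive in $A,B$. Since $\textnormal{Anc}(B \cup \bm{S}) = \textnormal{Anc}(B) \cup \textnormal{Anc}(\bm{S})$, ordinary ancestorship is transitive (a concatenation of two directed paths in $\mathbb{G}$ contains a directed path between the same endpoints, which needs no acyclicity), and ``being an ancestor of some member of $\bm{S}$'' is absorbing, one checks case by case that $A \in \textnormal{Anc}(B \cup \bm{S})$ and $B \in \textnormal{Anc}(C \cup \bm{S})$ imply $A \in \textnormal{Anc}(C \cup \bm{S})$. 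With this I would prove the key sublemma: \emph{if there is a directed path from $O_i$ to $O_j$ in $\mathbb{G}'$, then $O_i \in \textnormal{Anc}(O_j \cup \bm{S})$ in $\mathbb{G}$.} Indeed, each edge $O_a \rightarrow O_b$ on such a path carries a tail at $O_a$, which by the Section~\ref{sec_MAGs} reading means $O_a \in \textnormal{Anc}(O_b \cup \bm{S})$ in $\mathbb{G}$; chaining these relations along the path and invoking the transitivity just proved gives the claim (formally, an induction on the number of edges).

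It then remains to close both cases by contradiction. Suppose $\mathbb{G}'$ contained a directed cycle: a directed path from $O_i$ to $O_j$ in $\mathbb{G}'$ together with the edge $O_j \rightarrow O_i$ in $\mathbb{G}'$. The sublemma gives $O_i \in \textnormal{Anc}(O_j \cup \bm{S})$ in $\mathbb{G}$, while the arrowhead at $O_i$ on the edge $O_j \rightarrow O_i$ gives $O_i \notin \textnormal{Anc}(O_j \cup \bm{S})$ in $\mathbb{G}$, a contradiction. The almost-directed-cycle case is identical: replacing $O_j \rightarrow O_i$ by $O_j \leftrightarrow O_i$ changes nothing, since all that is used about the closing edge is that it has an arrowhead at $O_i$. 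Hence $\mathbb{G}'$ satisfies properties~1 and~2 of Definition~\ref{def_AG}, i.e.\ $\mathbb{G}'$ is an AAG.

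No step of this argument invokes acyclicity of $\mathbb{G}$, so the cyclic and acyclic cases are handled uniformly, exactly as the statement asserts. I expect the only place needing genuine care to be the transitivity bookkeeping for ``$A \in \textnormal{Anc}(B \cup \bm{S})$'', in particular the subcase where an interior vertex of a directed $\mathbb{G}'$-path is an ancestor of a selection variable rather than of the next vertex; once that subcase is folded in, the remainder is mechanical.
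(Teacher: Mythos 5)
Your proof is correct and follows essentially the same route as the paper's: the paper's one-sentence argument is precisely that a (directed or almost directed) cycle in $\mathbb{G}'$ would force some vertex to be simultaneously an ancestor and a non-ancestor of $O_i \cup \bm{S}$ in $\mathbb{G}$, which is the contradiction you derive. Your version merely makes explicit the transitivity bookkeeping for the relation ``$A \in \textnormal{Anc}(B \cup \bm{S})$'' along the directed path of tails, which the paper leaves implicit.
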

\begin{proof}
No directed cycle and no almost directed cycle can exist in $\mathbb{G}^\prime$ because that would imply that there exists a vertex $O_j$ which is simultaneously both an ancestor of $O_i \cup \bm{S}$ and not an ancestor of $O_i \cup \bm{S}$ in $\mathbb{G}$.
\end{proof}

Now an almost ancestral graph is said to \textit{maximal} when an edge exists between any two vertices $O_i$ and $O_j$ if and only if there exists an inducing path between $O_i$ and $O_j$. Note that a maximal almost ancestral graph (MAAG) $\mathbb{G}^\prime$ does not necessarily preserve the d-separation relations between the variables in $\bm{O}$ given $\bm{S}$ in a directed graph $\mathbb{G}$, even though $\mathbb{G}^\prime$ does do so when $\mathbb{G}$ is acyclic \citep{Spirtes96}. We provide an example of an MAAG in Figure \ref{fig_SEMIE_MAAG}, where $\bm{X} = \bm{O}$ because $\bm{L} = \emptyset$ and $\bm{S} = \emptyset$.

\section{Overview of the Proposed Algorithm} \label{sec_overview}

We now introduce a new CB algorithm called Cyclic Causal Inference (CCI) for discovering causal relations under CLS. We first provide a bird's eye view of the algorithm (for more details, see Section \ref{sec_details}). We will assume that the reader is familiar with past CB algorithms including PC, FCI, RFCI and CCD; for a brief review of each, see Section \ref{sec_algs} in the Appendix.

We have summarized CCI in Algorithm 1. The algorithm first performs FCI's skeleton discovery procedure in Step \ref{alg_skeleton} (see Algorithm \ref{fci_final_skel}), which discovers a graph where each adjacency between any two observables $O_i$ and $O_j$ corresponds to an inducing path even in the cyclic case (see Lemma \ref{lem_pdsep2} of Section \ref{sec_proofs} for a proof of this statement). The algorithm then orients v-structures in Step \ref{alg_vstruc} using FCI's v-structure discovery procedure (Algorithm \ref{fci_vstruc}).

Step \ref{alg_addV} of CCI checks for additional long range d-separation relations. Recall that the PC algorithm only checks for short range d-separation relations via v-structures. However, two cyclic directed graphs may agree locally on d-separation relations, but disagree on d-separation relations between distant variables, even if they do not contain any latent variables or selection bias  \citep{Richardson94}. As a result, Step \ref{alg_addV} allows the algorithm to orient additional edges by checking for additional d-separation relations.

Step \ref{alg_addD} of CCI discovers non-minimal d-separating sets which were not discovered in Step \ref{alg_skeleton}. Recall that, if we have $O_i * \!\! \rightarrow O_j \leftarrow \!\!* O_k$ with $O_i$ and $O_k$ non-adjacent, then every set d-separating $O_i$ and $O_k$ does not contain $O_j$. However, in the cyclic case, $O_i$ and $O_k$ can be d-separated given a set that contains $O_j$. It turns out that we can infer additional properties about the MAAG, if we find d-separating sets which contain $O_j$. Step \ref{alg_addD} of Algorithm \ref{alg_CCI} therefore discovers these additional non-minimal d-separating sets using Algorithm \ref{alg_CCI_E}. Steps \ref{alg_step5} and \ref{alg_step6} in turn utilize the d-separating sets discovered in Steps \ref{alg_skeleton} and \ref{alg_addD} in order to orient additional edges. Finally, Step \ref{alg_OR} applies the 7 orientation rules described in Section \ref{sec_details}. CCI thus ultimately outputs a \textit{partially oriented MAAG}, or an MAAG with tails, arrowheads and unspecified endpoints denoted by circles.

\begin{algorithm}[]
 \KwData{CI oracle}
 \KwResult{$\widehat{\mathbb{G}}$}
 \BlankLine
 
Run FCI's skeleton discovery procedure (Algorithm \ref{fci_final_skel}).\\ \label{alg_skeleton}
Run FCI's v-structure orientation procedure (Algorithm \ref{fci_vstruc}). \\\label{alg_vstruc}

For any triple of vertices $\langle O_i, O_k, O_j \rangle$ such that we have $O_k \circline \!\! * O_i$, if there is a set in $\textnormal{Sep}(O_i,O_j)$ discovered in Step \ref{alg_skeleton} such that $O_k \not \in \textnormal{Sep}(O_i,O_j)$, $O_i \not \ci_d O_k | \textnormal{Sep}(O_i,O_j) \cup \bm{S}$ and $O_j \not \ci_d O_k | \textnormal{Sep}(O_i,O_j) \cup \bm{S}$, then orient $O_k \circline \!\! * O_i$ as $O_k \leftarrow \!\! * O_i$. \label{alg_addV}

Find additional non-minimal d-separating sets using Algorithm \ref{alg_CCI_E}.\\ \label{alg_addD}

Find all quadruples of vertices $\langle O_i, O_j, O_k, O_l \rangle$ such that $O_i$ and $O_k$ non-adjacent, $O_i * \!\! \rightarrow O_l \leftarrow \!\! * O_k$, and $O_i \ci_d O_k|\bm{W} \cup \bm{S}$ with $O_j \in \bm{W}$ and $\bm{W} \subseteq \bm{O} \setminus \{O_i, O_k\}$. If $O_l \not \in \bm{W} = \textnormal{Sep}(O_i,O_k)$ as discovered in Step \ref{alg_skeleton}, then orient $O_j * \!\! \linecirc O_l$ as $O_j * \!\! \rightarrow O_l$. If we also have $O_i * \!\! \rightarrow O_j \leftarrow \!\! * O_k$ and $O_l \in \bm{W} = \textnormal{SupSep}(O_i,O_j,O_k)$ as discovered in Step \ref{alg_addD}, then orient $O_j * \!\! \linecirc O_l$ as $O_j * \!\! - O_l$.  \label{alg_step5}

If we have $O_i \ci_d O_k|\bm{W} \cup \bm{S}$ for some $\bm{W} \subseteq \bm{O} \setminus \{O_i, O_k\}$ with $O_j \in \bm{W}$ but we have $O_i \not \ci_d O_k|O_l \cup \bm{W} \cup \bm{S}$, then orient $O_l \circline \!\! * O_j$ as $O_l \leftarrow \!\! * O_j$. \\\label{alg_step6}

Execute orientation rules 1-7.\\ \label{alg_OR}
 
 \BlankLine

 \caption{Cyclic Causal Inference (CCI)} \label{alg_CCI}
\end{algorithm}

Now Algorithm \ref{alg_CCI} is sound due to the following theorem: 
\begin{theorem} \label{thm_main}
Consider a DAG or a linear SEM-IE with directed cyclic graph $\mathbb{G}$. If d-separation faithfulness holds, then CCI outputs a partially oriented MAAG of $\mathbb{G}$.
\end{theorem}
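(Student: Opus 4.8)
The plan is to recast the whole argument in terms of d-separation in $\mathbb{G}$ and then certify the skeleton together with each orientation step. First I would invoke Theorem~\ref{thm_linearSEMIE}: the equilibrium distribution of a linear SEM-IE --- and, trivially, the distribution of a DAG --- obeys the global directed Markov property with respect to $\mathbb{G}$, so under d-separation faithfulness the CI oracle answers $\bm{A}\ci\bm{B}\mid\bm{C}$ exactly when $\bm{A}\ci_d\bm{B}\mid\bm{C}\cup\bm{S}$ holds in $\mathbb{G}$. Since d-separation, inducing paths (Definition~\ref{def_IP}) and ancestry are defined identically for cyclic and acyclic directed graphs, every step of FCI/CCD whose correctness rests only on generic properties of active paths carries over verbatim; the work is to locate and re-prove the places where acyclicity was genuinely used.

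Next I would record skeleton and v-structure correctness. By Lemma~\ref{lem_pdsep2}, Step~\ref{alg_skeleton} (Algorithm~\ref{fci_final_skel}) leaves $O_i$ and $O_j$ adjacent if and only if there is an inducing path between them in $\mathbb{G}$, so the output skeleton is that of the MAAG of $\mathbb{G}$, and it stores a separating set $\textnormal{Sep}(O_i,O_j)$ for each non-adjacent pair. For Step~\ref{alg_vstruc} I would run the standard inducing-path argument: for an unshielded triple $\langle O_i,O_j,O_k\rangle$ with $O_j\notin\textnormal{Sep}(O_i,O_k)$, assuming $O_j\in\textnormal{Anc}(O_i\cup\bm{S})$ lets one splice the inducing path between $O_j$ and $O_k$ onto a directed path witnessing the ancestry to obtain a path active given $\textnormal{Sep}(O_i,O_k)\cup\bm{S}$, contradicting d-separation; hence $O_j\notin\textnormal{Anc}(O_i\cup\bm{S})$ and, symmetrically, $O_j\notin\textnormal{Anc}(O_k\cup\bm{S})$, so the orientation $O_i*\!\!\to O_j\leftarrow\!\!* O_k$ agrees with the MAAG. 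This uses only d-separation, so acyclicity plays no role.

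The core of the proof is then a single loop invariant: after each of Steps~\ref{alg_addV}--\ref{alg_OR}, every non-circle endpoint of $\widehat{\mathbb{G}}$ coincides with the corresponding endpoint in the MAAG of $\mathbb{G}$. I would verify it step by step with a lemma for each new orientation. For Step~\ref{alg_addV} the lemma would certify that the stated premise --- $O_k$ d-connected to both $O_i$ and $O_j$ given a set $\textnormal{Sep}(O_i,O_j)$ it does not belong to --- forces $O_k\notin\textnormal{Anc}(O_i\cup\bm{S})$, by concatenating the two witnessing active paths at $O_k$ into a path active given $\textnormal{Sep}(O_i,O_j)\cup\bm{S}$ that would contradict $O_i\ci_d O_j\mid\textnormal{Sep}(O_i,O_j)\cup\bm{S}$ were the relation ancestral. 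Steps~\ref{alg_addD} and~\ref{alg_step5} need two ingredients: first, that Algorithm~\ref{alg_CCI_E} correctly enumerates the supplementary --- necessarily non-minimal --- separating sets $\textnormal{SupSep}(O_i,O_j,O_k)$; and second, a characterization of the ancestral configuration under which a collider $O_l$ on a structure $O_i*\!\!\to O_l\leftarrow\!\!* O_k$ can belong to a d-separating set of the non-adjacent pair $O_i,O_k$, from which inclusion of $O_l$ in $\textnormal{SupSep}(O_i,O_j,O_k)$ certifies $O_l\in\textnormal{Anc}(O_j\cup\bm{S})$ (the tail the rule adds at $O_l$) and exclusion of $O_l$ from the minimal $\textnormal{Sep}(O_i,O_k)$ certifies $O_l\notin\textnormal{Anc}(O_j\cup\bm{S})$ (the arrowhead). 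Step~\ref{alg_step6} rests on the fact that if $O_i\ci_d O_k\mid\bm{W}\cup\bm{S}$ with $O_j\in\bm{W}$ but $O_i\not\ci_d O_k\mid O_l\cup\bm{W}\cup\bm{S}$, then $O_l$ is a descendant of a collider $C$ on a path $\pi$ between $O_i$ and $O_k$ that becomes active upon conditioning on $O_l$; if $O_l\in\textnormal{Anc}(O_j\cup\bm{S})$ then so is $C$, so $\pi$ would already be active given $\bm{W}\cup\bm{S}$ --- a contradiction --- whence the arrowhead at $O_l$ on $O_l\circline\!\!* O_j$. Finally, for rules~1--7 in Step~\ref{alg_OR} I would prove soundness of each rule individually by the appropriate local or long-range d-separation argument, mirroring the corresponding soundness proofs of FCI's and CCD's orientation rules, and checking that each rule consumes only endpoint and separating-set data already certified by the invariant.

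Putting these together, Step~\ref{alg_skeleton} yields the MAAG skeleton of $\mathbb{G}$ and the invariant guarantees that every resolved endpoint is correct, so $\widehat{\mathbb{G}}$ is the MAAG of $\mathbb{G}$ with some endpoints left as circles --- a partially oriented MAAG of $\mathbb{G}$. I expect the main obstacle to be the cyclic-specific Steps~\ref{alg_addD}--\ref{alg_step6}: in the acyclic case minimality of a separating set already precludes colliders, and once cycles destroy that, one must supply the new combinatorial characterization of exactly when, and which, colliders occur in a d-separating set of a cyclic graph, together with a proof that Algorithm~\ref{alg_CCI_E} recovers precisely the supplementary sets that characterization relies on. Establishing that pair of facts is the crux; re-verifying soundness of orientation rules~1--7 under cycles is the most bookkeeping-heavy remainder.
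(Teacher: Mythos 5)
Your proposal is correct and follows essentially the same route as the paper: the paper's proof of Theorem \ref{thm_main} is likewise an assembly argument that identifies the CI oracle with d-separation via faithfulness (with Theorem \ref{thm_linearSEMIE} supplying the Markov property in the cyclic case) and then certifies Steps \ref{alg_skeleton}--\ref{alg_OR} one at a time by citing Lemmas \ref{lem_IP_dsep}, \ref{lem_dsep}, \ref{lem_pdsep}, \ref{lem_VS1}, \ref{lem_sup_anc2}, \ref{lem_sup_anc1} and the orientation-rule lemmas. The step-wise sub-arguments you sketch (splicing active paths for the v-structure and Step \ref{alg_addV} orientations, ancestor-augmentation for Step \ref{alg_step6}, and the collider-in-a-separating-set characterization for Steps \ref{alg_addD}--\ref{alg_step5}) match the content of those lemmas.
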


\section{Algorithm Trace} \label{sec_trace}

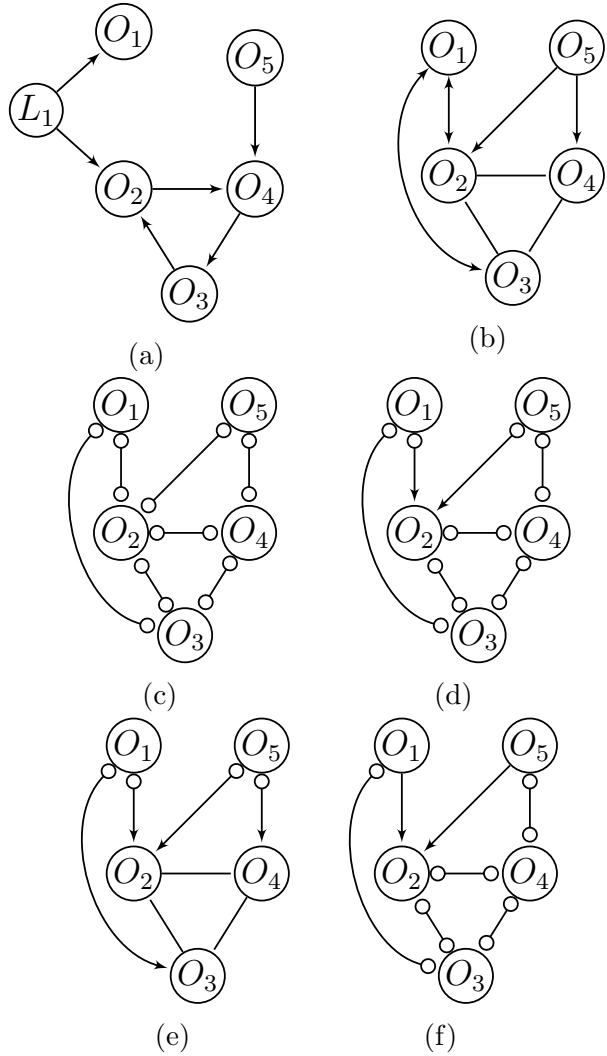
\begin{figure}
\centering
\begin{subfigure}{.29\linewidth}
\centering
\resizebox{\linewidth}{!}{
\begin{tikzpicture}[scale=1.0, shorten >=1pt,auto,node distance=2.8cm, semithick]
                    
\tikzset{vertex/.style = {shape=circle,draw,inner sep=0.4pt}}
 
\node[vertex] (1) at  (0,1) {$O_1$};
\node[vertex] (5) at  (1.5,0.7) {$O_5$};
\node[vertex] (2) at  (0,-0.8) {$O_2$};
\node[vertex] (3) at  (0.75,-2.0) {$O_3$};
\node[vertex] (4) at  (1.5,-0.8) {$O_4$};
\node[vertex] (6) at  (-1,0.1) {$L_1$};

\draw[->,> = latex'] (6) to (1);
\draw[->,> = latex'] (6) to (2);
\draw[->,> = latex'] (3) to (2);
\draw[->,> = latex'] (4) to (3);
\draw[->,> = latex'] (2) to (4);
\draw[->,> = latex'] (5) to (4);
\end{tikzpicture}
}
\caption{}  \label{fig_trace_a}
\end{subfigure}\hspace{7mm}
\begin{subfigure}{.25\linewidth}
\centering
\resizebox{\linewidth}{!}{
\begin{tikzpicture}[scale=1.0, shorten >=1pt,auto,node distance=2.8cm, semithick]
                    
\tikzset{vertex/.style = {shape=circle,draw,inner sep=0.4pt}}
 
\node[vertex] (1) at  (0,0.7) {$O_1$};
\node[vertex] (5) at  (1.5,0.7) {$O_5$};
\node[vertex] (2) at  (0,-0.8) {$O_2$};
\node[vertex] (3) at  (0.75,-2.0) {$O_3$};
\node[vertex] (4) at  (1.5,-0.8) {$O_4$};

\draw[<->,> = latex'] (1) to (2);
\draw[-] (3) to (2);
\draw[-] (4) to (3);
\draw[-] (2) to (4);
\draw[->,> = latex'] (5) to (4);
\draw[->,> = latex'] (5) to (2);
\draw[<->,> = latex', bend right = 60] (1) to (3);
\end{tikzpicture}
}
\caption{}  \label{fig_trace_b}
\end{subfigure}

\begin{subfigure}{.25\linewidth}
\centering
\resizebox{\linewidth}{!}{
\begin{tikzpicture}[scale=1.0, shorten >=1pt,auto,node distance=2.8cm, semithick]
                    
\tikzset{vertex/.style = {shape=circle,draw,inner sep=0.4pt}}
 
\node[vertex] (1) at  (0,0.7) {$O_1$};
\node[vertex] (5) at  (1.5,0.7) {$O_5$};
\node[vertex] (2) at  (0,-0.8) {$O_2$};
\node[vertex] (3) at  (0.75,-2.0) {$O_3$};
\node[vertex] (4) at  (1.5,-0.8) {$O_4$};

\draw[o-o] (1) to (2);
\draw[o-o, bend right=60] (1) to (3);
\draw[o-o] (5) to (4);
\draw[o-o] (5) to (2);
\draw[o-o] (3) to (2);
\draw[o-o] (4) to (3);
\draw[o-o] (2) to (4);
\end{tikzpicture}
}
\caption{} \label{fig_trace_c}
\end{subfigure} \hspace{2mm}
\begin{subfigure}{.25\linewidth}
\centering
\resizebox{\linewidth}{!}{
\begin{tikzpicture}[scale=1.0, shorten >=1pt,auto,node distance=2.8cm, semithick]
                    
\tikzset{vertex/.style = {shape=circle,draw,inner sep=0.4pt}}
 
\node[vertex] (1) at  (0,0.7) {$O_1$};
\node[vertex] (5) at  (1.5,0.7) {$O_5$};
\node[vertex] (2) at  (0,-0.8) {$O_2$};
\node[vertex] (3) at  (0.75,-2.0) {$O_3$};
\node[vertex] (4) at  (1.5,-0.8) {$O_4$};

\draw[o-o, bend right=60] (1) to (3);
\draw[o-o] (5) to (4);
\draw[o->,> = latex'] (5) to (2);
\draw[o->,> = latex'] (1) to (2);
\draw[o-o] (3) to (2);
\draw[o-o] (4) to (3);
\draw[o-o] (2) to (4);
\end{tikzpicture}
}
\caption{}  \label{fig_trace_d}
\end{subfigure}

\begin{subfigure}{.25\linewidth}
\centering
\resizebox{\linewidth}{!}{
\begin{tikzpicture}[scale=1.0, shorten >=1pt,auto,node distance=2.8cm, semithick]
                    
\tikzset{vertex/.style = {shape=circle,draw,inner sep=0.4pt}}
 
\node[vertex] (1) at  (0,0.7) {$O_1$};
\node[vertex] (5) at  (1.5,0.7) {$O_5$};
\node[vertex] (2) at  (0,-0.8) {$O_2$};
\node[vertex] (3) at  (0.75,-2.0) {$O_3$};
\node[vertex] (4) at  (1.5,-0.8) {$O_4$};

\draw[o->, > = latex', bend right=60] (1) to (3);
\draw[o->,> = latex'] (5) to (4);
\draw[o->,> = latex'] (5) to (2);
\draw[o->,> = latex'] (1) to (2);
\draw[-] (3) to (2);
\draw[-] (4) to (3);
\draw[-] (2) to (4);
\end{tikzpicture}
}
\caption{}  \label{fig_trace_f}
\end{subfigure}
\begin{subfigure}{.25\linewidth}
\centering
\resizebox{\linewidth}{!}{
\begin{tikzpicture}[scale=1.0, shorten >=1pt,auto,node distance=2.8cm, semithick]
                    
\tikzset{vertex/.style = {shape=circle,draw,inner sep=0.4pt}}
 
\node[vertex] (1) at  (0,0.7) {$O_1$};
\node[vertex] (5) at  (1.5,0.7) {$O_5$};
\node[vertex] (2) at  (0,-0.8) {$O_2$};
\node[vertex] (3) at  (0.75,-2.0) {$O_3$};
\node[vertex] (4) at  (1.5,-0.8) {$O_4$};

\draw[->,> = latex'] (1) to (2);
\draw[o-o, bend right=60] (1) to (3);
\draw[o-o] (5) to (4);
\draw[->,> = latex'] (5) to (2);
\draw[o-o] (3) to (2);
\draw[o-o] (4) to (3);
\draw[o-o] (2) to (4);
\end{tikzpicture}
}
\caption{}  \label{fig_trace_g}
\end{subfigure}
\caption{A sample run of CCI. The ground truth directed graph is illustrated in (a) with corresponding MAAG in (b). Step \ref{alg_skeleton} of CCI outputs (c), Step \ref{alg_vstruc} (d), and Step \ref{alg_OR} the final output (e). In contrast, CCD outputs (f). }
\end{figure}

We now illustrate a sample run of the CCI algorithm with a CI oracle. Consider the directed graph in Figure \ref{fig_trace_a} containing just one latent variable $L_1$ with MAAG in Figure \ref{fig_trace_b}. CCI proceeds as follows:
\begin{enumerate}[label=Step \arabic*:, leftmargin=1.5cm]
\item Discovers the skeleton shown in Figure \ref{fig_trace_c}.
\item Adds two arrowheads onto $O_2$ yielding Figure \ref{fig_trace_d} because $O_1 \ci_d O_5$.
\item Does not orient any endpoints.
\item Discovers the additional d-separating set $O_1 \ci_d O_5 | \{O_2,O_3\}$.
\item Does not orient any endpoints.
\item Does not orient any endpoints.
\item Rule 1 orients $O_2 \circline \!\!* O_4$ as $O_2 - \!\! * O_4$ and $O_2 \circline \!\!* O_3$ as $O_2 - \!\! * O_3$. Then Rule 4 orients $O_1 \circline \!\! * O_3$ as $O_1 \leftarrow \!\! * O_3$ and $O_4 \circline \!\! * O_5$ as $O_4 \leftarrow \!\! * O_5$. Next, Rule 1 again fires twice to orient $O_3 \circlinecirc O_4$ as $O_3 - O_4$. Finally Rule 3 also fires twice to orient $O_2 * \!\! \linecirc O_4$ and $O_2 * \!\! \linecirc O_3$ as $O_2 * \!\! - O_4$ and $O_2 * \!\! - O_3$, respectively. The orientation rules in Step \ref{alg_OR} therefore yield Figure \ref{fig_trace_f}.
\end{enumerate}

Now we would expect CCD to output a pretty good partially oriented MAAG, given that the directed graph contains only one latent variable and no selection variables. However, CCD outputs the graph in Figure \ref{fig_trace_f}. The output contains one error ($O_1$ is not an ancestor of $O_2$) and eight un-oriented endpoints which were oriented by CCI.

\section{Algorithm Details} \label{sec_details}

We present the details of CCI. We claim that statements made herein hold for both cyclic and acyclic directed graphs, unless indicated otherwise. Most of the proofs are located in Section \ref{sec_proofs}.

\subsection{Step 1: Skeleton Discovery}

We first discover the skeleton of an MAAG by consulting a CI oracle. The following result demonstrates that we can discover the skeleton of an MAAG, if we can search over all possible separating sets:
\begin{lemma} \label{lem_IP_dsep}
There exists an inducing path between $O_i$ and $O_j$ if and only if $O_i$ and $O_j$ are d-connected given $\bm{W} \cup \bm{S}$ for all possible subsets $\bm{W} \subseteq \bm{O} \setminus \{O_i, O_j \}$.
\end{lemma}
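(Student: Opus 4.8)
The plan is to prove the two directions of the biconditional separately, relying on the characterization of inducing paths and d-connection that already appears in the excerpt. The key technical input is the observation (attributed to Spirtes in the acyclic setting, and which must be re-derived or re-used in the cyclic setting) that an inducing path between $O_i$ and $O_j$ is exactly a path that ``cannot be blocked'': every collider on it is an ancestor of $\{O_i, O_j\} \cup \bm{S}$, so conditioning on any $\bm{W} \cup \bm{S}$ either activates the collider (because some ancestor of it, possibly itself, lies in $\bm{W} \cup \bm{S}$, via the selection set $\bm{S}$ or via $\bm{W}$) or — if the collider is an ancestor of $O_i$ or $O_j$ specifically — one can reroute; and every non-collider is latent, hence never in $\bm{W}$ and never in $\bm{S}$, so it is never blocked. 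Since this reasoning about d-connection along a single path is purely graph-theoretic and does not invoke acyclicity, it transfers verbatim to directed cyclic graphs; I would state this explicitly as the one place where the cyclic case needs a remark rather than a citation.

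For the forward direction ($\Rightarrow$), I would take an inducing path $\Pi$ between $O_i$ and $O_j$ and fix an arbitrary $\bm{W} \subseteq \bm{O} \setminus \{O_i,O_j\}$. I then argue $\Pi$ (or a subpath/modification of it) is active given $\bm{W} \cup \bm{S}$: non-colliders on $\Pi$ are in $\bm{L}$ by Definition \ref{def_IP}, hence disjoint from $\bm{O} \supseteq \bm{W}$ and from $\bm{S}$, so no non-collider is in the conditioning set; colliders are ancestors of $\{O_i,O_j\} \cup \bm{S}$, so each collider either has a descendant in $\bm{S}$ (hence in the conditioning set) or is an ancestor of $O_i$ or $O_j$. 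The colliders of the latter type require the standard rerouting argument: if a collider $C$ on $\Pi$ is an ancestor of, say, $O_i$ but has no descendant in $\bm{W} \cup \bm{S}$, follow the directed path from $C$ toward $O_i$ until it first meets $\Pi$ again or reaches $O_i$, and splice — the resulting path is shorter in the relevant sense and still has the inducing-path property, so an induction on path length closes this case. The conclusion is that $O_i$ and $O_j$ are d-connected given $\bm{W} \cup \bm{S}$, and since $\bm{W}$ was arbitrary, for all such $\bm{W}$.

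For the reverse direction ($\Leftarrow$), I would argue the contrapositive: if there is no inducing path between $O_i$ and $O_j$, I must exhibit a particular $\bm{W}$ with $O_i \ci_d O_j \mid \bm{W} \cup \bm{S}$. The natural candidate is $\bm{W} = \mathrm{Anc}(\{O_i,O_j\} \cup \bm{S}) \cap (\bm{O} \setminus \{O_i,O_j\})$, i.e. all observable ancestors of the endpoints and of the selection set. I then show every path between $O_i$ and $O_j$ in $\mathbb{G}$ is blocked by this set: a path that is not blocked would, by definition of active path, have all its non-colliders outside $\bm{W} \cup \bm{S}$ and all its colliders with a descendant in $\bm{W} \cup \bm{S}$; one checks that such a path is an inducing path — the colliders are ancestors of $\{O_i,O_j\} \cup \bm{S}$ (because having a descendant in $\bm{W} \cup \bm{S} \subseteq \mathrm{Anc}(\{O_i,O_j\}\cup\bm{S})$ forces the collider itself into that ancestor set) and the non-colliders, being observable would-be-ancestors excluded only if latent, must in fact be latent (an observable non-collider on an unblocked path that is an ancestor of an endpoint would lie in $\bm{W}$, a contradiction; and if it is not such an ancestor one shows the path still reduces to an inducing one) — contradicting the assumed non-existence.

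The main obstacle I anticipate is the rerouting/bookkeeping in both directions: in the forward direction, making the induction on ``path length'' or on the number of problematic colliders precise so that splicing a directed subpath onto $\Pi$ genuinely yields a new inducing path (one must confirm the new colliders created at the splice point and the surviving old colliders all remain ancestors of $\{O_i,O_j\} \cup \bm{S}$, and that no new observable non-collider is introduced — this is exactly where the choice of $\bm{W}$ in the converse and the ancestor condition in Definition \ref{def_IP} must be dovetailed carefully). I would handle this by an explicit minimal-counterexample argument (shortest active path, or inducing path minimizing the number of colliders needing rerouting) rather than a direct construction. Everything else — the disjointness of $\bm{L}$ from $\bm{W}$ and $\bm{S}$, the treatment of $\bm{S}$ via ``ancestor of $\bm{S}$'' — is routine once the path-surgery lemma is in place.
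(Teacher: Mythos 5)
Your proposal is correct and follows essentially the same route as the paper: the forward direction reroutes the colliders on the inducing path that lack descendants in $\bm{W} \cup \bm{S}$ along directed paths to the endpoints, and the reverse direction (contrapositive) uses exactly the separating set $\textnormal{Anc}(\{O_i,O_j\} \cup \bm{S}) \cap \bm{O}$, showing any surviving active path would be an inducing path. The only difference is bookkeeping: where you propose an induction or minimal-counterexample argument to validate the splicing (and note that the spliced path need only be active, not itself inducing), the paper picks the extremal problematic colliders and discharges the gluing in one step via the path-concatenation lemma of Colombo et al.\ (Lemma \ref{lem_d_conn}), and it cites Lemma 8 of \citep{Spirtes99} for the converse step you re-derive.
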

\noindent Searching over all separating sets is however inefficient. We therefore consider the following sets instead:
\begin{definition} (D-SEP Set)
We say that $O_k \in \textnormal{D-SEP}(O_i,O_j)$ in a directed graph $\mathbb{G}$ if and only if there exists a sequence of observables $\Pi=\langle O_i, \dots , O_k\rangle$ in $\textnormal{Anc}(\{O_i,O_j\}\cup \bm{S})$ such that, for any subpath $\langle O_{h-1}, O_h, O_{h+1} \rangle$ on $\Pi$, we have an inducing path between $O_{h-1}$ and $O_h$ that is into $O_h$ as well as an inducing path between $O_{h+1}$ and $O_h$ that is into $O_h$.
\end{definition}
\noindent Notice that $\textnormal{D-SEP}(O_i,O_j)$ and $\textnormal{D-SEP}(O_j,O_i)$ may not be equivalent. The D-SEP set is important, because we can use it to discover inducing paths without searching over all possible separating sets:
\begin{lemma} \label{lem_dsep}
If there does not exist an inducing path between $O_i$ and $O_j$, then $O_i$ and $O_j$ are d-separated given $\textnormal{D-SEP}(O_i,O_j) \cup \bm{S}$. Likewise, $O_i$ and $O_j$ are d-separated given $\textnormal{D-SEP}(O_j,O_i) \cup \bm{S}$.
\end{lemma}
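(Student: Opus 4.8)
The plan is to argue by contradiction, following the template of the classical D\nobreakdash-SEP soundness proof for FCI but being careful that the argument uses nothing beyond the composition of directed paths, so that it survives the presence of cycles. Suppose there is no inducing path between $O_i$ and $O_j$, yet $O_i \not\ci_d O_j \mid \textnormal{D-SEP}(O_i,O_j) \cup \bm{S}$. Then there is an active path $\Pi$ between $O_i$ and $O_j$ given $\bm{Z} := \textnormal{D-SEP}(O_i,O_j) \cup \bm{S}$ in $\mathbb{G}$; among all such paths I would fix one that is minimal, first minimizing the number of observable vertices it contains and then its length.

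Before exploiting minimality I would record one standard fact about active paths in a directed graph: every vertex lying on an active path between $A$ and $B$ given $\bm{C}$ is an ancestor of $\{A,B\} \cup \bm{C}$. Indeed, a collider on the path has a descendant in $\bm{C}$ by activity, and for a non-collider one follows an out-edge of the path until one necessarily reaches either an endpoint or a collider; since directed paths compose, this works whether or not $\mathbb{G}$ has cycles. Applying this to $\Pi$ and noting that $\textnormal{D-SEP}(O_i,O_j) \subseteq \textnormal{Anc}(\{O_i,O_j\} \cup \bm{S})$ — the sequence witnessing membership in D\nobreakdash-SEP lies in $\textnormal{Anc}(\{O_i,O_j\} \cup \bm{S})$ — we obtain $\textnormal{Anc}(\bm{Z}) \subseteq \textnormal{Anc}(\{O_i,O_j\} \cup \bm{S})$, hence every vertex on $\Pi$, in particular every collider on $\Pi$, is an ancestor of $\{O_i,O_j\} \cup \bm{S}$.

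The core of the proof is then the claim that \emph{every non-endpoint observable vertex on the minimal path $\Pi$ is a collider on $\Pi$}. Granting this, the non-endpoint non-colliders of $\Pi$ lie neither in $\bm{S}$ nor in $\textnormal{D-SEP}(O_i,O_j)$, since a non-collider in the conditioning set would block $\Pi$, and by the claim they are not observable either, so they all lie in $\bm{L}$; together with the previous paragraph this says exactly that $\Pi$ satisfies Definition \ref{def_IP}, i.e.\ $\Pi$ is an inducing path between $O_i$ and $O_j$, contradicting the hypothesis. To prove the claim, suppose some non-endpoint observable $V$ on $\Pi$ is a non-collider; then $V \notin \bm{Z}$, so $V \notin \textnormal{D-SEP}(O_i,O_j)$. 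Reading off the observable vertices $O_i = V_0, V_1, \dots, V_r = V$ along the subpath of $\Pi$ from $O_i$ to $V$, each block $\Pi_s$ of $\Pi$ between $V_{s-1}$ and $V_s$ has only latent interior vertices; using minimality one shows that each interior $V_s$ is a collider on $\Pi$ and that each $\Pi_s$ is an inducing path between $V_{s-1}$ and $V_s$ that is into the appropriate endpoints, which is exactly what is needed for the sequence $\langle V_0, \dots, V_r \rangle \subseteq \textnormal{Anc}(\{O_i,O_j\} \cup \bm{S})$ to witness $V = V_r \in \textnormal{D-SEP}(O_i,O_j)$ — the desired contradiction. The assertion for $\textnormal{D-SEP}(O_j,O_i)$ follows by interchanging $O_i$ and $O_j$ throughout.

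I expect the main obstacle to be the step inside the claim that asserts each block $\Pi_s$ is an inducing path between $V_{s-1}$ and $V_s$. The delicate point is that Definition \ref{def_IP} demands that every collider on $\Pi_s$ be an ancestor of $\{V_{s-1},V_s\} \cup \bm{S}$, whereas the active-path fact of the second paragraph only delivers ancestry of the global endpoints $\{O_i,O_j\} \cup \bm{S}$; closing this gap appears to require either a rerouting argument that contradicts the minimality of $\Pi$ whenever a collider on some $\Pi_s$ fails to be an ancestor of its local endpoints, or a suitably strengthened induction (for instance on the length of $\Pi$, carrying an inductive hypothesis about its sub-blocks). A minor but genuine extra care relative to the acyclic setting is that $\textnormal{Anc}(\cdot)$ need no longer be antisymmetric once $\mathbb{G}$ has cycles, so any place where the classical proof tacitly uses acyclicity of the ancestor relation has to be re-checked; the outline above is deliberately arranged to rely only on transitivity of ancestry.
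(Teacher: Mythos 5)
Your overall strategy matches the paper's: assume a d-connecting path $\Pi$ exists given $\textnormal{D-SEP}(O_i,O_j)\cup\bm{S}$, show that if every interior observable on $\Pi$ is a collider then $\Pi$ is itself an inducing path (contradiction), and otherwise show that an observable non-collider on $\Pi$ must lie in $\textnormal{D-SEP}(O_i,O_j)$ and hence block $\Pi$ (contradiction). But the step you yourself flag as ``the main obstacle'' is precisely the crux of the lemma, and you do not close it. The sequence of observables $V_0,\dots,V_r$ read off along the subpath of $\Pi$ is in general \emph{not} a valid witness for $V\in\textnormal{D-SEP}(O_i,O_j)$: a latent collider interior to a block $\Pi_s$ is only guaranteed, by the activity of $\Pi$, to have a descendant in $\textnormal{D-SEP}(O_i,O_j)\cup\bm{S}$, and when that descendant lies in $\textnormal{D-SEP}(O_i,O_j)$ but the collider is not an ancestor of $\bm{S}$, there is no reason for it to be an ancestor of the local endpoints $\{V_{s-1},V_s\}\cup\bm{S}$, so $\Pi_s$ need not be an inducing path. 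Neither of your two suggested repairs is carried out, and minimality of $\Pi$ does not obviously help, because the correct fix does not involve modifying $\Pi$ at all.

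What the paper does instead is change the witnessing sequence rather than the path: it takes $O_k$ to be the observable non-collider \emph{closest to} $O_i$ (so that every interior observable of $\Pi_{O_iO_k}$ is a collider), isolates the colliders $C_1,\dots,C_m$ on $\Pi_{O_iO_k}$ that are ancestors of $\textnormal{D-SEP}(O_i,O_j)$ but \emph{not} of $\bm{S}$, follows a directed path from each $C_n$ toward its D-SEP descendant until the first observable $F_n$ is reached, and uses $\langle O_i, F_1,\dots,F_m,O_k\rangle$ as the D-SEP sequence. The concatenation of $C_n\leadsto F_n$ (reversed), $\Pi_{C_nC_{n+1}}$ and $C_{n+1}\leadsto F_{n+1}$ is then an inducing path into $F_n$ and into $F_{n+1}$, because every collider remaining between consecutive $C$'s is an ancestor of $\bm{S}$, which suffices for \emph{any} choice of local endpoints; the non-colliders are all latent by the choice of $O_k$ and of the $F_n$ as first observables. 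Note that the $F_n$ generally do not lie on $\Pi$ at all, which is why an argument confined to the vertices of $\Pi$ cannot succeed. Until you supply this (or an equivalent) construction, the proof is incomplete at its central step.
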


The D-SEP sets are however not computable. We therefore consider computable possible d-separating sets, which are supersets of the D-SEP sets:
\begin{definition} (Possible D-Separating Set)
We say that $O_k \in \textnormal{PD-SEP}(O_i)$ in any partial oriented mixed graph $\widetilde{\mathbb{G}}$ if and only if there exists a path $\Pi$ between $O_i$ and $O_k$ in $\widetilde{\mathbb{G}}$ such that, for every subpath $\langle O_{h-1}, O_h, O_{h+1} \rangle$ on $\Pi$, either $O_h$ is a v-structure or $\langle O_{h-1}, O_h, O_{h+1} \rangle$ forms a triangle. \label{def_pdsep}
\end{definition}
\noindent The following lemma shows that we can utilize PD-SEP sets in replace of D-SEP sets:
\begin{lemma} \label{lem_pdsep}
If there does not exist an inducing path between $O_i$ and $O_j$, then $O_i$ and $O_j$ are d-separated given $\bm{W} \cup \bm{S}$ with $\bm{W} \subseteq \textnormal{PD-SEP}(O_i)$ in the MAAG $\mathbb{G}^{\prime}$. Likewise, $O_i$ and $O_j$ are d-separated given some $\bm{W} \cup \bm{S}$ with $\bm{W} \subseteq \textnormal{PD-SEP}(O_j)$ in $\mathbb{G}^{\prime}$.
\end{lemma}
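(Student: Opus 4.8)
The plan is to reduce this lemma to a purely graphical set inclusion, exactly as one does for FCI in the acyclic case. Lemma~\ref{lem_dsep} already does the distributional work: when there is no inducing path between $O_i$ and $O_j$, the set $\textnormal{D-SEP}(O_i,O_j) \cup \bm{S}$ d-separates $O_i$ and $O_j$ in $\mathbb{G}$, and symmetrically for $\textnormal{D-SEP}(O_j,O_i) \cup \bm{S}$. So it suffices to take $\bm{W} = \textnormal{D-SEP}(O_i,O_j)$ (a set of observables not containing $O_i$ or $O_j$) and to show the structural inclusion $\textnormal{D-SEP}(O_i,O_j) \subseteq \textnormal{PD-SEP}(O_i)$ in the MAAG $\mathbb{G}^{\prime}$, and likewise $\textnormal{D-SEP}(O_j,O_i) \subseteq \textnormal{PD-SEP}(O_j)$. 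The whole lemma then follows, since $\bm{W} \subseteq \textnormal{PD-SEP}(O_i)$ and $O_i \ci_d O_j \mid \bm{W} \cup \bm{S}$.

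To establish the inclusion, I would fix $O_k \in \textnormal{D-SEP}(O_i,O_j)$ with a witnessing sequence $\Pi = \langle O_i, \dots, O_k \rangle$ as in the definition of D-SEP; after contracting any loop in $\Pi$ (which preserves, for each surviving internal vertex, the existence of inducing paths into it from its two neighbours) I may assume $\Pi$ is a genuine path. First I would promote $\Pi$ to a path of $\mathbb{G}^{\prime}$: for each consecutive pair $\langle O_{h-1}, O_h \rangle$ the definition of D-SEP supplies an inducing path between them in $\mathbb{G}$, so by maximality of the MAAG the pair is adjacent in $\mathbb{G}^{\prime}$. Then I would walk along $\Pi$ and check that every internal triple $\langle O_{h-1}, O_h, O_{h+1} \rangle$ satisfies one of the two alternatives of Definition~\ref{def_pdsep}. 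If $O_{h-1}$ and $O_{h+1}$ are adjacent in $\mathbb{G}^{\prime}$, the triple is a triangle, and we are done with it. If not, the definition of D-SEP gives an inducing path between $O_{h-1}$ and $O_h$ that is into $O_h$ and one between $O_{h+1}$ and $O_h$ that is into $O_h$; converting ``inducing path into $O_h$'' into ``$O_h \notin \textnormal{Anc}(O_{h-1} \cup \bm{S})$ in $\mathbb{G}$'' (and similarly for $O_{h+1}$), the MAAG edge-orientation convention forces arrowheads at $O_h$ on both edges of the triple, so $O_h$ is a v-structure in $\mathbb{G}^{\prime}$. Either way the triple is admissible, so $\Pi$ certifies $O_k \in \textnormal{PD-SEP}(O_i)$.

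The main obstacle is the conversion step: showing that an inducing path that is \emph{into} $O_h$ certifies that $O_h$ is not an ancestor, in $\mathbb{G}$, of the other endpoint together with $\bm{S}$ (equivalently, that the corresponding MAAG edge has an arrowhead at $O_h$). This is a standard property of inducing paths in MAGs of DAGs, but here ancestry is governed by a possibly cyclic $\mathbb{G}$, so I would have to re-derive it, verifying that a directed path leading from $O_h$ back toward $O_{h-1}$ or toward $\bm{S}$ cannot coexist with an inducing path oriented into $O_h$ even when $\mathbb{G}$ contains feedback loops. A secondary, routine point is the loop-contraction that turns the D-SEP witnessing sequence into an honest path while keeping every surviving internal vertex a collider of the required kind. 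The remaining ingredients — Lemma~\ref{lem_dsep}, the maximality of the MAAG for adjacencies, and the case split of Definition~\ref{def_pdsep} — are bookkeeping, and the argument for $\textnormal{D-SEP}(O_j,O_i) \subseteq \textnormal{PD-SEP}(O_j)$ is identical with the roles of $O_i$ and $O_j$ swapped.
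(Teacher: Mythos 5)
Your overall reduction matches the paper's: invoke Lemma~\ref{lem_dsep} to obtain d-separation by $\textnormal{D-SEP}(O_i,O_j)\cup\bm{S}$, then prove the purely graphical inclusion $\textnormal{D-SEP}(O_i,O_j)\subseteq\textnormal{PD-SEP}(O_i)$ by lifting the witnessing sequence to a path of $\mathbb{G}^{\prime}$ and checking that each internal triple is a triangle or a v-structure. The gap is precisely the step you flag as the main obstacle: the claim that an inducing path between $O_{h-1}$ and $O_h$ that is \emph{into} $O_h$ forces $O_h\notin\textnormal{Anc}(O_{h-1}\cup\bm{S})$ and hence an arrowhead at $O_h$ in $\mathbb{G}^{\prime}$. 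That claim is false, already for DAGs: take $O_h\rightarrow O_{h-1}$ together with $O_{h-1}\leftarrow L\rightarrow O_h$ where $L\in\bm{L}$. The path through $L$ is an inducing path into $O_h$, yet $O_h\in\textnormal{Anc}(O_{h-1})$, so the MAAG edge carries a \emph{tail} at $O_h$. Edge marks in a MAAG are determined by ancestry alone; the existence of one inducing path into a vertex says nothing about that vertex's ancestral status. So this ``conversion step'' cannot be re-derived in the cyclic setting or any other—it has to be replaced.

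The paper's proof runs the implication in the opposite direction by splitting on ancestry rather than on adjacency. If $O_h\in\textnormal{Anc}(\{O_{h-1},O_{h+1}\}\cup\bm{S})$, then because the two inducing paths supplied by the D-SEP definition are both into $O_h$, their concatenation has $O_h$ as a collider that is an ancestor of $\{O_{h-1},O_{h+1}\}\cup\bm{S}$, while every other collider and non-collider already satisfies the inducing-path conditions; hence $O_{h-1}$ and $O_{h+1}$ are adjacent in $\mathbb{G}^{\prime}$ and the triple is a triangle. If instead $O_h\notin\textnormal{Anc}(\{O_{h-1},O_{h+1}\}\cup\bm{S})$, the MAAG edge interpretation puts arrowheads at $O_h$ on both edges, so the triple is a v-structure when unshielded (and a triangle otherwise). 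Either way the triple is admissible for $\textnormal{PD-SEP}(O_i)$. This is exactly where the ``into $O_h$'' clause of the D-SEP definition earns its keep—not as a certificate of non-ancestry, but as the reason $O_h$ is a collider on the concatenated path, which is what makes that path inducing in the ancestral case. The remainder of your proposal (reducing loops in the witnessing sequence, maximality of $\mathbb{G}^{\prime}$ giving adjacency of consecutive vertices, and the symmetric argument for $\textnormal{D-SEP}(O_j,O_i)\subseteq\textnormal{PD-SEP}(O_j)$) is fine.
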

\noindent Recall that a similar lemma was also proven in the acyclic case \citep{Spirtes00}. We conclude that the procedure for discovering the skeleton of an MAAG is equivalent to that of an MAG in the acyclic case.

The justification of Step \ref{alg_skeleton} in Algorithm \ref{alg_CCI} then follows by generalizing the above lemma to $\mathbb{G}^{\prime\prime}$, the partially oriented mixed graph discovered by PC's skeleton and FCI's v-structure discovery procedures utilized in Step \ref{alg_skeleton}:
\begin{lemma}\label{lem_pdsep2}
If an inducing path does not exist between $O_i$ and $O_j$ in $\mathbb{G}$, then $O_i$ and $O_j$ are d-separated given $\bm{W} \cup \bm{S}$ with $\bm{W} \subseteq \textnormal{PD-SEP}(O_i)$ in $\mathbb{G}^{\prime\prime}$. Likewise, $O_i$ and $O_j$ are d-separated given some $\bm{W} \cup \bm{S}$ with $\bm{W} \subseteq \textnormal{PD-SEP}(O_j)$ in $\mathbb{G}^{\prime\prime}$.
\end{lemma}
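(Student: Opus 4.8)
The statement is the cyclic-case analogue of the corresponding fact for FCI \citep{Spirtes00}, so the plan is to mimic that argument and check that each ingredient survives the presence of cycles by appealing to Theorem \ref{thm_linearSEMIE}. The reduction is: by Lemma \ref{lem_dsep}, whenever no inducing path joins $O_i$ and $O_j$ we already have $O_i \ci_d O_j \mid \textnormal{D-SEP}(O_i,O_j) \cup \bm{S}$ (and symmetrically for $\textnormal{D-SEP}(O_j,O_i)$). Hence it suffices to establish the purely graphical containment $\textnormal{D-SEP}(O_i,O_j) \subseteq \textnormal{PD-SEP}(O_i)$, where the latter is computed in $\mathbb{G}^{\prime\prime}$; then $\bm{W} = \textnormal{D-SEP}(O_i,O_j)$ is the required separating subset of $\textnormal{PD-SEP}(O_i)$, and similarly on the $O_j$ side.

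To prove the containment I would first record two structural facts about $\mathbb{G}^{\prime\prime}$, the graph produced by PC's skeleton search followed by FCI's v-structure orientation. First, every adjacency of the MAAG $\mathbb{G}^{\prime}$ persists in $\mathbb{G}^{\prime\prime}$: if an inducing path joins two observables, then by Lemma \ref{lem_IP_dsep} they are d-connected given every conditioning set together with $\bm{S}$, so PC's skeleton phase never removes that edge (and v-structure orientation removes none); conversely $\mathbb{G}^{\prime\prime}$ may carry extra edges, but those can only create extra triangles and hence can only enlarge $\textnormal{PD-SEP}$. Second, if $\langle A,B,C\rangle$ is an unshielded triple in $\mathbb{G}^{\prime\prime}$ for which $\mathbb{G}$ contains an inducing path between $A$ and $B$ that is into $B$ and an inducing path between $C$ and $B$ that is into $B$, then FCI orients $A * \!\! \rightarrow B \leftarrow \!\! * C$: "into $B$" forces $B \notin \textnormal{Anc}(A \cup \bm{S})$ and $B \notin \textnormal{Anc}(C \cup \bm{S})$, and concatenating the two inducing paths at $B$ exhibits, for every conditioning set $\bm{Z}$ containing $B$, an active path between $A$ and $C$ given $\bm{Z} \cup \bm{S}$; so $B$ lies in no separating set of $A$ and $C$, in particular $B \notin \textnormal{Sep}(A,C)$, which is exactly FCI's orientation condition.

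With these facts in hand, take $O_k \in \textnormal{D-SEP}(O_i,O_j)$ and let $\Pi = \langle O_i, \dots, O_k\rangle$ be a witnessing sequence as in the definition of the D-SEP set; passing to a minimal such sequence, I may assume $\Pi$ repeats no vertex, and the "inducing path into the middle vertex" property of each internal triple is preserved by this shortcutting. By the first fact, each consecutive pair on $\Pi$ is adjacent in $\mathbb{G}^{\prime\prime}$, so $\Pi$ is a genuine path there. For each internal triple $\langle O_{h-1}, O_h, O_{h+1}\rangle$: if $O_{h-1}$ and $O_{h+1}$ are adjacent in $\mathbb{G}^{\prime\prime}$ the triple is a triangle; otherwise it is unshielded, and since its two certifying inducing paths are into $O_h$, the second fact makes it a v-structure. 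Either way every internal triple of $\Pi$ is a triangle or a v-structure, which is precisely the requirement in Definition \ref{def_pdsep}, so $O_k \in \textnormal{PD-SEP}(O_i)$ in $\mathbb{G}^{\prime\prime}$. The $O_j$-side statement follows by the symmetric argument with $\textnormal{D-SEP}(O_j,O_i)$.

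The step I expect to be the main obstacle is the second structural fact: showing that an inducing path "into $B$" genuinely excludes $B$ from every separating set of $A$ and $C$ in the presence of cycles. In the acyclic case this is routine, but here I would need to verify carefully that the path obtained by splicing the two inducing paths at $B$ still has every collider other than $B$ an ancestor of $\{A,C\}\cup\bm{S}$ and every non-endpoint non-collider other than $B$ latent, and then that this path is d-connecting as soon as $B$ is conditioned on — all of which leans on d-separation continuing to characterize conditional independence for the equilibrium distribution, i.e. on Theorem \ref{thm_linearSEMIE}. A secondary technical point is justifying the passage to a repetition-free witnessing sequence for $\textnormal{D-SEP}$, and confirming that the additional adjacencies $\mathbb{G}^{\prime\prime}$ may carry relative to the MAAG never turn a needed v-structure on $\Pi$ into a bare unshielded triple — they can at worst shield it into a triangle, which is harmless for membership in $\textnormal{PD-SEP}$.
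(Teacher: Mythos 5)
Your overall architecture is fine and close to the paper's: reduce via Lemma \ref{lem_dsep} to the purely graphical containment $\textnormal{D-SEP}(O_i,O_j) \subseteq \textnormal{PD-SEP}(O_i)$ computed in $\mathbb{G}^{\prime\prime}$, note that every MAAG adjacency survives into $\mathbb{G}^{\prime\prime}$ so extra edges can only shield triples into triangles, and then handle the unshielded internal triples. (The paper factors this slightly differently, proving $\textnormal{D-SEP}\subseteq\textnormal{PD-SEP}$ in the MAAG $\mathbb{G}^\prime$ first and then a monotonicity statement $\textnormal{PD-SEP}_{\mathbb{G}^\prime}\subseteq\textnormal{PD-SEP}_{\mathbb{G}^{\prime\prime}}$, but that is cosmetic.)

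The gap is in your second ``structural fact.'' You justify the orientation of an unshielded triple $\langle A,B,C\rangle$ by claiming that splicing the two inducing paths at $B$ yields an active path given every conditioning set containing $B$, so that $B$ lies in \emph{no} separating set of $A$ and $C$. This is exactly the acyclic-style collider-activation argument that the paper explicitly warns fails with cycles (Section on Step \ref{alg_addD}: with $O_i *\!\!\rightarrow O_j \leftarrow\!\!* O_k$, the vertices $O_i$ and $O_k$ \emph{can} be d-separated given a set containing $O_j$ --- that is the entire reason $\textnormal{SupSep}$ exists). To push your version through you would need a strengthening of Lemma \ref{lem_IP_dsep} guaranteeing that an inducing path into $B$ yields, for every conditioning set, a d-connecting path that is still \emph{into} $B$, so that the two pieces genuinely collide at $B$ when Lemma \ref{lem_d_conn} is applied; you flag this as the main obstacle but do not supply it, and it is not available anywhere in the paper. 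A side error: ``inducing path into $B$'' does \emph{not} force $B\notin\textnormal{Anc}(A\cup\bm{S})$ in a cyclic graph (take $A\rightarrow B$ inside a directed cycle back to $A$). The paper sidesteps all of this: FCI only tests whether $B$ belongs to the \emph{particular minimal} separating set $\textnormal{Sep}(A,C)$ recorded by the skeleton phase, and by Lemma \ref{lem_ancR} (with $\bm{R}=\emptyset$) every member of a minimal separating set is an ancestor of $\{A,C\}\cup\bm{S}$. Since unshieldedness in $\mathbb{G}^{\prime\prime}$ rules out an inducing path between $A$ and $C$, the concatenation argument forces $B\notin\textnormal{Anc}(\{A,C\}\cup\bm{S})$, hence $B\notin\textnormal{Sep}(A,C)$ and the v-structure is oriented --- no claim about \emph{all} separating sets is needed. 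You should replace your splicing argument with this minimality argument.
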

\noindent The above lemma holds because $\textnormal{PD-SEP}(O_i)$ formed using $\mathbb{G}^\prime$ is a subset of $\textnormal{PD-SEP}(O_i)$ formed using $\mathbb{G}^{\prime\prime}$; likewise for $\textnormal{PD-SEP}(O_j)$.

\subsection{Steps 2 \& 3: Short and Long Range Non-Ancestral Relations}

We orient endpoints during v-structure discovery with the following lemma:
\begin{lemma} \label{lem_VS1}
Consider a set $\bm{W} \subseteq \bm{O} \setminus \{O_i, O_j\}$. Now suppose that $O_i$ and $O_k$ are d-connected given $\bm{W} \cup \bm{S}$, and that $O_j$ and $O_k$ are d-connected given $\bm{W} \cup \bm{S}$. If $O_i$ and $O_j$ are d-separated given $\bm{W} \cup \bm{S}$ such that $O_k \not \in \bm{W}$, then $O_k$ is not an ancestor of $\{O_i, O_j\} \cup \bm{S}$.
\end{lemma}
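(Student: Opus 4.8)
\noindent\textbf{Proof proposal for Lemma \ref{lem_VS1}.}
The plan is to argue by contradiction. Assume that, in addition to the stated hypotheses, $O_k \in \textnormal{Anc}(\{O_i,O_j\}\cup\bm{S})$ in $\mathbb{G}$; I will construct an active path between $O_i$ and $O_j$ given $\bm{W}\cup\bm{S}$, contradicting the assumed d-separation. First I would record two easy facts: $O_k$ is distinct from $O_i$ and $O_j$ (it is d-connected to each of them given $\bm{W}\cup\bm{S}$), and $O_k\notin\bm{W}\cup\bm{S}$ (we are given $O_k\notin\bm{W}$, and $O_k\in\bm{O}$ is disjoint from $\bm{S}$). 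Then I fix an active path $\Pi_1$ between $O_i$ and $O_k$ and an active path $\Pi_2$ between $O_j$ and $O_k$, both given $\bm{W}\cup\bm{S}$; these exist by the two d-connection hypotheses.

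The heart of the argument is to splice $\Pi_1$ and $\Pi_2$ at $O_k$ and examine the one vertex where activity could fail. On the walk $\Pi_1 \oplus \Pi_2^{-1}$ from $O_i$ to $O_j$, every vertex other than the splice vertex $O_k$ keeps, at each of its occurrences, the collider/non-collider status it had on $\Pi_1$ or $\Pi_2$, hence still meets the activity requirement; so only $O_k$ is in question. If $O_k$ is a non-collider on this walk, or if $O_k$ has a descendant in $\bm{W}\cup\bm{S}$, the walk is active throughout. Otherwise $O_k$ is a collider on the walk with no descendant in $\bm{W}\cup\bm{S}$; in particular $O_k\notin\textnormal{Anc}(\bm{S})$ (a descendant of $O_k$ in $\bm{S}$ would lie in $\bm{W}\cup\bm{S}$), so $O_k\in\textnormal{Anc}(O_i)$ or $O_k\in\textnormal{Anc}(O_j)$. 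Say $O_k\in\textnormal{Anc}(O_i)$ and let $D$ be a directed path from $O_k$ to $O_i$; every vertex of $D$ is a descendant of $O_k$ and therefore outside $\bm{W}\cup\bm{S}$. I would then splice $\Pi_2^{-1}$ with $D$: since $O_k$ is a collider on $\Pi_1 \oplus \Pi_2^{-1}$ the last edge of $\Pi_2$ is into $O_k$, while the first edge of $D$ is out of $O_k$, so on the resulting walk from $O_j$ to $O_i$ the vertex $O_k$ is a non-collider outside the conditioning set, the interior of $D$ contributes only non-colliders outside the conditioning set, and the $\Pi_2$ part keeps its status --- an active walk again. The case $O_k\in\textnormal{Anc}(O_j)$ is identical with $O_i,\Pi_1$ and $O_j,\Pi_2$ interchanged. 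Either way $O_i$ and $O_j$ turn out to be d-connected given $\bm{W}\cup\bm{S}$, a contradiction, so the lemma follows.

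The one ingredient that is not routine --- and the step I expect to be the main obstacle --- is the passage from an active walk to an active path: I use repeatedly that if two vertices are joined by a walk that is active given $\bm{C}$ (every collider occurrence has a descendant in $\bm{C}$, every non-collider occurrence avoids $\bm{C}$), then they are joined by an active path given $\bm{C}$. This is a standard fact valid for arbitrary directed graphs, cyclic or not (cf. \citep{Spirtes00}), but it is not quite trivial: naively deleting a loop at a repeated vertex can turn a non-collider into a collider at the splice point, so the careful version picks a shortest active walk and checks that at any repeated vertex either that vertex is already an ancestor of $\bm{C}$ (so a collider there does no harm) or a loop deletion preserving non-collider status is available. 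I would invoke it as a lemma rather than reprove it. Everything else is bookkeeping, and it is worth remarking that no step uses acyclicity, so this argument is verbatim the same as in the acyclic case.
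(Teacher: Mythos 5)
Your proof is correct and is essentially the paper's argument: the paper factors it through Lemma \ref{lem_addV} (whose contrapositive, Corollary \ref{cor_VS}, is applied twice), but the content is the same case split --- if $O_k$ has a descendant in $\bm{W}\cup\bm{S}$ the two active paths splice directly, and otherwise $O_k\notin\textnormal{Anc}(\bm{S})$ forces a directed path from $O_k$ to $O_i$ or $O_j$ avoiding $\bm{W}\cup\bm{S}$, which is substituted for one of the paths. The ``active walk to active path'' step you flag is handled in the paper by the cited splicing result (Lemma \ref{lem_d_conn}, from Colombo et al.), which plays exactly the role of your walk-to-path lemma and is likewise valid in the cyclic setting.
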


Recall that, if there exists an inducing path between $O_i$ and $O_k$ as well as an inducing path between $O_j$ and $O_k$, then $O_i$ and $O_k$ are d-connected given $\bm{W} \cup \bm{S}$ and $O_j$ and $O_k$ are d-connected given $\bm{W} \cup \bm{S}$ by Lemma \ref{lem_IP_dsep}. Moreover, if $O_i$ and $O_j$ are d-separated given $\bm{W} \cup \bm{S}$, then an inducing path does not exist between $O_i$ and $O_j$. This means that, if we are dealing with the partially oriented MAAG in Figure \ref{fig_VS_a}, and $O_i$ and $O_j$ are d-separated given $\bm{W} \cup \bm{S}$ such that $O_k \not \in \bm{W}$, then we orient the endpoints in Figure \ref{fig_VS_a} as the arrowheads in Figure \ref{fig_VS_b}. Lemma \ref{lem_VS1} therefore justifies the v-structure discovery procedure in Step \ref{alg_vstruc} of CCI for short range non-ancestral relations.

\begin{figure}
\centering
\begin{subfigure}{.25\linewidth}
\centering
\resizebox{\linewidth}{!}{
\begin{tikzpicture}[scale=1.0, shorten >=1pt,auto,node distance=2.8cm, semithick]
                    
\tikzset{vertex/.style = {shape=circle,draw,inner sep=0.4pt}}
 
\node[vertex] (1) at  (0,1) {$O_i$};
\node[vertex] (2) at  (1.5,1) {$O_j$};
\node[vertex] (3) at  (0.75,-0.3) {$O_k$};

\draw[o-o] (1) to (3);
\draw[o-o] (2) to (3);
\end{tikzpicture}
}
\caption{}  \label{fig_VS_a}
\end{subfigure}
\begin{subfigure}{.25\linewidth}
\centering
\resizebox{\linewidth}{!}{
\begin{tikzpicture}[scale=1.0, shorten >=1pt,auto,node distance=2.8cm, semithick]
                    
\tikzset{vertex/.style = {shape=circle,draw,inner sep=0.4pt}}
 
\node[vertex] (1) at  (0,1) {$O_i$};
\node[vertex] (2) at  (1.5,1) {$O_j$};
\node[vertex] (3) at  (0.75,-0.3) {$O_k$};

\draw[o->,> = latex'] (1) to (3);
\draw[o->,> = latex'] (2) to (3);
\end{tikzpicture}
}
\caption{}  \label{fig_VS_b}
\end{subfigure}
\caption{An unshielded triple in (a) oriented to a v-structure in (b) after Step \ref{alg_vstruc} of CCI.}
\end{figure}
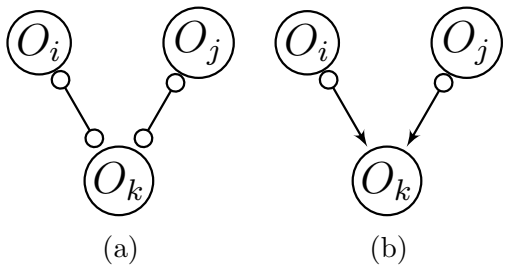

Now Lemma \ref{lem_VS1} also justifies Step \ref{alg_addV} of CCI, because $O_i$ and $O_k$ as well as $O_j$ and $O_k$ need not be adjacent in the underlying MAAG. In fact, $O_k$ may be located far from $O_i$ and $O_j$. CCI utilizes such long range relations because two cyclic directed graphs may agree ``locally'' on d-separation relations, but disagree on some d-separation relations between distant variables \citep{Richardson94}.

\subsection{Step 4: Discovering Non-Minimal D-Separating Sets}

The algorithm now utilizes the graph from Step \ref{alg_addV} in order to find additional d-separating sets. The skeleton discovery phase of CCI finds minimal d-separating sets, but non-minimal d-separating sets can also inform the algorithm about the underlying cyclic causal graph. Recall that, if we have $O_i * \!\! \rightarrow O_j \leftarrow \!\! * O_k$ in the acyclic case, then $O_i$ and $O_j$ are d-connected given $O_j \cup \bm{W} \cup \bm{S}$ for any $\bm{W} \subseteq \bm{O} \setminus \{O_i, O_j\}$ \citep{Spirtes96}. The same fact is not true however in the cyclic case. Consider for example the graph in Figure \ref{fig_SEMIE}. Here, we know that $X_1$ and $X_4$ are d-separated given $\emptyset$, but they are also d-separated given $\{X_2, X_3\}$. Moreover, we have $O_1 \rightarrow O_3 \leftarrow O_4$ in the corresponding MAAG in Figure \ref{fig_SEMIE_MAAG}.

The above example motivates us to search for additional d-separating sets, which will prove to be important for orientating additional circle endpoints as evidenced in the next section. From Lemma \ref{lem_pdsep}, we already know that some subset of $\textnormal{PD-SEP}(O_i)$ and some subset of $\textnormal{PD-SEP}(O_k)$ d-separate $O_i$ and $O_k$ when we additionally condition on $\bm{S}$. We therefore search for the additional d-separating sets by testing all subsets of $\textnormal{PD-SEP}(O_i)$ as well as those of $\textnormal{PD-SEP}(O_k)$.

We summarize the details of Step \ref{alg_addD} in Algorithm \ref{alg_CCI_E}. The sub-procedure specifically works as follows. For each v-structure $O_i * \!\! \rightarrow O_j \leftarrow \!\!* O_k$, the algorithm determines whether $O_i$ and $O_k$ are d-separated given specific supersets of the minimal separating set $\textnormal{Sep}(O_i,O_k)$. In particular, Algorithm \ref{alg_CCI_E} forms the sets $\bm{T}=\textnormal{Sep}(O_i,O_k)\cup O_j \cup \bm{W}$ where $\bm{W} \subseteq \textnormal{PD-SEP}(O_i)\setminus \{\textnormal{Sep}(O_i,O_k) \cup \{O_j, O_k\}\}$ in line \ref{step4_T}. The algorithm then consults the CI oracle with $\bm{T} \cup \bm{S}$ in line \ref{step4_CI}. Finally, like the skeleton discovery phase, the algorithm first consults the CI oracle with the smallest subsets and then progresses to larger subsets until such a separating set $\bm{T} \cup \bm{S}$ is found or all subsets of $\textnormal{PD-SEP}(O_i)\setminus \{\textnormal{Sep}(O_i,O_k) \cup \{O_j, O_k\}\}$ have been exhausted.

\begin{algorithm}[]
 \KwData{$\widehat{\mathcal{G}}$, CI oracle}
 \KwResult{$\widehat{\mathcal{G}}$, \textnormal{SupSep}}
 \BlankLine
 
$m = 0$\\
\Repeat{all ordered triples $\langle O_i, O_j, O_k \rangle$ with the v-structure $O_i * \!\! \rightarrow O_j \leftarrow \!\!* O_k$ have $|\textnormal{PD-SEP}(O_i)| < m$}{
	\Repeat{ all triples $\langle O_i, O_j, O_k \rangle$ with the v-structure $O_i * \!\! \rightarrow O_j \leftarrow \!\!* O_k$ and $|\textnormal{PD-SEP}(O_i)| \geq m$ have been selected}{
    	select the ordered triple $\langle O_i, O_j, O_k \rangle$ with the v-structure $O_i * \!\! \rightarrow O_j \leftarrow \!\!* O_k$ such that $|\textnormal{PD-SEP}(O_i)| \geq m$\\
        \Repeat{all subsets $\bm{W} \subseteq \textnormal{PD-SEP}(O_i) \setminus \{\textnormal{Sep}(O_i,O_k) \cup \{O_j, O_k\}\}$ have been considered or a d-separating set of $O_i$ and $O_k$ has been recorded in $\textnormal{SupSep}(O_i,O_j,O_k)$}{
        	select a subset $\bm{W} \subseteq \textnormal{PD-SEP}(O_i) \setminus \{\textnormal{Sep}(O_i,O_k) \cup \{O_j, O_k\}\}$ with $m$ vertices\\
            $\bm{T} = \bm{W} \cup  \textnormal{Sep}(O_i,O_k) \cup O_j$ \\ \label{step4_T}
        	if $O_i$ and $O_k$ are d-separated given $\bm{T} \cup \bm{S}$, then record the set $\bm{T}$ in $\textnormal{SupSep}(O_i,O_j,O_k)$ \label{step4_CI}
        }
    }
}

 \BlankLine

 \caption{Step \ref{alg_addD} of CCI} \label{alg_CCI_E}
\end{algorithm}

\subsection{Step 5: Orienting with Non-Minimal D-Separating Sets}

The following lemma justifies Step \ref{alg_step5} which utilizes the non-minimal d-separating sets discovered in the previous step:
\begin{lemma} \label{lem_sup_anc2}
Consider a quadruple of vertices $\langle O_i, O_j, O_k, O_l \rangle$. Suppose that we have:
\begin{enumerate}
\item $O_i$ and $O_k$ non-adjacent;
\item $O_i * \!\! \rightarrow O_l \leftarrow \!\! * O_k$;
\item $O_i$ and $O_k$ are d-separated given some $\bm{W} \cup \bm{S}$ with $O_j \in \bm{W}$ and $\bm{W} \subseteq \bm{O} \setminus \{O_i, O_k\}$;
\item $O_j * \!\! \linecirc O_l$
\end{enumerate}
If $O_l \not \in \bm{W}=\textnormal{Sep}(O_i,O_k)$, then we have $O_j * \!\! \rightarrow O_l$. If $O_l \in \bm{W}=\textnormal{SupSep}(O_i,O_j,$ $O_k)$ and in addition we have $O_i * \!\! \rightarrow O_j \leftarrow \!\! * O_k$, then we have $O_j * \!\! - O_l$. 
\end{lemma}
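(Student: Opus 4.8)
Throughout I would use the endpoint semantics of the (partially oriented) MAAG recalled in Section~\ref{sec_MAGs}: on the edge $O_j * \!\! \linecirc O_l$, an arrowhead at $O_l$ holds exactly when $O_l \notin \textnormal{Anc}(O_j \cup \bm{S})$ in $\mathbb{G}$, and a tail at $O_l$ exactly when $O_l \in \textnormal{Anc}(O_j \cup \bm{S})$. So the first assertion reduces to proving $O_l \notin \textnormal{Anc}(O_j \cup \bm{S})$ and the second to proving $O_l \in \textnormal{Anc}(O_j \cup \bm{S})$. I would first unpack conditions 1--4: by Lemma~\ref{lem_IP_dsep} each of the pairs $\{O_i, O_l\}$, $\{O_k, O_l\}$, $\{O_j, O_l\}$ (and, in the second assertion, $\{O_i, O_j\}$, $\{O_k, O_j\}$) is joined by an inducing path and hence d-connected given $\bm{V} \cup \bm{S}$ for every $\bm{V} \subseteq \bm{O}$ omitting the two endpoints; the arrowheads at $O_l$ give $O_l \notin \textnormal{Anc}(O_i \cup \bm{S})$ and $O_l \notin \textnormal{Anc}(O_k \cup \bm{S})$; and in the second assertion the arrowheads at $O_j$ give $O_j \notin \textnormal{Anc}(O_i \cup \bm{S})$ and $O_j \notin \textnormal{Anc}(O_k \cup \bm{S})$.

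For the first assertion I would argue by contradiction: assume $O_l \in \textnormal{Anc}(O_j \cup \bm{S})$. Since $O_j \in \bm{W}$, the vertex $O_l$ then has a descendant in $\bm{W} \cup \bm{S}$. Because $O_l \notin \bm{W}$ in this case (and $O_l$ is observable), $O_i$ and $O_l$ are d-connected given $\bm{W} \cup \bm{S}$, as are $O_k$ and $O_l$; pick active paths $\alpha$ from $O_i$ to $O_l$ and $\beta$ from $O_k$ to $O_l$ given $\bm{W} \cup \bm{S}$, and, by the standard active-path gluing (as in the proof of Lemma~\ref{lem_VS1}), assume they meet only at $O_l$ --- otherwise that gluing already yields an active $O_i$--$O_k$ path given $\bm{W} \cup \bm{S}$, contradicting condition 3. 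On the concatenation $\alpha \cup \beta$ every internal vertex other than $O_l$ keeps its collider/non-collider status from $\alpha$ or $\beta$, so remains fine given $\bm{W} \cup \bm{S}$; and $O_l$ is fine either way, since if it is a non-collider then $O_l \notin \bm{W} \cup \bm{S}$ and if it is a collider then it has a descendant in $\bm{W} \cup \bm{S}$. Hence $\alpha \cup \beta$ is active given $\bm{W} \cup \bm{S}$, contradicting condition 3. Therefore $O_l \notin \textnormal{Anc}(O_j \cup \bm{S})$, giving $O_j * \!\! \rightarrow O_l$.

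For the second assertion I would again argue by contradiction: assume $O_l \notin \textnormal{Anc}(O_j \cup \bm{S})$, which with the arrowheads at $O_l$ gives $O_l \notin \textnormal{Anc}(\{O_i, O_j, O_k\} \cup \bm{S})$. Using that the minimal separating set $\textnormal{Sep}(O_i, O_k)$ returned by Step~\ref{alg_skeleton} contains only ancestors (in $\mathbb{G}$) of $\{O_i, O_k\} \cup \bm{S}$, I get $O_l \notin \textnormal{Sep}(O_i, O_k)$, so $O_l$ lies in the ``extra'' part $\bm{U}$ of $\bm{W} = \textnormal{SupSep}(O_i, O_j, O_k) = \textnormal{Sep}(O_i, O_k) \cup \{O_j\} \cup \bm{U}$. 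Since Algorithm~\ref{alg_CCI_E} records $\bm{U}$ of least cardinality, $\bm{W} \setminus \{O_l\}$ does not d-separate $O_i$ and $O_k$ given $\bm{S}$; take an active path $\pi$ between them given $(\bm{W} \setminus \{O_l\}) \cup \bm{S}$. Adjoining $O_l$ to the conditioning set deactivates $\pi$ (by condition 3), so $O_l$ must be a non-collider on $\pi$ and $\pi$ has an edge directed out of $O_l$; following the maximal directed segment of $\pi$ that starts along that edge, it cannot end at $O_i$ or $O_k$ (that would force $O_l \in \textnormal{Anc}(O_i)$ or $O_l \in \textnormal{Anc}(O_k)$), so it ends at a collider $C$ of $\pi$, which, $\pi$ being active, has a descendant $D \in (\bm{W} \setminus \{O_l\}) \cup \bm{S}$, whence $O_l \in \textnormal{Anc}(D)$. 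Classifying $D$: it cannot lie in $\textnormal{Sep}(O_i, O_k)$ (the ancestral property would give $O_l \in \textnormal{Anc}(\{O_i, O_k\} \cup \bm{S})$), and an analogous minimality/ancestral argument rules out $D \in \bm{U} \setminus \{O_l\}$; since $D \neq O_l$, this leaves $D \in \{O_j\} \cup \bm{S}$, that is, $O_l \in \textnormal{Anc}(O_j \cup \bm{S})$, contradicting the assumption. Hence $O_l \in \textnormal{Anc}(O_j \cup \bm{S})$, giving $O_j * \!\! - O_l$.

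The routine difficulty throughout is the active-path surgery that turns ``d-connected to the splice vertex from both sides'' into a genuine $O_i$--$O_k$ path without manufacturing an illegitimate collider or non-collider at the splice; this is standard but must be tracked carefully (and the first assertion is clean precisely because the splice vertex $O_l$ works whether it ends up a collider or not). The real obstacle is the second assertion, and within it the classification of $D$: excluding $D \in \bm{U} \setminus \{O_l\}$ depends on knowing that the minimal sets produced by Steps~\ref{alg_skeleton} and~\ref{alg_addD} consist only of ancestors of the relevant vertices together with $\bm{S}$, and establishing that in the cyclic case --- where the acyclic ``the directed-path recursion terminates'' argument fails because of feedback --- is where the bulk of the work lies, and may require a separate lemma or an induction on $|\bm{U}|$. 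I expect the second assertion to be substantially harder than the first.
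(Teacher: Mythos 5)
Your first assertion is proved exactly as in the paper: assume a tail at $O_l$ (so $O_l \in \textnormal{Anc}(O_j \cup \bm{S})$ and hence $O_l$ has a descendant in $\bm{W}\cup\bm{S}$ via $O_j$), then splice the two d-connecting paths $O_i \cdots O_l$ and $O_k \cdots O_l$ into a d-connecting path between $O_i$ and $O_k$ given $\bm{W}\cup\bm{S}$; the paper does the splice by invoking Lemma \ref{lem_d_conn} rather than re-deriving the collider/non-collider case check at $O_l$, but the content is identical.

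For the second assertion your overall strategy (minimality of the recorded separating set forces every member to be an ancestor of the right set; the two arrowheads at $O_l$ then leave only $O_l \in \textnormal{Anc}(O_j)$) is also the paper's, but the step you flag as the real obstacle is indeed a genuine gap as you have written it. Your argument deletes the single vertex $O_l$ from $\bm{W}$, obtains an active path, traces a directed segment out of $O_l$ to a descendant $D \in (\bm{W}\setminus\{O_l\})\cup\bm{S}$, and then must exclude $D \in \bm{U}\setminus\{O_l\}$; but excluding that case requires already knowing that the other members of $\bm{U}$ are ancestors of $\{O_i,O_j,O_k\}\cup\textnormal{Sep}(O_i,O_k)\cup\bm{S}$, which is the very statement being proved --- the ``analogous argument'' is circular unless you set up an induction. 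The paper avoids this by proving the utility Lemma \ref{lem_ancR} once and for all: instead of removing one vertex, it conditions on $\bm{W}^* = \textnormal{Anc}(\{O_i,O_k\}\cup\bm{R}\cup\bm{S})\cap\bm{W}$ with $\bm{R} = O_j \cup \textnormal{Sep}(O_i,O_k)$, i.e.\ it removes \emph{all} putative non-ancestors simultaneously; since $\bm{W}^*$ is a proper subset containing $\bm{R}$, minimality of $\textnormal{SupSep}$ yields a d-connecting path given $\bm{W}^*\cup\bm{S}$ all of whose vertices are ancestors of $\{O_i,O_k\}\cup\bm{R}\cup\bm{S}$, and no vertex of $\bm{W}\setminus\bm{W}^*$ can lie on it, so the path survives conditioning on the full $\bm{W}\cup\bm{S}$ --- contradiction. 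This one-shot intersection is the device that dissolves the circularity you anticipated; with Lemma \ref{lem_ancR} in hand the rest of your argument (including the reduction of ``ancestor of $\textnormal{Sep}(O_i,O_k)$'' to ``ancestor of $\{O_i,O_k\}\cup\bm{S}$'' via the $\bm{R}=\emptyset$ case) goes through and matches the paper.
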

\noindent Notice that the above lemma utilizes $\textnormal{SupSep}(O_i,O_j,O_k)$ as discovered in Step \ref{alg_addD}.

\subsection{Step 6: Long Range Ancestral Relations}

We can justify Step \ref{alg_step6} with the following result:
\begin{lemma} \label{lem_sup_anc1}
If $O_i$ and $O_k$ are d-separated given $\bm{W} \cup \bm{S}$, where $\bm{W} \subseteq \bm{O} \setminus \{ O_i, O_k\}$, and $\bm{Q} \subseteq \textnormal{Anc}(\{O_i, O_k\} \cup \bm{W} \cup \bm{S}) \setminus \{O_i, O_k \}$, then $O_i$ and $O_k$ are also d-separated given $\bm{Q} \cup \bm{W} \cup \bm{S}$.
\end{lemma}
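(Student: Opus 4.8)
The plan is to argue by contradiction, using the ancestrality of $\bm{Q}$ to ``repair'' a bad path. Suppose $O_i$ and $O_k$ are d-connected given $\bm{Q} \cup \bm{W} \cup \bm{S}$ via an active path $\Pi$. Since $\Pi$ is active given the \emph{larger} set $\bm{Q} \cup \bm{W} \cup \bm{S}$, no non-collider of $\Pi$ lies in $\bm{W} \cup \bm{S}$, and every collider of $\Pi$ has a descendant in $\bm{Q} \cup \bm{W} \cup \bm{S}$. If every collider of $\Pi$ also had a descendant in $\bm{W} \cup \bm{S}$, then $\Pi$ would be active given $\bm{W} \cup \bm{S}$, contradicting the hypothesis; so some collider $C$ of $\Pi$ has a descendant in $\bm{Q}$ but none in $\bm{W} \cup \bm{S}$ --- call such a collider \emph{fragile}. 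The key observation is that a fragile $C$ must be an ancestor of $O_i$ or of $O_k$: a descendant $D \in \bm{Q} \subseteq \textnormal{Anc}(\{O_i, O_k\} \cup \bm{W} \cup \bm{S})$ gives a directed path from $C$ through $D$ to some $V \in \{O_i, O_k\} \cup \bm{W} \cup \bm{S}$, and $V \in \bm{W} \cup \bm{S}$ is impossible, since then $C$ would have a descendant in $\bm{W} \cup \bm{S}$.

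Next I would perform surgery on $\Pi$. Say there is a directed path $\rho$ from $C$ to $O_i$ (the case of $O_k$ is symmetric). Traversing $\rho$ backwards from $O_i$, let $Q^*$ be the first vertex lying on the portion of $\Pi$ from $C$ to $O_k$ (this exists because $C$ lies on both), and let $\Sigma$ be the path that follows $\rho$ reversed from $O_i$ to $Q^*$ and then follows $\Pi$ from $Q^*$ to $O_k$. By the choice of $Q^*$, $\Sigma$ is a genuine simple path from $O_i$ to $O_k$. Every vertex $\Sigma$ takes from $\rho$ (including $Q^*$) is a descendant of $C$, hence not in $\bm{W} \cup \bm{S}$, and is a non-collider on $\Sigma$ because it has a tail on its $\rho$-side; the remaining vertices, which $\Sigma$ inherits from $\Pi$, keep their collider status. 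Hence $\Sigma$ again satisfies the two observations recorded above, but it has strictly fewer fragile colliders than $\Pi$, since $C$ is no longer a collider and no new colliders are created. Iterating this surgery terminates (the number of fragile colliders is a nonnegative integer that strictly decreases) and produces a path with no fragile colliders, i.e.\ an active path given $\bm{W} \cup \bm{S}$ --- the desired contradiction.

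The step I expect to be the main obstacle is the bookkeeping in the surgery: confirming that $\Sigma$ is simple, that the labels of vertices retained from $\Pi$ are unchanged, that every vertex introduced from $\rho$ is a non-collider outside $\bm{W} \cup \bm{S}$, and that the fragile-collider count genuinely drops so the recursion is well founded. A slicker but less self-contained alternative would be to invoke the characterization of d-separation as ordinary vertex separation in the moral graph of the subgraph induced on $\textnormal{Anc}(\{O_i, O_k\} \cup \bm{W} \cup \bm{S})$: since $\bm{Q}$ lies inside that ancestor set, adjoining $\bm{Q}$ alters neither the induced subgraph nor its moralization, and enlarging a vertex separator in an undirected graph preserves separation.
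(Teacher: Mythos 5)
Your proof is correct, and it reaches the conclusion by a genuinely different mechanism than the paper's. Both arguments are contrapositives that take a path $\Pi$ d-connecting $O_i$ and $O_k$ given $\bm{Q}\cup\bm{W}\cup\bm{S}$ and convert it into one active given $\bm{W}\cup\bm{S}$, and both hinge on the same key observation: a collider whose only descendants in the conditioning set lie in $\bm{Q}$ must, because $\bm{Q}\subseteq\textnormal{Anc}(\{O_i,O_k\}\cup\bm{W}\cup\bm{S})$, be an ancestor of $O_i$ or of $O_k$, so one can reroute through a directed path to an endpoint. The execution differs. The paper does the rerouting in one shot --- it selects the collider $O_a$ furthest from $O_i$ that is an ancestor of $O_i\cup\bm{S}$ and the first subsequent collider $O_b$ that is an ancestor of $O_k\cup\bm{S}$, and delegates all splicing bookkeeping to the concatenation lemma of Colombo et al.\ (Lemma \ref{lem_d_conn}) applied to the three pieces $\{\Pi_{O_aO_i},\Pi_{O_aO_b},\Pi_{O_bO_k}\}$ --- whereas you eliminate one ``fragile'' collider at a time by explicit simple-path surgery and terminate via a strictly decreasing counter. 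Your version is self-contained (no appeal to Lemma \ref{lem_d_conn}) at the price of an induction, and it handles one point more transparently than the paper: since a fragile collider has \emph{no} descendant in $\bm{W}\cup\bm{S}$, every vertex of the directed path $\rho$ avoids $\bm{W}\cup\bm{S}$ automatically, where the paper's analogous claim about $\Pi_{O_aO_i}$ is only asserted. The bookkeeping you flag as the main risk does go through: $\Sigma$ is simple by the choice of $Q^*$ (no earlier vertex of $\rho$ meets $\Pi[Q^*,O_k]$, and $O_i\notin\Pi[C,O_k]$), every vertex contributed by $\rho$, including $Q^*$, is a non-collider outside $\bm{W}\cup\bm{S}$, and the colliders of $\Sigma$ form a subset of those of $\Pi$ that excludes $C$, so the fragile count strictly drops. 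One caveat on your ``slicker alternative'': the moral-graph characterization of d-separation via the ancestral set $\textnormal{Anc}(\{O_i,O_k\}\cup\bm{W}\cup\bm{S})$ does extend to cyclic directed graphs, but that extension is precisely what you would need to cite in this paper's setting, so it is less self-contained than it first appears.
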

\noindent Notice that the above lemma allows us to infer long range ancestral relations because all variables in $\bm{Q}$ are ancestors of $\{O_i, O_k\} \cup \bm{W} \cup \bm{S}$. Step \ref{alg_step6} of Algorithm \ref{alg_CCI} then follows by the contrapositive of Lemma \ref{lem_sup_anc1}: 
\begin{corollary}
Assume that $O_i$ and $O_k$ are d-separated by $\bm{W} \cup \bm{S}$ with $O_j \in \bm{W}$ and $\bm{W} \subseteq \bm{O} \setminus \{ O_i, O_k\}$, but $O_i$ and $O_k$ are d-connected by $O_l \cup \bm{W} \cup \bm{S}$. Then, $O_l$ is not an ancestor of $O_j \cup \bm{S}$.
\end{corollary}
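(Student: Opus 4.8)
The plan is to obtain the statement as the contrapositive of Lemma~\ref{lem_sup_anc1}, so the argument is by contradiction. Suppose that $O_l$ \emph{is} an ancestor of $O_j \cup \bm{S}$. The first step is pure bookkeeping about set membership. Since $O_j \in \bm{W}$, every ancestor of $O_j \cup \bm{S}$ is in particular an ancestor of $\bm{W} \cup \bm{S}$, and hence of $\{O_i, O_k\} \cup \bm{W} \cup \bm{S}$; in other words $O_l \in \textnormal{Anc}(\{O_i, O_k\} \cup \bm{W} \cup \bm{S})$. Moreover, because the hypothesis states that $O_i$ and $O_k$ are d-connected given $O_l \cup \bm{W} \cup \bm{S}$, the conditioning set $O_l \cup \bm{W} \cup \bm{S}$ must be disjoint from $\{O_i, O_k\}$, so $O_l \notin \{O_i, O_k\}$. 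Combining the two facts gives $\{O_l\} \subseteq \textnormal{Anc}(\{O_i, O_k\} \cup \bm{W} \cup \bm{S}) \setminus \{O_i, O_k\}$.

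The second step is to invoke Lemma~\ref{lem_sup_anc1} with $\bm{Q} = \{O_l\}$. Its hypotheses are met verbatim: $O_i$ and $O_k$ are d-separated given $\bm{W} \cup \bm{S}$ (given in the corollary), $\bm{W} \subseteq \bm{O} \setminus \{O_i, O_k\}$ (given), and $\bm{Q} \subseteq \textnormal{Anc}(\{O_i, O_k\} \cup \bm{W} \cup \bm{S}) \setminus \{O_i, O_k\}$ (established above). The lemma then yields that $O_i$ and $O_k$ are d-separated given $\bm{Q} \cup \bm{W} \cup \bm{S} = O_l \cup \bm{W} \cup \bm{S}$. This contradicts the standing assumption that $O_i$ and $O_k$ are d-connected given $O_l \cup \bm{W} \cup \bm{S}$, and the contradiction forces $O_l \notin \textnormal{Anc}(O_j \cup \bm{S})$, which is the claim.

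I do not expect a genuine obstacle here: the corollary is a direct logical transposition of Lemma~\ref{lem_sup_anc1}, and the only point requiring care is confirming that the single vertex $O_l$ really lies in $\textnormal{Anc}(\{O_i, O_k\} \cup \bm{W} \cup \bm{S}) \setminus \{O_i, O_k\}$, which is immediate from $O_j \in \bm{W}$ and from the implicit disjointness of the conditioning set in the d-connection hypothesis. All the substantive work is in Lemma~\ref{lem_sup_anc1} itself — whose proof one would expect to route an active path between $O_i$ and $O_k$ given $O_l \cup \bm{W} \cup \bm{S}$ back to an active path given only $\bm{W} \cup \bm{S}$, using that each vertex of $\bm{Q}$ is an ancestor of the endpoints together with the conditioning set — but that lemma is taken as given here.
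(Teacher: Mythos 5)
Your proof is correct and matches the paper exactly: the paper derives this corollary as the contrapositive of Lemma \ref{lem_sup_anc1}, and your instantiation with $\bm{Q} = \{O_l\}$ (using $O_j \in \bm{W}$ to place $O_l$ in $\textnormal{Anc}(\{O_i,O_k\} \cup \bm{W} \cup \bm{S}) \setminus \{O_i,O_k\}$) is precisely the intended argument, just spelled out in more detail than the paper bothers to.
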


\subsection{Step 7: Orientation Rules} \label{sec_OR}
We will now describe the orientation rules in Step \ref{alg_OR}. Notice that the orientation rules are always applied after Step \ref{alg_step6} and therefore also after v-structure discovery. This ordering implies that, if $O_i$ and $O_j$ are non-adjacent and we have $O_i * \!\! - \!\! * O_k * \!\! - \!\! * O_j$, but we do not have $O_i * \!\! \rightarrow O_k \leftarrow \!\! * O_j$, then $O_k \in \textnormal{Sep}(O_i, O_j)$; this follows because, if $O_k \not \in \textnormal{Sep}(O_i, O_j)$, then we would have $O_i * \!\! \rightarrow O_k \leftarrow \!\! * O_j$ by v-structure discovery.

\subsubsection{First to Third Orientation Rules}
Lemma \ref{lem_VS1} allows us to infer non-ancestral relations. The following lemma allows us to infer ancestral relations:
\begin{lemma} \label{lem_or_1}
Suppose that there is a set $\bm{W} \setminus \{O_i, O_k\}$ and every proper subset $\bm{V} \subset \bm{W}$ d-connects $O_i$ and $O_k$ given $\bm{V} \cup \bm{S}$. If $O_i$ and $O_k$ are d-separated given $\bm{W} \cup \bm{S}$ where $O_j \in \bm{W}$, then $O_j$ is an ancestor of $\{O_i, O_k\} \cup \bm{S}$.
\end{lemma}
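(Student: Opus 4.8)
The plan is to prove the stronger statement that \emph{every} element of $\bm{W}$ belongs to $\bm{A} := \textnormal{Anc}(\{O_i,O_k\}\cup\bm{S})$; since $O_j \in \bm{W}$, this yields the lemma. (Note the hypotheses say precisely that $\bm{W}$ is disjoint from $\{O_i,O_k\}$ and is a minimal d-separating set of $O_i$ and $O_k$ relative to $\bm{S}$.) The engine of the argument is one elementary fact about active paths, which I would establish first and which needs no acyclicity: \emph{if $\Pi$ is an active path between $O_i$ and $O_k$ given a set $\bm{C}$, then every non-collider on $\Pi$ lies in $\textnormal{Anc}(\{O_i,O_k\}\cup\bm{C})$.}

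To prove this fact I would take a non-collider $z = u_s$ on $\Pi = \langle u_0,\dots,u_n\rangle$, where $\{u_0,u_n\} = \{O_i,O_k\}$. Being a non-collider, $z$ has at least one incident edge of $\Pi$ with its tail at $z$; say this edge points toward $u_n$, so $u_s \rightarrow u_{s+1}$. Then I walk forward along $\Pi$: whenever the current interior vertex $u_t$ ($t > s$) is a non-collider, the edge arriving from $u_{t-1}$ has its arrowhead at $u_t$, so the next edge must leave $u_t$, i.e.\ $u_t \rightarrow u_{t+1}$. Since the index strictly increases along a finite simple path, this walk halts at some $u_r$ which is either the endpoint $u_n$ --- producing a directed path $z \rightarrow \cdots \rightarrow u_n$, hence $z \in \textnormal{Anc}(\{O_i,O_k\})$ --- or a collider on $\Pi$, which, because $\Pi$ is active given $\bm{C}$, has a descendant in $\bm{C}$; then $z \in \textnormal{Anc}(u_r) \subseteq \textnormal{Anc}(\bm{C})$. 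If instead the tail-at-$z$ edge points toward $u_0$, the mirror-image backward walk gives the same conclusion. The key point is that cycles cause no trouble: every walk stays on the fixed simple path $\Pi$.

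With the fact available I would argue by contradiction. Suppose $\bm{B} := \bm{W}\setminus\bm{A}$ is nonempty and set $\bm{V} := \bm{W}\cap\bm{A}$, a proper subset of $\bm{W}$. Minimality then supplies an active path $\Pi$ between $O_i$ and $O_k$ given $\bm{V}\cup\bm{S}$. Because $O_i \ci_d O_k \mid \bm{W}\cup\bm{S}$, this same $\Pi$ is not active given $\bm{W}\cup\bm{S} = \bm{V}\cup\bm{B}\cup\bm{S}$. The crucial bookkeeping step: enlarging the conditioning set cannot spoil the collider side of the activeness criterion (a collider on $\Pi$ that had a descendant in $\bm{V}\cup\bm{S}$ still has it in the larger set), so the loss of activity must be caused by a non-collider on $\Pi$ lying in $\bm{B}$; call it $O_b$. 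Applying the fact to $\Pi$ with $\bm{C} = \bm{V}\cup\bm{S}$ gives $O_b \in \textnormal{Anc}(\{O_i,O_k\}\cup\bm{V}\cup\bm{S})$, and since $\bm{V}\subseteq\bm{A} = \textnormal{Anc}(\{O_i,O_k\}\cup\bm{S})$, transitivity of the ancestor relation collapses this to $O_b \in \bm{A}$, contradicting $O_b \in \bm{B}$. Hence $\bm{B} = \emptyset$, i.e.\ $\bm{W}\subseteq\bm{A}$, so in particular $O_j \in \textnormal{Anc}(\{O_i,O_k\}\cup\bm{S})$.

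I expect the delicate part to be exactly that bookkeeping step: arguing that when the conditioning set grows from $\bm{V}\cup\bm{S}$ to $\bm{W}\cup\bm{S}$ the deactivation of $\Pi$ is necessarily due to a non-collider lying in $\bm{B}$ --- not a collider, and not a vertex of $\bm{S}$ or $\bm{V}$ --- and then correctly identifying which conditioning set to feed into the active-path fact. The forward/backward walk and the transitivity collapse are routine, and, reassuringly, nothing in the argument uses acyclicity, so the proof applies verbatim to the cyclic linear SEM-IE case.
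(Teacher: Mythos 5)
Your proposal is correct and follows essentially the same route as the paper: the paper proves this as the special case $\bm{R}=\emptyset$ of Lemma \ref{lem_ancR}, whose proof likewise takes the subset $\bm{W}^{*}=\textnormal{Anc}(\{O_i,O_k\}\cup\bm{S})\cap\bm{W}$ (your $\bm{V}$), invokes minimality to obtain an active path given $\bm{W}^{*}\cup\bm{S}$, and uses the fact that every vertex on an active path is an ancestor of the endpoints or the conditioning set to conclude that the path cannot be blocked by $\bm{W}\setminus\bm{W}^{*}$. The only cosmetic differences are that you phrase it as a direct contradiction on a putative non-ancestral non-collider rather than as a contrapositive, and you supply an explicit forward/backward-walk proof of the active-path ancestry fact that the paper merely asserts.
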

\noindent The above lemma justifies the following orientation rule:
\begin{lemma}
If we have $O_i * \!\! \rightarrow O_j \circline \!\! * O_k$ with $O_i$ and $O_k$ non-adjacent, then orient $O_j \circline \!\! * O_k$ as $O_j - \!\! * O_k$.
\end{lemma}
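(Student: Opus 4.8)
The plan is to reduce this orientation rule to the ancestral-relation lemma (Lemma~\ref{lem_or_1}), applied to the separating set $\textnormal{Sep}(O_i,O_k)$ recorded in Step~\ref{alg_skeleton}, and then to sharpen that lemma's conclusion using the arrowhead $O_i * \!\! \rightarrow O_j$. Recall that a tail at $O_j$ on the edge to $O_k$ (i.e.\ $O_j - \!\! * O_k$) is exactly the assertion ``$O_j \in \textnormal{Anc}(O_k \cup \bm{S})$'', so this is what must be established.

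\textbf{Locating $O_j$ in a separating set.} Since $O_i$ and $O_k$ are non-adjacent, Step~\ref{alg_skeleton} recorded a set $\textnormal{Sep}(O_i,O_k)$ with $O_i \ci_d O_k \mid \textnormal{Sep}(O_i,O_k) \cup \bm{S}$. The orientation rules are executed only after v-structure discovery (Step~\ref{alg_vstruc}), and the edge $O_j \circline \!\! * O_k$ still carries a circle, not an arrowhead, at $O_j$; hence the unshielded triple $\langle O_i,O_j,O_k\rangle$ was \emph{not} oriented as a v-structure. By the remark opening Section~\ref{sec_OR}, this forces $O_j \in \textnormal{Sep}(O_i,O_k)$.

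\textbf{Applying Lemma~\ref{lem_or_1}.} Put $\bm{W} = \textnormal{Sep}(O_i,O_k)$. To invoke Lemma~\ref{lem_or_1} I must check that $\bm{W}$ is minimal in the subset sense, i.e.\ every proper subset $\bm{V} \subset \bm{W}$ satisfies $O_i \not\ci_d O_k \mid \bm{V} \cup \bm{S}$. This follows from the increasing-cardinality search performed during skeleton discovery: $\bm{W}$ is a subset of the conditioning-set pool searched for the pair $\{O_i,O_k\}$ (an adjacency set, and by Lemma~\ref{lem_pdsep2} a subset of $\textnormal{PD-SEP}(O_i)$ or $\textnormal{PD-SEP}(O_k)$), hence so is every $\bm{V} \subset \bm{W}$, and each such strictly smaller $\bm{V}$ was tested before $\bm{W}$ without removing the edge between $O_i$ and $O_k$. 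Since $O_j \in \bm{W}$, Lemma~\ref{lem_or_1} gives $O_j \in \textnormal{Anc}(\{O_i,O_k\} \cup \bm{S})$.

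\textbf{Sharpening and orienting.} The endpoint $O_i * \!\! \rightarrow O_j$ means $O_j \notin \textnormal{Anc}(O_i \cup \bm{S})$. Since $\textnormal{Anc}(\{O_i,O_k\} \cup \bm{S}) = \textnormal{Anc}(O_i \cup \bm{S}) \cup \textnormal{Anc}(O_k \cup \bm{S})$, the previous step leaves only $O_j \in \textnormal{Anc}(O_k \cup \bm{S})$, which is precisely the tail semantics licensing the reorientation of $O_j \circline \!\! * O_k$ as $O_j - \!\! * O_k$. I expect the main obstacle to be verifying the minimality hypothesis of Lemma~\ref{lem_or_1} for $\textnormal{Sep}(O_i,O_k)$ — minimality with respect to \emph{all} proper subsets rather than merely of minimum cardinality. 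This needs a careful account of which phase of FCI's skeleton procedure recorded the set and of the fact that candidate conditioning sets are enumerated in non-decreasing size (with adjacency sets only shrinking over the course of the search), so that every strictly smaller subset of the relevant pool had already been examined and rejected when the edge was deleted. The remaining steps are routine bookkeeping about endpoint semantics and the distribution of $\textnormal{Anc}(\cdot)$ over unions.
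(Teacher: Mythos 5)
Your proof is correct and takes essentially the same route as the paper's: place $O_j$ in $\textnormal{Sep}(O_i,O_k)$ because v-structure discovery has already been performed, invoke Lemma \ref{lem_or_1} to conclude $O_j \in \textnormal{Anc}(\{O_i,O_k\}\cup\bm{S})$, and use the arrowhead $O_i * \!\! \rightarrow O_j$ to exclude $\textnormal{Anc}(O_i \cup \bm{S})$. The only difference is that you explicitly verify the subset-minimality hypothesis of Lemma \ref{lem_or_1} for the recorded separating set via the increasing-cardinality search, a point the paper's proof leaves implicit.
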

\begin{proof}
If we have $O_i * \!\! \rightarrow O_j \circline \!\!* O_k$ with $O_i$ and $O_k$ non-adjacent, then $O_j \in \textnormal{Sep}(O_i, O_k)$ because we have already performed v-structure discovery. By Lemma \ref{lem_or_1}, we know that $O_j \in \textnormal{Anc}(\{O_i, O_k\} \cup \bm{S})$. We more specifically know that $O_j \in \textnormal{Anc}(O_k)$ because the arrowhead $O_i * \!\! \rightarrow O_j$ implies that $O_j \not \in \textnormal{Anc}(O_i \cup \bm{S})$.
\end{proof}
\noindent For example, if we have the structure in Figure \ref{fig_OR1_a}, then we can add a undirected edge as in Figure \ref{fig_OR1_b}. 

\begin{figure}
\centering
\begin{subfigure}{.25\linewidth}
\centering
\resizebox{\linewidth}{!}{
\begin{tikzpicture}[scale=1.0, shorten >=1pt,auto,node distance=2.8cm, semithick]
                    
\tikzset{vertex/.style = {shape=circle,draw,inner sep=0.4pt}}
 
\node[vertex] (1) at  (0,1) {$O_i$};
\node[vertex] (2) at  (1.5,1) {$O_k$};
\node[vertex] (3) at  (0.75,-0.3) {$O_j$};

\draw[o->,> = latex'] (1) to (3);
\draw[o-o] (2) to (3);
\end{tikzpicture}
}
\caption{}  \label{fig_OR1_a}
\end{subfigure}
\begin{subfigure}{.25\linewidth}
\centering
\resizebox{\linewidth}{!}{
\begin{tikzpicture}[scale=1.0, shorten >=1pt,auto,node distance=2.8cm, semithick]
                    
\tikzset{vertex/.style = {shape=circle,draw,inner sep=0.4pt}}
 
\node[vertex] (1) at  (0,1) {$O_i$};
\node[vertex] (2) at  (1.5,1) {$O_k$};
\node[vertex] (3) at  (0.75,-0.3) {$O_j$};

\draw[o->,> = latex'] (1) to (3);
\draw[o-] (2) to (3);
\end{tikzpicture}
}
\caption{}  \label{fig_OR1_b}
\end{subfigure}

\begin{subfigure}{.25\linewidth}
\centering
\resizebox{\linewidth}{!}{
\begin{tikzpicture}[scale=1.0, shorten >=1pt,auto,node distance=2.8cm, semithick]
                    
\tikzset{vertex/.style = {shape=circle,draw,inner sep=0.4pt}}
 
\node[vertex] (1) at  (0,1) {$O_i$};
\node[vertex] (2) at  (1.5,1) {$O_k$};
\node[vertex] (3) at  (0.75,-0.3) {$O_j$};
\node[vertex] (4) at  (0.75,-1.6) {$O_l$};

\draw[o->,> = latex'] (1) to (3);
\draw[o-] (2) to (3);
\draw[o->, > = latex', bend right] (1) to (4);
\draw[o-, bend right] (4) to (2);
\draw[o-o] (3) to (4);
\end{tikzpicture}
}
\caption{}  \label{fig_OR1_c}
\end{subfigure}
\begin{subfigure}{.25\linewidth}
\centering
\resizebox{\linewidth}{!}{
\begin{tikzpicture}[scale=1.0, shorten >=1pt,auto,node distance=2.8cm, semithick]
                    
\tikzset{vertex/.style = {shape=circle,draw,inner sep=0.4pt}}
 
\node[vertex] (1) at  (0,1) {$O_i$};
\node[vertex] (2) at  (1.5,1) {$O_k$};
\node[vertex] (3) at  (0.75,-0.3) {$O_j$};
\node[vertex] (4) at  (0.75,-1.6) {$O_l$};

\draw[o->,> = latex'] (1) to (3);
\draw[-] (2) to (3);
\draw[o->, > = latex', bend right] (1) to (4);
\draw[-, bend right] (4) to (2);
\draw[-] (3) to (4);
\end{tikzpicture}
}
\caption{}  \label{fig_OR1_d}
\end{subfigure}
\caption{The first part of Rule 1 orients the graph (a) to (b). Note that the edge $O_i \circarrow O_j$ is potentially 2-triangulated w.r.t. $O_k$ in (c), so the second part of Rule 1 cannot fire. However, if we additionally have $O_j - O_k$, then we can orient 3 endpoints with Rule 3 and obtain (d).}
\end{figure}
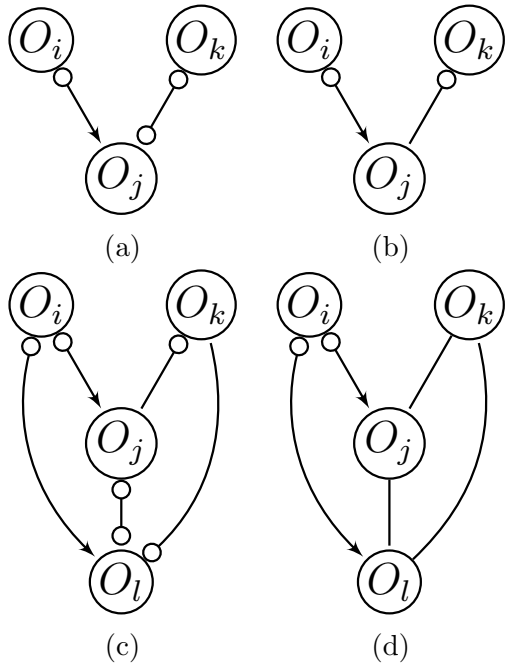

We may also add an arrowhead at $O_k$ in Figure \ref{fig_OR1_b} provided that some additional conditions are met. The following lemma is central to causal discovery with cycles:
\begin{lemma} \label{lem_triang_main}
If we have $O_i * \!\! \rightarrow O_j \text{---} O_k$ with $O_i$ and $O_k$ non-adjacent, then $O_i * \!\! \rightarrow O_j$ is in a triangle involving $O_i,O_j$ and $O_l$ ($l \not = k$) with $O_j \text{---} O_l$ and $O_i * \!\! \rightarrow O_l$. Moreover, there exists a sequence of undirected edges between $O_l$ and $O_k$ that does not include $O_j$.
\end{lemma}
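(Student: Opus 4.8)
\emph{Strategy.} The plan is to pass to the underlying directed graph $\mathbb{G}$, read the two given MAAG edges as statements about ancestry, exhibit a directed cycle through $O_j$ and $O_k$, and then produce the triangle and the undirected trail by surgery on an inducing path.

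\emph{Locating the cycle.} The arrowhead in $O_i * \!\! \rightarrow O_j$ gives $O_j \notin \textnormal{Anc}(O_i \cup \bm{S})$, while the undirected edge $O_j \text{---} O_k$ gives $O_j \in \textnormal{Anc}(O_k \cup \bm{S})$ and $O_k \in \textnormal{Anc}(O_j \cup \bm{S})$. Since $O_j \notin \textnormal{Anc}(\bm{S})$ the first of these forces $O_j \in \textnormal{Anc}(O_k)$; and if $O_k$ were an ancestor of $\bm{S}$ then by transitivity $O_j$ would be too, so in fact $O_k \in \textnormal{Anc}(O_j)$ and $O_k \notin \textnormal{Anc}(\bm{S})$. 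Hence $O_j$ and $O_k$ lie on a common directed cycle $C$ of $\mathbb{G}$, obtained by concatenating a directed path $O_j \rightarrow \cdots \rightarrow O_k$ with a directed path $P_2 \colon O_k \rightarrow \cdots \rightarrow O_j$; moreover $C$ contains no selection variable. Every vertex of $C$ is both an ancestor and a descendant of $O_j$, so combined with $O_j \notin \textnormal{Anc}(O_i \cup \bm{S})$ no vertex of $C$ is an ancestor of $O_i \cup \bm{S}$; in particular $O_i \notin C$.

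\emph{The surgery.} Since $O_i$ and $O_j$ are adjacent, there is an inducing path $\pi$ between them in $\mathbb{G}$, and I would take $\pi$ to be into $O_j$ (using $O_j \notin \textnormal{Anc}(O_i \cup \bm{S})$). Let $O_l$ be the last observable before $O_j$ along $P_2$, so that $P_2$ ends with a directed segment $O_l \rightarrow L_1 \rightarrow \cdots \rightarrow L_c \rightarrow O_j$ whose internal vertices $L_1,\dots,L_c$ are latent. Splice $\pi$ with the reverse of this segment to get a path $\pi'$ between $O_i$ and $O_l$. Then $\pi'$ is again an inducing path: the appended portion contributes only latent non-colliders; $O_j$ becomes a collider, whose collider condition holds because $O_j \in \textnormal{Anc}(O_l)$ (as $O_l$ lies on $C$); and every collider inherited from $\pi$ that was an ancestor of $O_j$ remains an ancestor of the new endpoint $O_l$, again because every vertex of $C$ is an ancestor of $O_l$. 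After pruning any repeated vertices in the usual way, $\pi'$ shows $O_i$ and $O_l$ are adjacent. Consequently $O_l \neq O_k$ (else $O_i$ and $O_k$ would be adjacent); $O_l \notin \textnormal{Anc}(O_i \cup \bm{S})$ since $O_l \in C$, so the edge carries an arrowhead at $O_l$, i.e.\ $O_i * \!\! \rightarrow O_l$; and $O_l$ and $O_j$ are mutually ancestral, so the edge between them is the undirected edge $O_j \text{---} O_l$. This is the asserted triangle on $O_i,O_j,O_l$. Finally, any two observables consecutive along $P_2$ are joined by a directed latent segment, hence by an inducing path, and are mutually ancestral since both lie on $C$; so consecutive observables along $P_2$ are joined by an undirected MAAG edge, and the chain of these edges from $O_k$ up to $O_l$ along $P_2$ is the desired sequence of undirected edges. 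It omits $O_j$, which occurs on $P_2$ only at the far endpoint.

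\emph{Main obstacle.} The step requiring genuine care is ``take $\pi$ into $O_j$''. In the acyclic case a shortest inducing path between $O_i$ and $O_j$ is automatically into $O_j$ once $O_j \notin \textnormal{Anc}(O_i \cup \bm{S})$, but a directed cycle through $O_j$ can in principle reorient even a shortest inducing path, so this needs a separate argument. I expect the cleanest route is a conditioning argument: conditioning on $\bm{O} \setminus (\textnormal{Des}(O_j) \cup \{O_i, O_j\})$ together with $\bm{S}$ blocks every active path between $O_i$ and $O_j$ that would leave $O_j$ --- a collider downstream of $O_j$ on such a path would have all of its descendants inside $\textnormal{Des}(O_j)$ and hence none in the conditioning set --- and one then extracts an inducing path into $O_j$. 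The other, routine, piece of work is to check in detail that the spliced path $\pi'$ meets the collider condition for inducing paths and to dispose of vertex repetitions introduced by the splice.
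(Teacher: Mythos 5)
You have put your finger on the right weak point, but the gap there is fatal rather than technical: the reduction ``take $\pi$ to be into $O_j$'' is not just unproven, it is false in the cyclic setting, and no conditioning argument can rescue it. Take $\bm{O}=\{O_i,O_j,O_k,A\}$, $\bm{L}=\bm{S}=\emptyset$, with directed edges $O_i\rightarrow A$, $A\rightarrow O_j$, $O_j\rightarrow A$, $O_j\rightarrow O_k$, $O_k\rightarrow O_j$. Here $O_i$ and $O_k$ have no inducing path between them, and the MAAG contains $O_i\rightarrow O_j$, $O_j\text{---}O_k$, $O_i\rightarrow A$, $O_j\text{---}A$ and $A\text{---}O_k$, so every hypothesis of the lemma holds; yet the \emph{unique} inducing path between $O_i$ and $O_j$ is $O_i\rightarrow A\leftarrow O_j$, which is out of $O_j$. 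Your conditioning set $\bm{O}\setminus(\textnormal{Des}(O_j)\cup\{O_i,O_j\})$ is empty in this example, and the active path it leaves open, $O_i\rightarrow A\rightarrow O_j$, is indeed into $O_j$ but carries the observable non-collider $A$, so no inducing path into $O_j$ can be ``extracted'' from it --- none exists. Worse, the triangle vertex the lemma demands here is $O_l=A$, which lies on the $2$-cycle through $O_j$ and $A$ but on no directed path from $O_k$ to $O_j$; so your recipe for locating $O_l$ (last observable before $O_j$ on $P_2$) also fails, since that vertex is $O_k$ itself.

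What you have written is essentially the paper's second case --- an inducing path into $O_j$, the same splice with the terminal latent segment of a directed path on the cycle, the same $l\neq k$ argument --- together with a cleaner and more explicit justification of the undirected chain from $O_l$ to $O_k$ than the paper supplies. What is missing is the paper's first case: when the inducing path $\Pi_{O_iO_j}$ is out of $O_j$, one takes the collider $C_1$ on it closest to $O_j$, shows $C_1\in\textnormal{Anc}(O_j)$ (so that $C_1$ and $O_j$ lie on a directed cycle, in general a \emph{different} cycle from the one through $O_k$), and then either finds $O_l$ as the first observable on a directed path $C_1\leadsto O_j$ (splicing $\Pi_{O_iC_1}$ with $C_1\leadsto O_l$ to get the inducing path to $O_l$), or, if $C_1\leadsto O_j$ meets no other observable, produces from it an inducing path into $O_j$ and only then falls back on your argument. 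Until you supply that case, your proof covers only part of the configurations permitted by the hypotheses.
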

\noindent The above statement may appear arcane at first glance, but it justifies multiple orientation rules.

The following definitions are useful towards applying Lemma \ref{lem_triang_main}:

\begin{definition}
(Potentially Undirected Path) A potentially undirected path $\Pi$ exists between $O_i$ and $O_j$ if and only if all endpoints on $\Pi$ are tails or circles.
\end{definition}

\begin{definition}
(Potential 2-Triangulation) The edge $O_i * \!\! - \!\! * O_j$ is said to be potentially 2-triangulated w.r.t. $O_k$ if and only if (1) $O_i, O_j$ and another vertex $O_l$ is in a triangle, (2) we have $O_j \textit{---} O_l$, $O_j \circline O_l$, $O_j \linecirc O_k$ or $O_j \circlinecirc O_l$, (3) we have $O_i * \!\! \rightarrow O_l$ or $O_i * \!\! \linecirc O_l$, and (4) there exists a potentially undirected path between $O_l$ and $O_k$ that does not include $O_j$.
\end{definition}

We now have three orientation rules that utilize the concept of potential 2-triangulation:
\begin{lemma} \label{lem_OR123}
The following orientation rules are sound:
\begin{enumerate} [label=\#\arabic*.]
\setcounter{enumi}{0}
\item If we have $O_i * \!\! \rightarrow O_j \circline \!\! * O_k$ with $O_i$ and $O_k$ non-adjacent, then orient $O_j \circline \!\! * O_k$ as $O_j - \!\! * O_k$. Furthermore, if $O_i * \!\! \rightarrow O_j$ is not potentially 2-triangulated w.r.t. $O_k$, then orient $O_j \linecirc O_k$ as $O_j \rightarrow O_k$.
\item If we have $O_i - \!\! * O_j \circline \!\!* O_k$ with $O_i$ and $O_k$ nonadjacent, and $O_j \circline \!\!* O_k$ is not potentially 2-triangulated w.r.t. $O_i$, then orient $O_j \circline \!\!* O_k$ as $O_j - \!\!* O_k$.
\item Suppose that we have $O_i *\!\! \rightarrow O_j - O_k$ with $O_i$ and $O_k$ nonadjacent, and $O_i *\!\! \rightarrow O_j$ is potentially 2-triangulated w.r.t. $O_k$. If $O_i *\!\!\rightarrow O_j$ can be potentially 2-triangulated w.r.t. $O_k$ using only one vertex $O_l$ in the triangle involve $\{O_i,O_j,O_l\}$, then orient $O_i * \!\! \linecirc O_l$ as $O_i * \!\! \rightarrow O_l$, $O_j \circline \!\! * O_l$ as $O_j - \!\! * O_l$ and/or $O_j * \!\! \linecirc O_l$ as $O_j * \!\! - O_l$. Next, if there exists only one potentially undirected path $\Pi_{O_lO_k}$ between $O_l$ and $O_k$, then substitute all circle endpoints on $\Pi_{O_lO_k}$ with tail endpoints.
\end{enumerate}
\end{lemma}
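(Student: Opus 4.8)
The proof rests on \emph{consistency}: every tail and every arrowhead in the partially oriented graph $\widehat{\mathbb{G}}$ produced by Steps \ref{alg_skeleton}--\ref{alg_step6} already appears with the same mark in the true MAAG, and the skeleton of $\widehat{\mathbb{G}}$ is correct; equivalently, the true MAAG is obtained from $\widehat{\mathbb{G}}$ by turning some circles into tails or arrowheads. Hence any configuration present in the true MAAG \emph{lifts} to $\widehat{\mathbb{G}}$ with some arrowheads and tails possibly relaxed to circles; in particular an undirected edge of the true MAAG lifts to an edge with a tail or circle at each end, and a sequence of undirected edges lifts to a potentially undirected path. This lifting is exactly what converts the structural guarantee of Lemma \ref{lem_triang_main} into a ``potentially 2-triangulated'' witness in $\widehat{\mathbb{G}}$. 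For each rule I will assume the endpoint to be oriented carries the opposite mark in the true MAAG and derive a contradiction (for Rule \#3 I instead directly identify the true triangle).

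For Rule \#1, the first orientation (turning $O_j \circline \!\!* O_k$ into $O_j - \!\!* O_k$) is the orientation rule already established above, and its proof moreover shows $O_j \in \textnormal{Anc}(O_k)$, so the $O_j$-end of the $O_j$--$O_k$ edge is a tail in the true MAAG. For the second orientation, suppose the $O_k$-end of that edge were a tail in the true MAAG; then the true edge is $O_j \text{---} O_k$, and since $O_i * \!\!\rightarrow O_j$ is preserved and $O_i,O_k$ are non-adjacent, Lemma \ref{lem_triang_main} produces a vertex $O_l \neq O_k$ with a triangle on $\{O_i,O_j,O_l\}$, $O_j \text{---} O_l$, $O_i * \!\!\rightarrow O_l$, and a sequence of undirected edges from $O_l$ to $O_k$ avoiding $O_j$. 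Lifting this to $\widehat{\mathbb{G}}$ verifies the four clauses of potential 2-triangulation of $O_i * \!\!\rightarrow O_j$ w.r.t. $O_k$ (with $O_i$ in the arrowhead-into-$O_l$ role and $O_j$ in the wrap-around role), contradicting the hypothesis; hence the $O_k$-end is an arrowhead.

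For Rule \#2, the unshielded triple $O_i - \!\!* O_j \circline \!\!* O_k$ is not a v-structure in $\widehat{\mathbb{G}}$ (there is a circle at $O_j$ on the $O_j$--$O_k$ edge), so after Step \ref{alg_vstruc} we have $O_j \in \textnormal{Sep}(O_i,O_k)$; since skeleton discovery returns minimal separating sets, Lemma \ref{lem_or_1} gives $O_j \in \textnormal{Anc}(\{O_i,O_k\}\cup\bm{S})$. Suppose the $O_j$-end of the $O_j$--$O_k$ edge were an arrowhead in the true MAAG; then $O_j \notin \textnormal{Anc}(O_k\cup\bm{S})$, forcing $O_j \in \textnormal{Anc}(O_i\cup\bm{S})$, hence a tail at the $O_j$-end of the $O_i$--$O_j$ edge, which together with the tail already present at $O_i$ makes the true $O_i$--$O_j$ edge undirected. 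Applying Lemma \ref{lem_triang_main} to $O_k * \!\!\rightarrow O_j \text{---} O_i$ and lifting as before shows the $O_j$--$O_k$ edge is potentially 2-triangulated w.r.t. $O_i$, contradicting the hypothesis; hence the $O_j$-end is a tail.

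For Rule \#3, Lemma \ref{lem_triang_main} applied to $O_i * \!\!\rightarrow O_j \text{---} O_k$ yields an actual triangle vertex $O_{l^{\ast}}\neq O_k$ in the true MAAG with $O_j \text{---} O_{l^{\ast}}$, $O_i * \!\!\rightarrow O_{l^{\ast}}$, and a sequence of undirected edges from $O_{l^{\ast}}$ to $O_k$ avoiding $O_j$; lifting, $O_{l^{\ast}}$ witnesses potential 2-triangulation of $O_i * \!\!\rightarrow O_j$ w.r.t. $O_k$, so by the uniqueness hypothesis $O_l = O_{l^{\ast}}$. Thus the true MAAG contains $O_j \text{---} O_l$ and an arrowhead at $O_l$ on the $O_i$--$O_l$ edge, which licenses orienting $O_i * \!\!\linecirc O_l$ as $O_i * \!\!\rightarrow O_l$, $O_j \circline \!\!* O_l$ as $O_j - \!\!* O_l$, and $O_j * \!\!\linecirc O_l$ as $O_j * \!\!- O_l$; and the true undirected sequence from $O_l$ to $O_k$ lifts to a potentially undirected path, so if $\widehat{\mathbb{G}}$ has only one potentially undirected path $\Pi_{O_l O_k}$ it must be this lift, whence every edge on $\Pi_{O_l O_k}$ is undirected in the true MAAG and its circle endpoints become tails. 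The main obstacle throughout is the bookkeeping in the lifting step: one must check that the particular triangle, the arrowhead incidences, and the avoiding-$O_j$ undirected walk supplied by Lemma \ref{lem_triang_main} each match the weaker, circle-permitting clause of the definition of potential 2-triangulation, and --- for Rule \#3 --- that uniqueness of the witness vertex (and of the path) in $\widehat{\mathbb{G}}$ forces coincidence with the object guaranteed in the true MAAG. The v-structure/minimal-separator detour in Rule \#2 is a second delicate point, required because in the cyclic setting (unlike the acyclic one) a collider between non-adjacent vertices can lie inside a separating set.
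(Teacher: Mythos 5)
Your proposal is correct and follows essentially the same route as the paper's (much terser) proof: the tail orientations come from Lemma \ref{lem_or_1} via the separator membership $O_j \in \textnormal{Sep}(O_i,O_k)$, and the arrowhead/triangle orientations come from Lemma \ref{lem_triang_main} applied in contrapositive, with the ``lifting'' of the true MAAG's configuration to circle-weakened marks in $\widehat{\mathbb{G}}$ supplying the potential-2-triangulation witness. Your write-up merely makes explicit the consistency/lifting bookkeeping and the case analysis for Rule \#2 that the paper leaves implicit.
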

\begin{proof}
The following arguments correspond to their associated orientation rule:
\begin{enumerate} [label=\#\arabic*.]
\setcounter{enumi}{0}
\item The first part follows from Lemma \ref{lem_or_1}. The second part follows by the contrapositive of Lemma \ref{lem_triang_main}.
\item Suppose that we have $O_i - O_j$ and $O_j \leftarrow \!\!* O_k$. But this would contradict Lemma \ref{lem_triang_main}. Suppose instead that we had $O_i \rightarrow O_j$ and $O_j \leftarrow \!\!* O_k$. But the arrowheads give rise to another contradiction because we know that $O_j \in \textnormal{Sep}(O_i,O_k)$, so $O_j \in \textnormal{Anc}(\{O_i,O_k\} \cup \bm{S})$ by Lemma \ref{lem_or_1}.
\item Follows directly from Lemma \ref{lem_triang_main}.
\end{enumerate}
\end{proof}
\noindent For example, $O_i \circarrow O_j$ is not potentially triangulated in Figure \ref{fig_OR1_a} (there are only three variables), so we may orient the endpoint $O_j \linecirc O_k$ as $O_j \rightarrow O_k$ according to the first orientation rule. On the other hand, $O_i \circarrow O_j$ is potentially triangulated w.r.t $O_k$ in Figure \ref{fig_OR1_c}, so we cannot orient the circle endpoint at $O_k$ as an arrowhead. However, if we additionally have $O_j - O_k$, then we can apply Rule 3 to orient $O_j \circlinecirc O_l$ as $O_j - O_l$ and $O_l \circline O_k$ as $O_l - O_k$ to ultimately obtain Figure \ref{fig_OR1_d}.

\subsubsection{Fourth \& Fifth Orientation Rules}

\begin{figure}
\centering
\begin{subfigure}{.7\linewidth}
\centering
\resizebox{\linewidth}{!}{
\begin{tikzpicture}[scale=1.0, shorten >=1pt,auto,node distance=2.8cm, semithick]
                    
\tikzset{vertex/.style = {shape=circle,draw,inner sep=0.4pt}}
 
\node[vertex] (0) at  (-1.6,1) {$O_i$};
\node[vertex] (1) at  (0,1) {$O_j$};
\node[vertex] (2) at  (1.6,1) {$O_k$};
\node[vertex] (5) at  (0,2) {$O_l$};

\tikzset{vertex/.style = {shape=circle,inner sep=0.4pt}}

\node[vertex] (3) at  (1.2,1) {$*$};
\node[vertex] (6) at  (-1.2,1) {$*$};
\node[vertex] (7) at  (-1.26,1.2) {$*$};
\node[vertex] (8) at  (0.31,1.8) {$*$};

\node[vertex] (4) at  (2.5,1) {$\implies$};

\draw[->,> = latex'] (-1.1,1) to (1);
\draw[-] (1) to (1.15,1);
\draw[-] (0.40,1.78) to (2);
\draw[o-] (5) to (-1.18,1.23);

\tikzset{vertex/.style = {shape=circle,draw,inner sep=0.4pt}}
 
\node[vertex] (9) at  (3.4,1) {$O_i$};
\node[vertex] (10) at  (5,1) {$O_j$};
\node[vertex] (11) at  (6.6,1) {$O_k$};
\node[vertex] (14) at  (5,2) {$O_l$};

\tikzset{vertex/.style = {shape=circle,inner sep=0.4pt}}

\node[vertex] (12) at  (6.2,1) {$*$};
\node[vertex] (15) at  (3.8,1) {$*$};
\node[vertex] (16) at  (3.74,1.2) {$*$};
\node[vertex] (17) at  (5.31,1.8) {$*$};

\draw[->,> = latex'] (3.9,1) to (10);
\draw[-] (10) to (6.15,1);
\draw[-] (5.40,1.78) to (11);
\draw[<-,> = latex'] (14) to (3.82,1.23);
\end{tikzpicture}
}
\caption{}  \label{fig_OR4}
\end{subfigure}

\begin{subfigure}{.7\linewidth}
\centering
\resizebox{\linewidth}{!}{
\begin{tikzpicture}[scale=1.0, shorten >=1pt,auto,node distance=2.8cm, semithick]
                    
\tikzset{vertex/.style = {shape=circle,draw,inner sep=0.4pt}}
 
\node[vertex] (0) at  (-1.6,1) {$O_i$};
\node[vertex] (1) at  (0,1) {$O_j$};
\node[vertex] (2) at  (1.6,1) {$O_k$};

\tikzset{vertex/.style = {shape=circle,inner sep=0.4pt}}

\node[vertex] (3) at  (1.2,1) {$*$};
\node[vertex] (6) at  (-0.43,1) {$*$};
\node[vertex] (8) at  (1.37,1.35) {$*$};

\node[vertex] (4) at  (2.5,1) {$\implies$};

\draw[-,> = latex'] (0) to (-0.48,1);
\draw[-] (1) to (1.15,1);
\draw[o-, bend left=60] (0) to (1.35,1.4);

\tikzset{vertex/.style = {shape=circle,draw,inner sep=0.4pt}}
 
\node[vertex] (9) at  (3.4,1) {$O_i$};
\node[vertex] (10) at  (5,1) {$O_j$};
\node[vertex] (11) at  (6.6,1) {$O_k$};

\tikzset{vertex/.style = {shape=circle,inner sep=0.4pt}}

\node[vertex] (12) at  (6.2,1) {$*$};
\node[vertex] (15) at  (4.57,1) {$*$};
\node[vertex] (17) at  (6.37,1.35) {$*$};

\draw[-,> = latex'] (9) to (4.52,1);
\draw[-] (10) to (6.15,1);
\draw[-, bend left=60] (9) to (6.35,1.4);
\end{tikzpicture}
}
\caption{}  \label{fig_OR5}
\end{subfigure}
\caption{Examples of Rules 4 and 5 in (a) and (b), respectively.}
\end{figure}
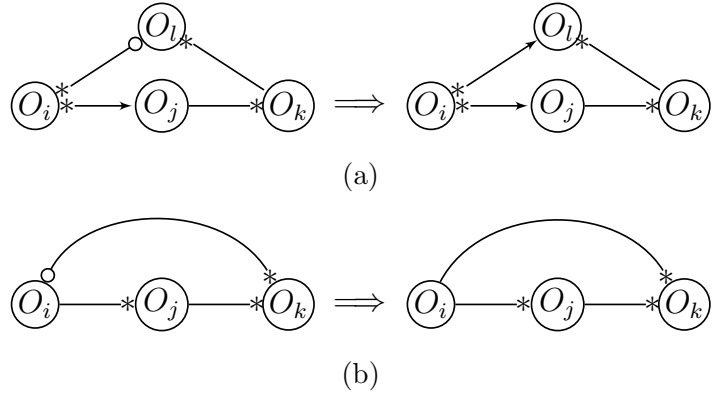

\begin{lemma} \label{lem_OR45}
The following orientation rules are sound:
\begin{enumerate} [label=\#\arabic*.]
\setcounter{enumi}{3}
\item If $O_i *\!\! \rightarrow O_j - \!\! * O_k$, there exists a path $\Pi = \langle O_k, \cdots, O_i \rangle$ with at least $n \geq 3$ vertices such that we have $O_h - \!\! * O_{h+1}$ for all $1 \leq h \leq n-1$ except for only one index $l$ where we have $O_l \circline \!\! * O_{l+1}$, then orient $O_l \circline \!\! * O_{l+1}$ as $O_l \leftarrow \!\! * O_{l+1}$.
\item If we have the sequence of vertices $\langle O_1, \dots, O_n \rangle$ such that $O_i - \!\! * O_{i+1}$ with $ 1 \leq i \leq n-1$, and we have $O_1 \circline \!\! * O_n$, then orient  $O_1 \circline \!\! * O_n$ as $O_1 - \!\! * O_n$.
\end{enumerate}
\end{lemma}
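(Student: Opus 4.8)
The plan is to obtain both orientation rules as immediate consequences of two facts already available in the excerpt: the tail/arrowhead interpretation of endpoints in a MAAG (Section~\ref{sec_MAGs}, inherited by MAAGs in Section~\ref{sec_MAAGs}) --- a tail at $O_b$ on an edge $O_a * \!\! - \!\! * O_b$ means $O_b \in \textnormal{Anc}(O_a \cup \bm{S})$ in $\mathbb{G}$, while an arrowhead at $O_b$ means $O_b \not\in \textnormal{Anc}(O_a \cup \bm{S})$ --- together with a \emph{weak transitivity} of the generalized ancestral relation: if $O_a \in \textnormal{Anc}(O_b \cup \bm{S})$ and $O_b \in \textnormal{Anc}(O_c \cup \bm{S})$, then $O_a \in \textnormal{Anc}(O_c \cup \bm{S})$. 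This weak transitivity follows by a short case split on whether each of the two given relations is an ancestral relation to the next path vertex or to some $S \in \bm{S}$, using that composition of ordinary directed paths stays inside $\textnormal{Anc}(\cdot)$. As in the usual soundness argument for orientation rules, I assume every definite endpoint already placed in the working graph agrees with the MAAG of $\mathbb{G}$ (guaranteed by Steps~\ref{alg_skeleton}--\ref{alg_step6} and Lemmas~\ref{lem_VS1}, \ref{lem_sup_anc2}, \ref{lem_sup_anc1} and~\ref{lem_OR123}), so it suffices to show that the new marks are forced.

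For Rule~\#5: along $\langle O_1, \dots, O_n \rangle$ each edge $O_h - \!\! * O_{h+1}$ carries a tail at $O_h$, hence $O_h \in \textnormal{Anc}(O_{h+1} \cup \bm{S})$ for $1 \leq h \leq n-1$; iterating weak transitivity gives $O_1 \in \textnormal{Anc}(O_n \cup \bm{S})$. An arrowhead at $O_1$ on $O_1 \circline \!\! * O_n$ would assert $O_1 \not\in \textnormal{Anc}(O_n \cup \bm{S})$, a contradiction, so the MAAG of $\mathbb{G}$ has a tail there, which justifies orienting $O_1 \circline \!\! * O_n$ as $O_1 - \!\! * O_n$.

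For Rule~\#4: relabel the vertices of $\Pi$ as $O_1 = O_k, O_2, \dots, O_n = O_i$, with exceptional edge $O_l \circline \!\! * O_{l+1}$, and suppose for contradiction that the MAAG of $\mathbb{G}$ has a tail at $O_l$ on this edge. Then every edge of $\Pi$ carries a tail at its endpoint nearer $O_k$, so $O_h \in \textnormal{Anc}(O_{h+1} \cup \bm{S})$ for all $1 \leq h \leq n-1$, and weak transitivity gives $O_k \in \textnormal{Anc}(O_i \cup \bm{S})$. The tail at $O_j$ in $O_j - \!\! * O_k$ gives $O_j \in \textnormal{Anc}(O_k \cup \bm{S})$, so one further application of weak transitivity yields $O_j \in \textnormal{Anc}(O_i \cup \bm{S})$; but the arrowhead at $O_j$ in $O_i * \!\! \rightarrow O_j$ asserts $O_j \not\in \textnormal{Anc}(O_i \cup \bm{S})$ --- a contradiction. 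Hence the $O_l$-endpoint is an arrowhead, and we may orient $O_l \circline \!\! * O_{l+1}$ as $O_l \leftarrow \!\! * O_{l+1}$.

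The two arguments are almost entirely bookkeeping; the only step that needs real care is the weak transitivity, since $\textnormal{Anc}(O_{h+1} \cup \bm{S})$ is a union and directed-path composition does not automatically chain through selection variables --- one must track, link by link, whether each ancestral relation lands on the next path vertex or on some $S \in \bm{S}$, noting that once a chain reaches $\bm{S}$ it stays there. I expect this to be essentially the only obstacle. I would also sanity-check the degenerate case $n = 3$ of Rule~\#4 and confirm the argument is unaffected if $O_j$ happens to lie on $\Pi$, since it never uses that $\Pi$ avoids $O_j$ or is a simple path between distinct objects.
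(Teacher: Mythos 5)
Your proof is correct and takes essentially the same approach as the paper, whose entire argument is ``transitivity of the tails'' for Rule~\#5 and, for Rule~\#4, the observation that a tail at $O_l$ would force $O_j \in \textnormal{Anc}(O_i \cup \bm{S})$, contradicting the arrowhead $O_i * \!\! \rightarrow O_j$. The only difference is that you spell out the weak transitivity of the relation $O_a \in \textnormal{Anc}(O_b \cup \bm{S})$ (tracking whether each link lands on the next vertex or on $\bm{S}$), a detail the paper leaves implicit.
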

\begin{proof}
The following arguments correspond to their associated orientation rule:
\begin{enumerate} [label=\#\arabic*.]
\setcounter{enumi}{3}
\item Suppose for a contradiction that we had $O_l - \!\! * O_{l+1}$. But then $O_j$ is an ancestor of $O_i \cup \bm{S}$ by transitivity of the tails. 

\item Follows by transitivity of the tail.
\end{enumerate}
\end{proof}
\noindent We provide examples of Rules 4 and 5 in Figures \ref{fig_OR4} and \ref{fig_OR5}, respectively.

\subsubsection{Sixth \& Seventh Orientation Rules}

The CCI algorithm has 2 more orientation rules which require successive applications of the first orientation rule. We first require the following definition:
\begin{definition}
(Non-Potentially 2-Triangulated Path) A path $\Pi = \langle O_1, \dots, O_n \rangle$ is said to be non-potentially 2-triangulated if the following conditions hold:
\begin{enumerate}
\item If $n \geq 3$, then the vertices $O_{i-1}$ and $O_{i+1}$ are non-adjacent for every $2 \leq i \leq n-1$ (i.e., every consecutive triple is non-adjacent), and $O_{i-1} * \!\! - \!\! * O_i * \!\! - \!\! * O_{i+1}$ is a non-v-structure for every $2 \leq i \leq n-1$.
\item If $n \geq 4$, then the vertices $O_{i} *\!\! - \!\! * O_{i+1}$ are not potentially 2-triangulated w.r.t. $O_{i+2}$ for every $1 \leq i \leq n-3$.
\end{enumerate}
\end{definition}

\begin{figure}
\centering
\begin{subfigure}{.5\linewidth}
\centering
\resizebox{\linewidth}{!}{
\begin{tikzpicture}[scale=1.0, shorten >=1pt,auto,node distance=2.8cm, semithick]
                    
\tikzset{vertex/.style = {shape=circle,draw,inner sep=0.4pt}}
 
\node[vertex] (1) at  (0,1) {$O_i$};
\node[vertex] (2) at  (1.6,1) {$O_k$};

\tikzset{vertex/.style = {shape=circle,inner sep=0.4pt}}

\node[vertex] (3) at  (1.2,1) {$*$};
\node[vertex] (4) at  (2.5,1) {$\implies$};

\draw[o-] (1) to (1.15,1);
\draw[->,> = latex', bend left=80,dashed] (1) to (2);

\tikzset{vertex/.style = {shape=circle,draw,inner sep=0.4pt}}
 
\node[vertex] (5) at  (3.4,1) {$O_i$};
\node[vertex] (6) at  (5,1) {$O_k$};

\tikzset{vertex/.style = {shape=circle,inner sep=0.4pt}}

\node[vertex] (7) at  (4.6,1) {$*$};

\draw[-] (5) to (4.55,1);
\draw[->,> = latex', bend left=80,dashed] (5) to (6);
\end{tikzpicture}
}
\caption{}  \label{fig_OR6}
\end{subfigure}

\begin{subfigure}{.54\linewidth}
\centering
\resizebox{\linewidth}{!}{
\begin{tikzpicture}[scale=1.0, shorten >=1pt,auto,node distance=2.8cm, semithick]
                    
\tikzset{vertex/.style = {shape=circle,draw,inner sep=0.4pt}}
 
\node[vertex] (1) at  (0.75,1) {$O_k$};
\node[vertex] (2) at  (-0.2,0) {$O_j$};
\node[vertex] (3) at  (1.7,0) {$O_l$};
\node[vertex] (4) at  (0.75,-1.5) {$O_i$};

\tikzset{vertex/.style = {shape=circle,inner sep=0.4pt}}
\node[vertex] (5) at  (0.75,0.59) {$*$};
\node[vertex] (6) at  (1.05,0.67) {$*$};
\node[vertex] (7) at  (0.45,0.67) {$*$};

\node[vertex] (8) at  (2.6,0) {$\implies$};

\draw[o-] (4) to (0.75,0.5);
\draw[-,> = latex'] (3) to (1.1,0.63);
\draw[-,> = latex'] (2) to (0.4,0.63);
\draw[<-, dashed,> = latex', bend right] (2) to (4);
\draw[<-, dashed,> = latex', bend left] (3) to (4);

\tikzset{vertex/.style = {shape=circle,draw,inner sep=0.4pt}}
 
\node[vertex] (9) at  (4.45,1) {$O_k$};
\node[vertex] (10) at  (3.5,0) {$O_j$};
\node[vertex] (11) at  (5.4,0) {$O_l$};
\node[vertex] (12) at  (4.45,-1.5) {$O_i$};

\tikzset{vertex/.style = {shape=circle,inner sep=0.4pt}}
\node[vertex] (13) at  (4.45,0.59) {$*$};
\node[vertex] (14) at  (4.75,0.67) {$*$};
\node[vertex] (15) at  (4.15,0.67) {$*$};

\draw[-] (12) to (4.45,0.5);
\draw[-,> = latex'] (11) to (4.8,0.63);
\draw[-,> = latex'] (10) to (4.1,0.63);
\draw[<-, dashed,> = latex', bend right] (10) to (12);
\draw[<-, dashed,> = latex', bend left] (11) to (12);

\end{tikzpicture}
}
\caption{}  \label{fig_OR7}
\end{subfigure}
\caption{Rules 6 and 7 in (a) and (b), respectively. Dotted lines indicate non-potentially 2-triangulated paths.}
\end{figure}
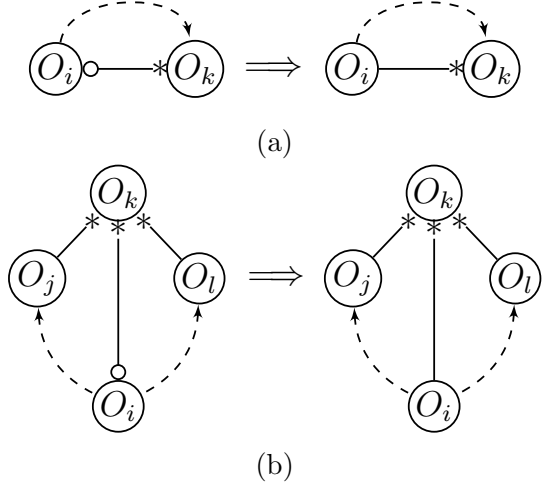

The following orientation rules utilize the above definition:
\begin{lemma} \label{lem_OR67}
The following orientation rules are sound:
\begin{enumerate} [label=\#\arabic*.]
\setcounter{enumi}{5}
\item If we have $O_k * \!\! \linecirc O_i$, there exists a non-potentially 2-triangulated path $\Pi = \langle O_i, O_j, O_l, \dots O_k \rangle$ such that $O_k * \!\! \linecirc O_i$ is not potentially 2-triangulated w.r.t. $O_j$, and $O_j * \!\! - \!\! * O_i * \!\! - \!\! * O_k$ is a non-v-structure, then orient $O_k * \!\! \linecirc O_i$ as $O_k * \!\! - O_i$ (Figure \ref{fig_OR6}).
\item Suppose we have $O_i \circline \!\! * O_k$, $O_j - \!\! * O_k * \!\! - O_l$, a non-potentially 2-triangulated path $\Pi_1$ from $O_i$ to $O_j$, and a non-potentially 2-triangulated path $\Pi_2$ from $O_i$ to $O_l$. Let $O_m$ be a vertex adjacent to $O_i$ on $\Pi_1$ ($O_m$ could be $O_j$), and let $O_n$ be the vertex adjacent to $O_i$ on $\Pi_2$ ($O_n$ could be $O_l$). If further $O_m *\!\! - \!\! * O_i * \!\! - \!\! * O_n$ is a non-v-structure and $O_i \circline \!\! * O_k$ is not potentially 2-triangulated w.r.t. both $O_n$ and $O_m$, then orient $O_i \circline \!\! * O_k$ as $O_i - \!\! * O_k$ (Figure \ref{fig_OR7}).
\end{enumerate}
\end{lemma}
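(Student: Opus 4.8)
The plan is to establish both rules by contradiction, reducing each to a chain of applications of the first orientation rule (Lemma~\ref{lem_OR123}, \#1) run along the relevant non-potentially 2-triangulated path, and then closing with transitivity of the tail (as in Lemma~\ref{lem_OR45}, \#5). I reason throughout in the true MAAG of $\mathbb{G}$, where every endpoint is a definite tail or arrowhead: a tail at the $X$-end of an edge $\{X,Y\}$ records $X \in \textnormal{Anc}(Y \cup \bm{S})$, an arrowhead there records $X \notin \textnormal{Anc}(Y \cup \bm{S})$, adjacencies agree with the partially oriented graph produced by Steps~\ref{alg_skeleton}--\ref{alg_step6}, and every unshielded non-v-structure of that graph is still a non-v-structure in the MAAG because v-structure discovery (Step~\ref{alg_vstruc}) has already been run. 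I will also use that if the true MAAG contains the triangulating configuration of Lemma~\ref{lem_triang_main}, then that configuration also witnesses potential 2-triangulation in the partially oriented graph, since every definite tail/arrowhead satisfies the corresponding circle-allowing endpoint disjunctions in the definition.

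For Rule~\#6, assume toward a contradiction that the endpoint at $O_i$ on $O_k * \!\! \linecirc O_i$ is an arrowhead, i.e.\ $O_k * \!\! \rightarrow O_i$ and $O_i \notin \textnormal{Anc}(O_k \cup \bm{S})$. Write $\Pi = \langle O_i, O_j = Q_2, Q_3, \dots, Q_n = O_k \rangle$. \emph{Base step:} since $\langle O_j, O_i, O_k \rangle$ is a non-v-structure, $O_i$ is (after Step~\ref{alg_vstruc}) not a collider there in the MAAG, so the arrowhead $O_k * \!\! \rightarrow O_i$ forces a tail at $O_i$ on $\{O_i,O_j\}$, giving $O_i \in \textnormal{Anc}(O_j \cup \bm{S})$; and because $O_k * \!\! \linecirc O_i$ is not potentially 2-triangulated w.r.t.\ $O_j$, the contrapositive of Lemma~\ref{lem_triang_main} upgrades this to a directed edge $O_i \rightarrow O_j$ (were it undirected, $O_k \to O_i - O_j$ with $O_j,O_k$ non-adjacent would exhibit exactly such a 2-triangulation; and if $O_j$ and $O_k$ are adjacent, the triangle $\{O_i,O_j,O_k\}$ gives either that 2-triangulation or, if the $\{O_j,O_k\}$ edge has a tail at $O_j$, directly $O_i \in \textnormal{Anc}(O_k \cup \bm{S})$, already a contradiction). \emph{Inductive step:} given an arrowhead $Q_{t-1} * \!\! \rightarrow Q_t$, the consecutive triple $\langle Q_{t-1}, Q_t, Q_{t+1} \rangle$ is unshielded and a non-v-structure (condition~1 of a non-potentially 2-triangulated path) and $Q_{t-1} * \!\! - \!\! * Q_t$ is not potentially 2-triangulated w.r.t.\ $Q_{t+1}$ (condition~2), so the first part of Rule~\#1 puts a tail at $Q_t$ on $\{Q_t, Q_{t+1}\}$ and its second part upgrades it to $Q_t \rightarrow Q_{t+1}$ (for the final edge only the tail is needed). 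Reading the resulting chain of tails along $\Pi$ and applying transitivity of the tail gives $O_i \in \textnormal{Anc}(O_k \cup \bm{S})$, contradicting $O_k * \!\! \rightarrow O_i$; hence the endpoint at $O_i$ is a tail, and we may orient $O_k * \!\! \linecirc O_i$ as $O_k * \!\! - O_i$.

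Rule~\#7 is the forked version of the same argument. Assume for a contradiction $O_k * \!\! \rightarrow O_i$, so $O_i \notin \textnormal{Anc}(O_k \cup \bm{S})$. Since $\langle O_m, O_i, O_n \rangle$ is a non-v-structure, $O_i$ is not a collider on it, hence at least one of $\{O_i,O_m\}$, $\{O_i,O_n\}$ carries a tail at $O_i$; without loss of generality it is $\{O_i,O_m\}$, so $O_i \in \textnormal{Anc}(O_m \cup \bm{S})$ (the other case is symmetric, with $\Pi_2$, $O_n$, $O_l$ in place of $\Pi_1$, $O_m$, $O_j$). Because $O_i \circline \!\! * O_k$ is not potentially 2-triangulated w.r.t.\ $O_m$, the base-step reasoning of Rule~\#6 upgrades this to $O_i \rightarrow O_m$ (the shielded-triangle subcase handled as above). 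Now run the inductive propagation of Rule~\#6 along $\Pi_1$, whose consecutive triples satisfy conditions~1 and 2, to obtain a chain of tails from $O_i$ to $O_j$ and hence $O_i \in \textnormal{Anc}(O_j \cup \bm{S})$; since the hypothesis $O_j - \!\! * O_k$ gives $O_j \in \textnormal{Anc}(O_k \cup \bm{S})$, transitivity of the tail yields $O_i \in \textnormal{Anc}(O_k \cup \bm{S})$, a contradiction (the symmetric branch finishes via $O_k * \!\! - O_l$, i.e.\ $O_l \in \textnormal{Anc}(O_k \cup \bm{S})$). Therefore the endpoint at $O_i$ on $O_i \circline \!\! * O_k$ is a tail.

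The propagation itself is routine once set up; the main obstacle will be the bookkeeping at the ``seams.'' First, the base step must still go through when one of the triples or triangles it invokes is \emph{shielded}, so that Lemma~\ref{lem_triang_main} does not apply verbatim and the needed ancestral relation must be read off directly from the orientation of the chord---this is the delicate case split above, and it has to be carried out both for Rule~\#6 and, mutatis mutandis, inside Rule~\#7. Second, one must keep careful track of which ``non-v-structure'' and ``not potentially 2-triangulated'' hypotheses refer to the partially oriented graph and which to the true MAAG, and verify in each use that the implication survives the transfer (for non-v-structures via Step~\ref{alg_vstruc}; for 2-triangulation via ``definite endpoint $\Rightarrow$ circle-allowing disjunction''). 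A smaller point to check is that condition~2 of a non-potentially 2-triangulated path ranges over exactly the indices needed to upgrade every intermediate tail to a directed edge, while the last edge of the path is used only through its tail.
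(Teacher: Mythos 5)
Your proof is correct and follows essentially the same route as the paper's: assume for contradiction an arrowhead at $O_i$, iteratively apply the first orientation rule (via Lemma \ref{lem_or_1} and the contrapositive of Lemma \ref{lem_triang_main}) along the non-potentially 2-triangulated path(s), and derive $O_i \in \textnormal{Anc}(O_k \cup \bm{S})$ by transitivity of the tails, contradicting the assumed arrowhead. The paper's own proof is only a one-sentence sketch of this induction, so your version simply supplies the base case, the index bookkeeping for condition 2 of the path definition, and the shielded-triple case that the paper leaves implicit.
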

\begin{proof}
The following arguments apply to their corresponding orientation rules:
\begin{enumerate} [label=\#\arabic*.]
\setcounter{enumi}{5}
\item Suppose for a contradiction that we have $O_i \leftarrow \!\! * O_k$. Then we can iteratively apply the first orientation rule on $\Pi$ until the transitivity of the added tails contradicts the arrowhead at $O_i$.

\item Suppose for a contradiction that we have $O_i \leftarrow \!\! * O_k$. Then we iteratively apply the first orientation rule along $\Pi_1$ or $\Pi_2$ (or both). In any case, $O_i \in \textnormal{Anc}(O_k \cup \bm{S})$ by transitivity of the added tails which contradicts the arrowhead at $O_i$.
\end{enumerate}
\end{proof}

\section{Experiments} \label{sec_exp}

We now report the empirical results.

\subsection{Synthetic Data}

We generated 1000 random Gaussian directed cyclic graphs (directed graphs with at least one cycle) with an expected neighborhood size $\mathbb{E}(N)=2$ and $p=20$ vertices using the following procedure. First, we generated a random adjacency matrix $B$ with independent realizations of Bernoulli$(\mathbb{E}(N)/(2p$ $-2))$ random variables in the off-diagonal entries. We then replaced the non-zero entries in $B$ with independent realizations of Uniform($[-1,-0.1]\cup[0.1,1]$) random variables. 
We can interpret a nonzero entry $B_{ij}$ as an edge from $X_i$ to $X_j$ with
coefficient $B_{ij}$ in the following linear model:
\begin{equation}
\begin{aligned}
&X_i = \sum_{r=1}^{p} B_{ir}X_r + \varepsilon_i,
\end{aligned}
\end{equation}
for $i = 1, \dots , p$ where $\varepsilon_1, . . ., \varepsilon_p$ are mutually independent $\mathcal{N}(0, 1)$ random variables. The variables $X_1,\dots,$ $X_p$ then have a multivariate Gaussian distribution with mean vector $0$ and covariance matrix $\Sigma = (\mathbb{I} - B)^{-1}(\mathbb{I} - B)^{-T}$, where $\mathbb{I}$ is the $p \times p$ identity matrix.

We similarly generated 1000 random Gaussian DAGs with the same parameters but created each random adjacency matrix $B$ with independent realizations of Bernoulli$(\mathbb{E}(N)/(p-1))$ random variables in the lower triangular and off-diagonal entries \citep{Colombo12}. 

We introduced latent and selection variables into each DCG and DAG as follows. We first randomly selected a set of 0-3 latent common causes $\bm{L}$ without replacement. We then selected a set of 0-3 selection variables $\bm{S}$ without replacement from the set of vertices $\bm{X} \setminus \bm{L}$ with at least two parents. 

We ultimately created datasets with sample sizes of 500, 1000, 5000, 10000, 50000 and 100000 for each of the 1000 DCGs and each of the 1000 DAGs. We therefore generated a total of $1000 \times 6 \times 2 = 12000$ datasets.

\subsection{Algorithms}

We compared the following four CB algorithms. We also list each algorithm's assumptions:
\begin{enumerate}
\item CCI: acyclic or cyclic with linear SEM-IE
\item FCI: acyclic
\item RFCI: acyclic
\item CCD: acyclic or cyclic with linear SEM-IE, no latent variables\footnote{CCD cannot handle selection bias as proposed in \citep{Richardson99}, but the algorithm may be able to if we modify the proofs.}
\end{enumerate}
All algorithms additionally assume d-separation faithfulness. Only CCI remains sound under CLS. We ran all algorithms using Fisher's z-test with $\alpha$ set to 1E-2 for sample sizes 500 and 1000, 1E-3 for 5000 and 10000, and 1E-4 otherwise. Recall that we require decreasing p-values with increasing sample sizes in order to ensure consistency \citep{Kalisch07,Colombo12}.

\subsection{Metrics}
We assessed the algorithms using the structural Hamming distance (SHD) \citep{Tsamardinos06} to the corrected oracle graphs. We construct the corrected oracle graph as follows. First, we run an algorithm with a CI oracle to obtain the oracle graph. Then, we replace any incorrect arrowhead with a tail and vice versa. For example, if we have $O_i * \!\! \rightarrow O_j$ in the oracle graph, but $O_j \in \textnormal{Anc}(O_i \cup \bm{S})$, then we replace $O_i * \!\! \rightarrow O_j$ with $O_i * \!\! - O_j$. An algorithm which is sound will always output an oracle graph which does not require correction, if the algorithm's assumptions are satisfied.

\begin{figure*}
\centering
\begin{subfigure}{0.49\textwidth}
  \centering
  \includegraphics[width=1\linewidth]{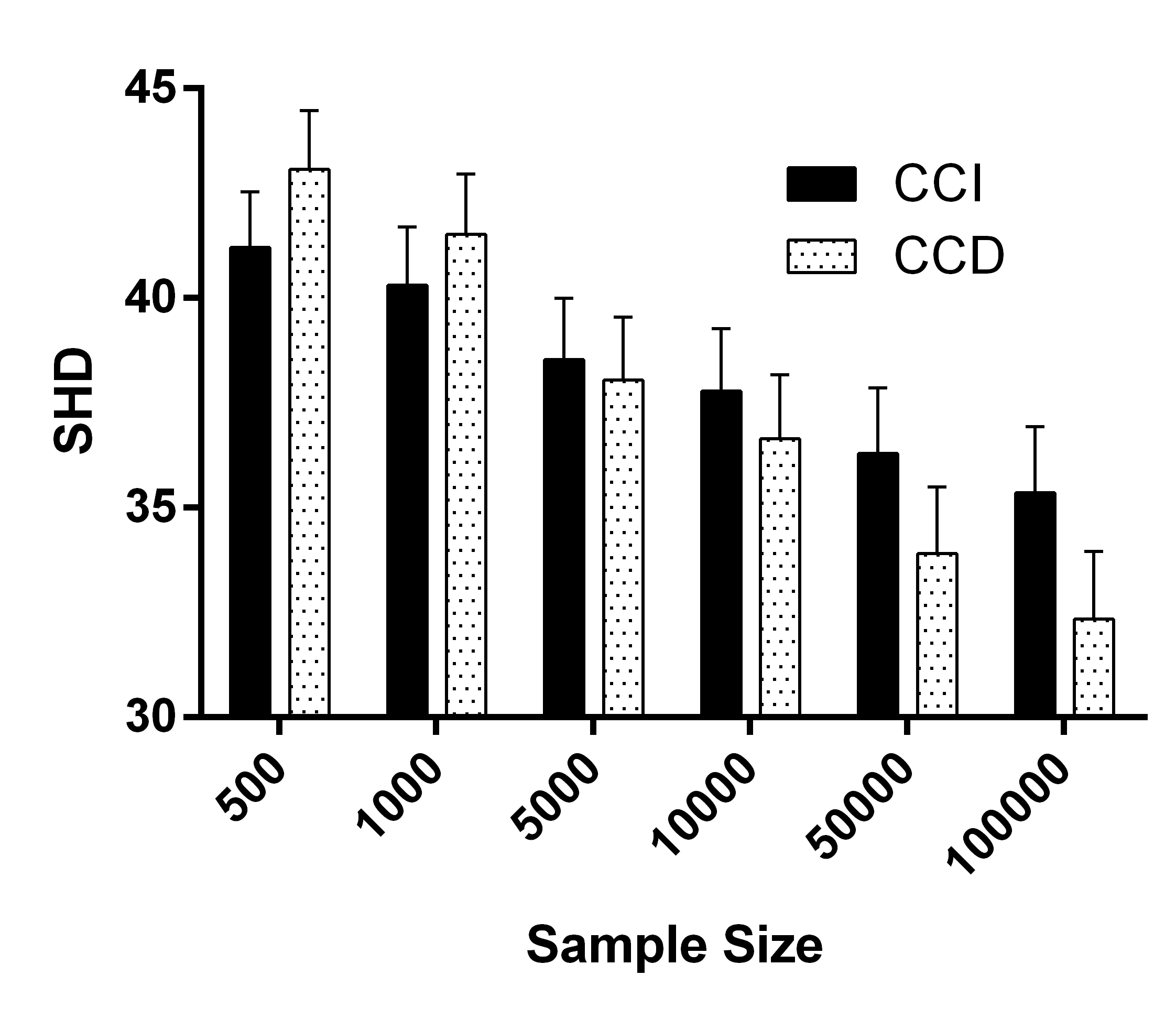}
  \caption{}
  \label{fig_cyc:SHD_cyc}
\end{subfigure}
\begin{subfigure}{0.49\textwidth}
  \centering
  \includegraphics[width=1\linewidth]{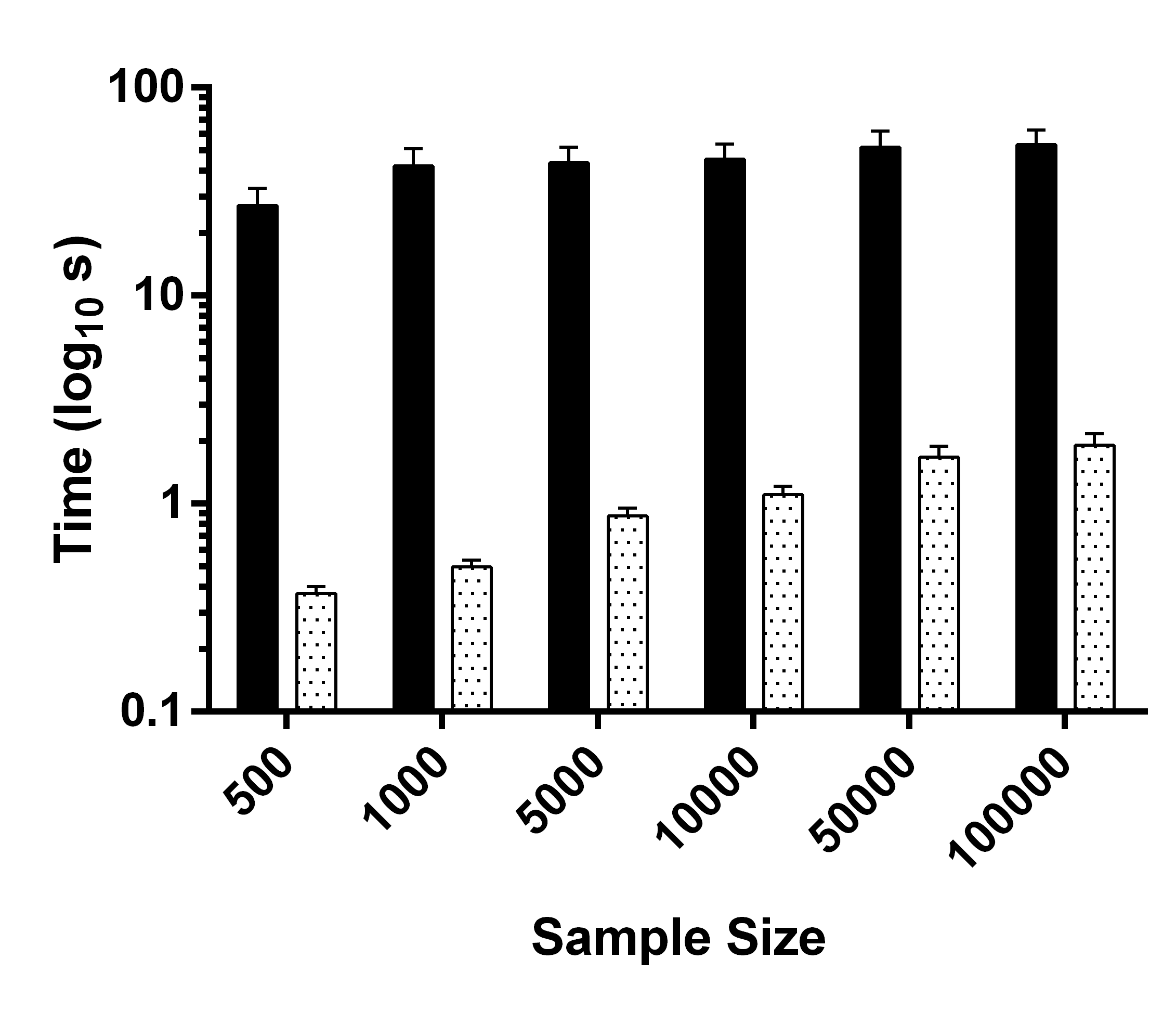}
  \caption{}
  \label{fig_cyc:time}
\end{subfigure}

\begin{subfigure}{0.49\textwidth}
  \centering
  \includegraphics[width=1\linewidth]{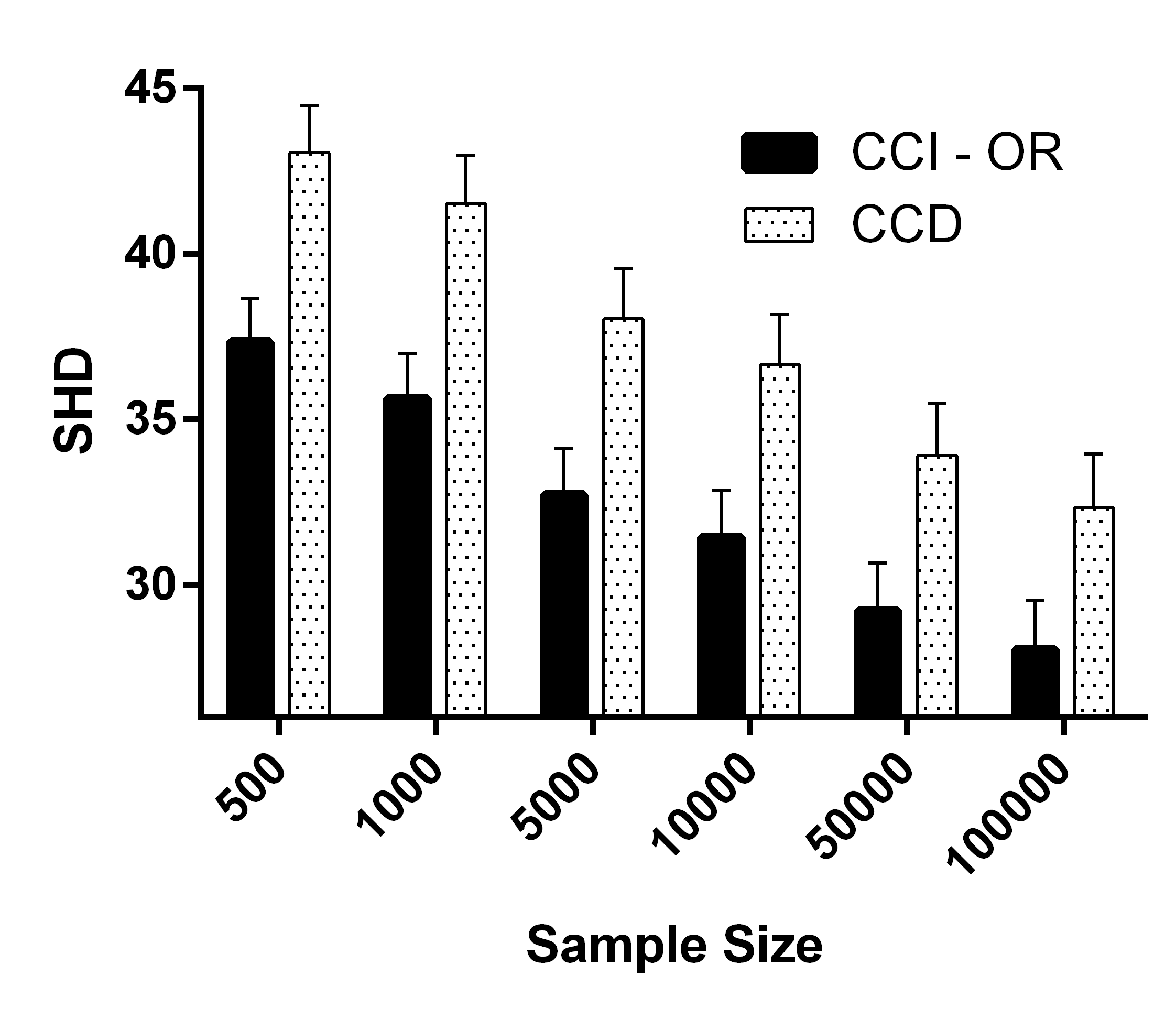}
  \caption{}
  \label{fig_cyc:SHD_no_or}
\end{subfigure}
\begin{subfigure}{0.49\textwidth}
  \centering
  \includegraphics[width=1\linewidth]{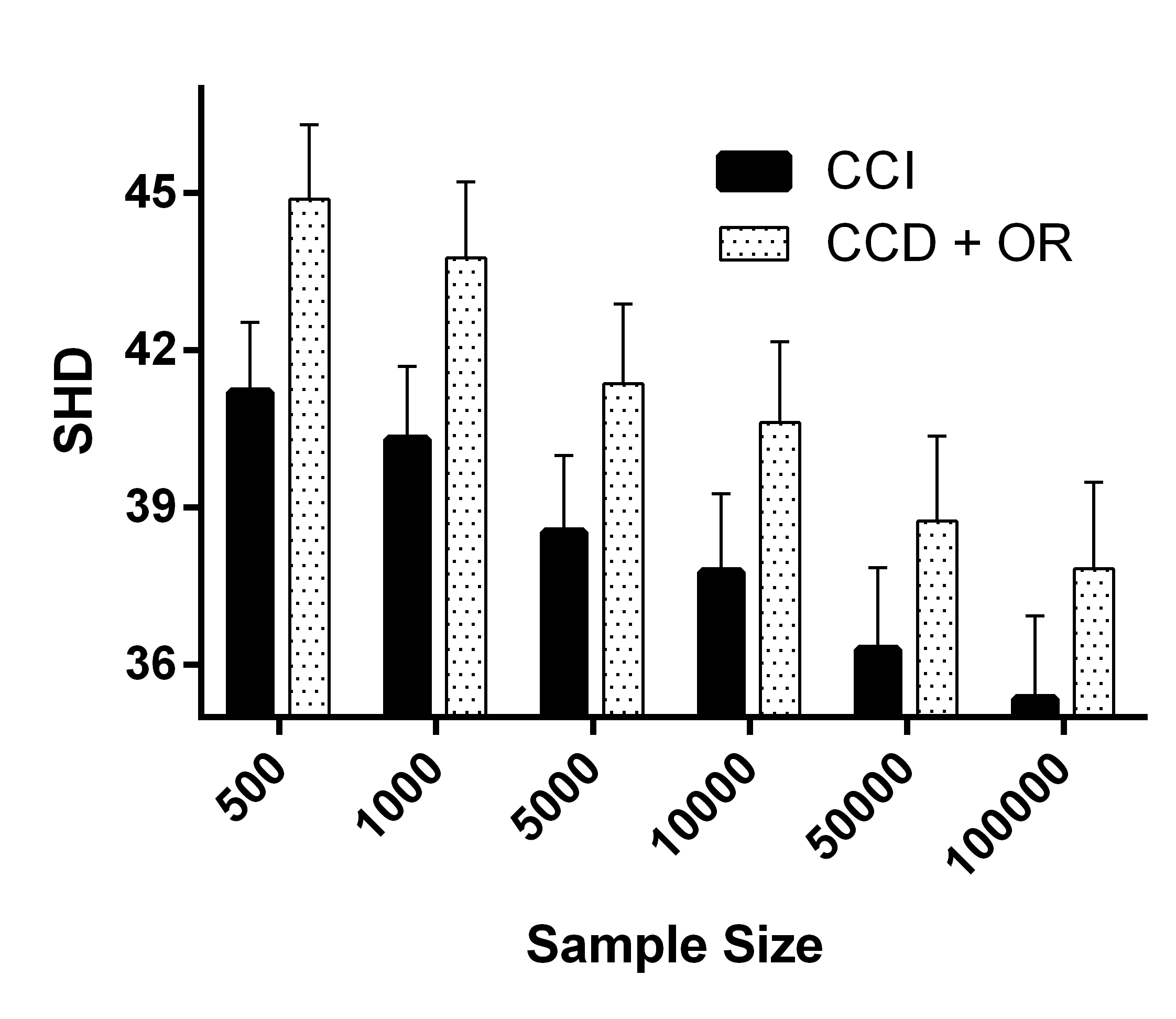}
  \caption{}
  \label{fig_cyc:SHD_or}
\end{subfigure}
\caption{CCI versus CCD on recovering cyclic graphs with latent variables and selection bias. Smaller SHD is better; error bars always denote 95\% confidence intervals of the mean. (a) CCD outperforms CCI on sample sizes $>$1000. (b) CCD also has shorter running times than CCI. However, CCI--OR outperforms CCD in (c). CCI similarly outperforms CCD+OR in (d).} \label{fig_cyc}
\end{figure*}

\subsection{Cyclic Case}

We first compared CCI to CCD in the cyclic case. Here, we hope that CCI will outperform CCD, since CCD cannot handle latent common causes. We have summarized the results in Figure \ref{fig_cyc}. We unexpectedly found that CCD outperformed CCI by a significant margin across the largest four of the 6 sample sizes (Figure \ref{fig_cyc:SHD_cyc}; min t = -2.94, p = 3.34E-3). CCD also completed in a much shorter time frame than CCI (Figure \ref{fig_cyc:time}). Recall however that CCI makes more long range inferences than CCD by applying multiple orientation rules. We therefore also analyzed the performance of CCI with the orientation rules removed, denoted as CCI minus OR (CCI--OR); this comparison pits CCI against CCD on more fair grounds, because CCD does not have orientation rules. Here, we found that CCI--OR outperformed CCD across all sample sizes (max t = -26.64, p$<$2.2E-16; Figure \ref{fig_cyc:SHD_no_or}). We also added the orientation rules of CCI to CCD, which we call CCD plus OR (CCD+OR). CCI again outperformed CCD+OR across all sample sizes (max t = -13.13, p$<$2.2E-16; Figure \ref{fig_cyc:SHD_or}). We conclude that CCI outperforms CCD once we account for the orientation rules.

\subsection{Acyclic Case}

\begin{figure}
\centering
\begin{subfigure}{0.49\textwidth}
  \centering
\includegraphics[width=1\linewidth]{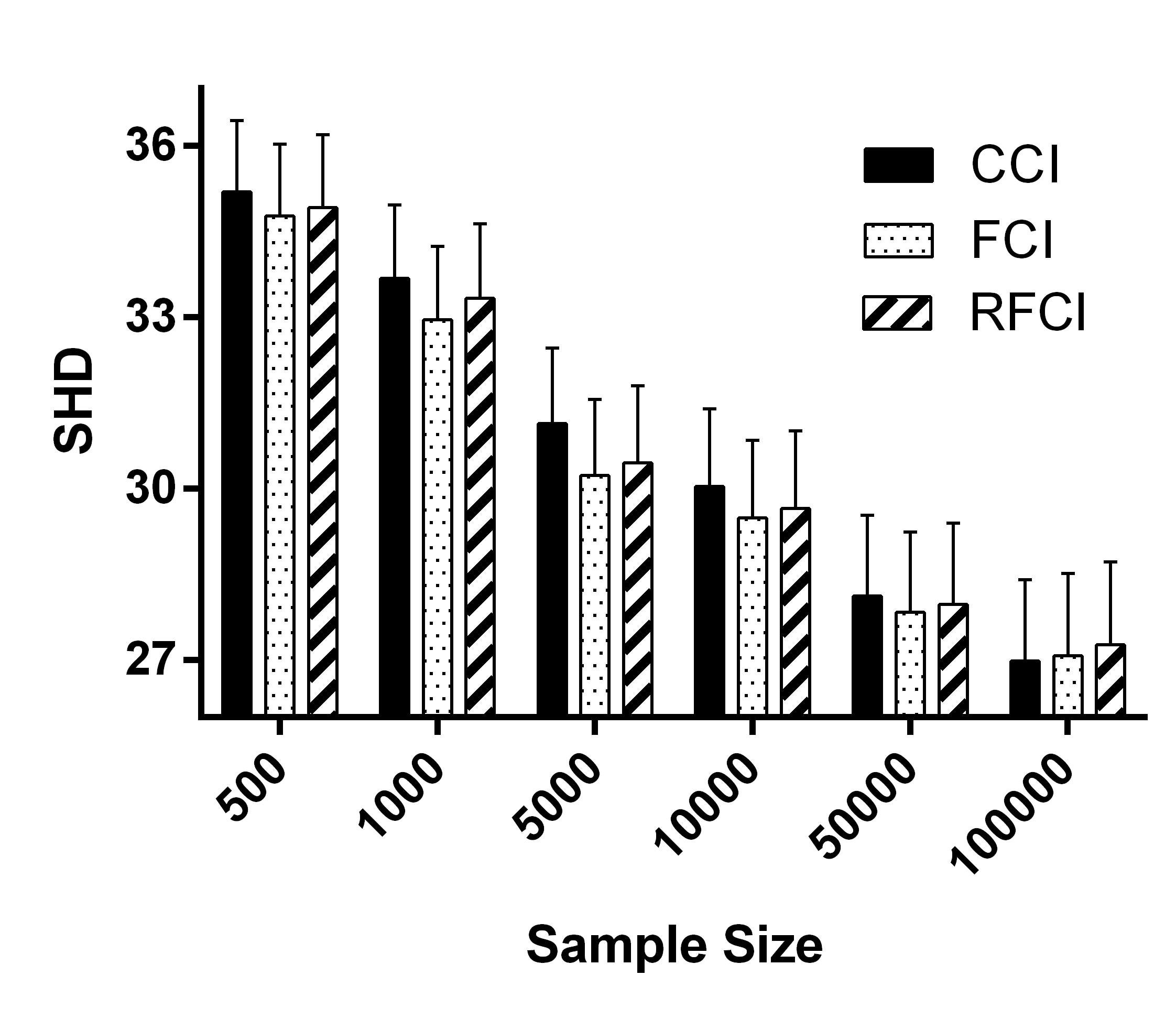}
\caption{} \label{fig_acyc:SHD}
\end{subfigure}
\begin{subfigure}{0.49\textwidth}
  \centering
\includegraphics[width=1\linewidth]{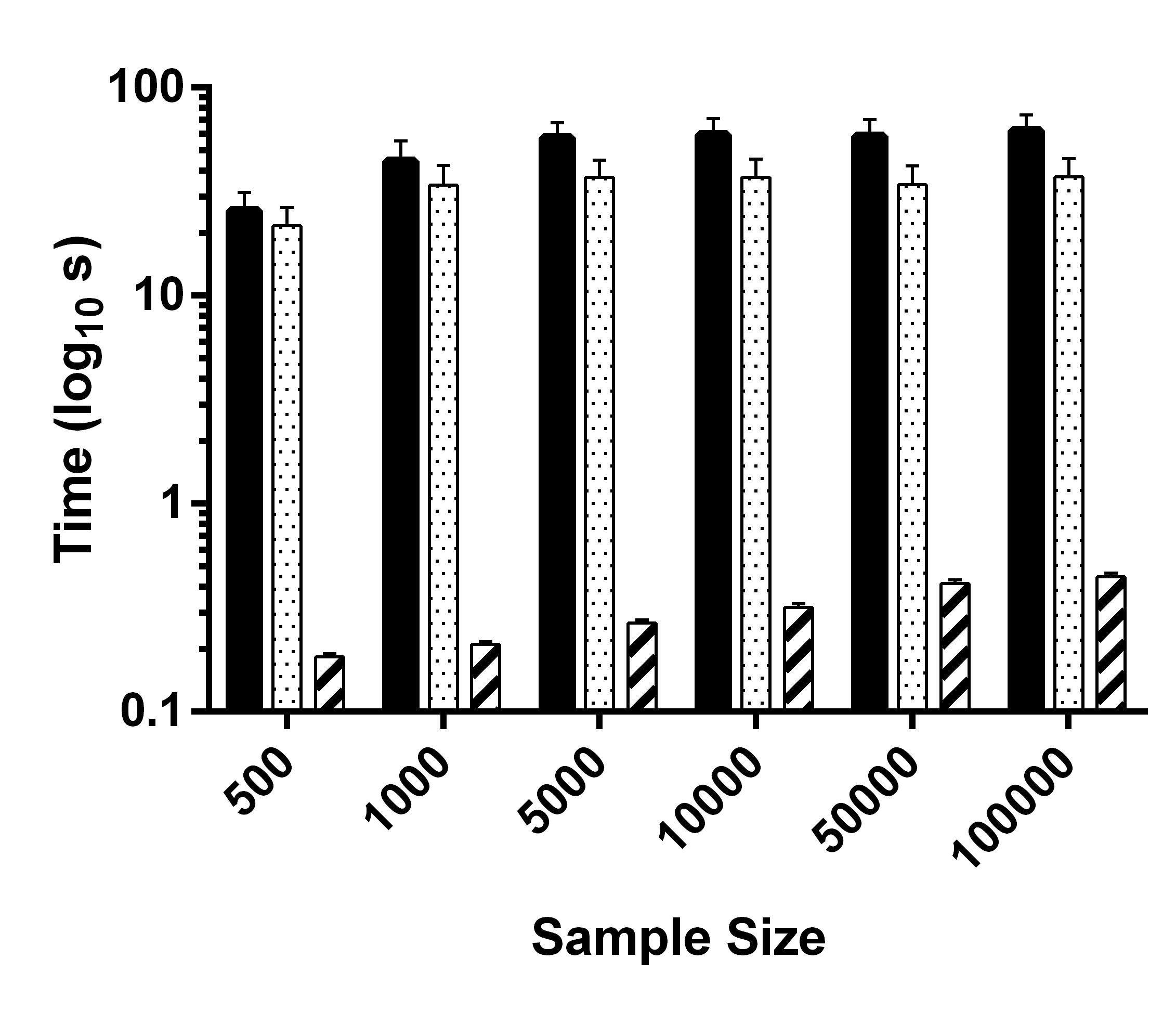}
\caption{} \label{fig_acyc:time}
\end{subfigure}

\begin{subfigure}{0.49\textwidth}
  \centering
\includegraphics[width=1\linewidth]{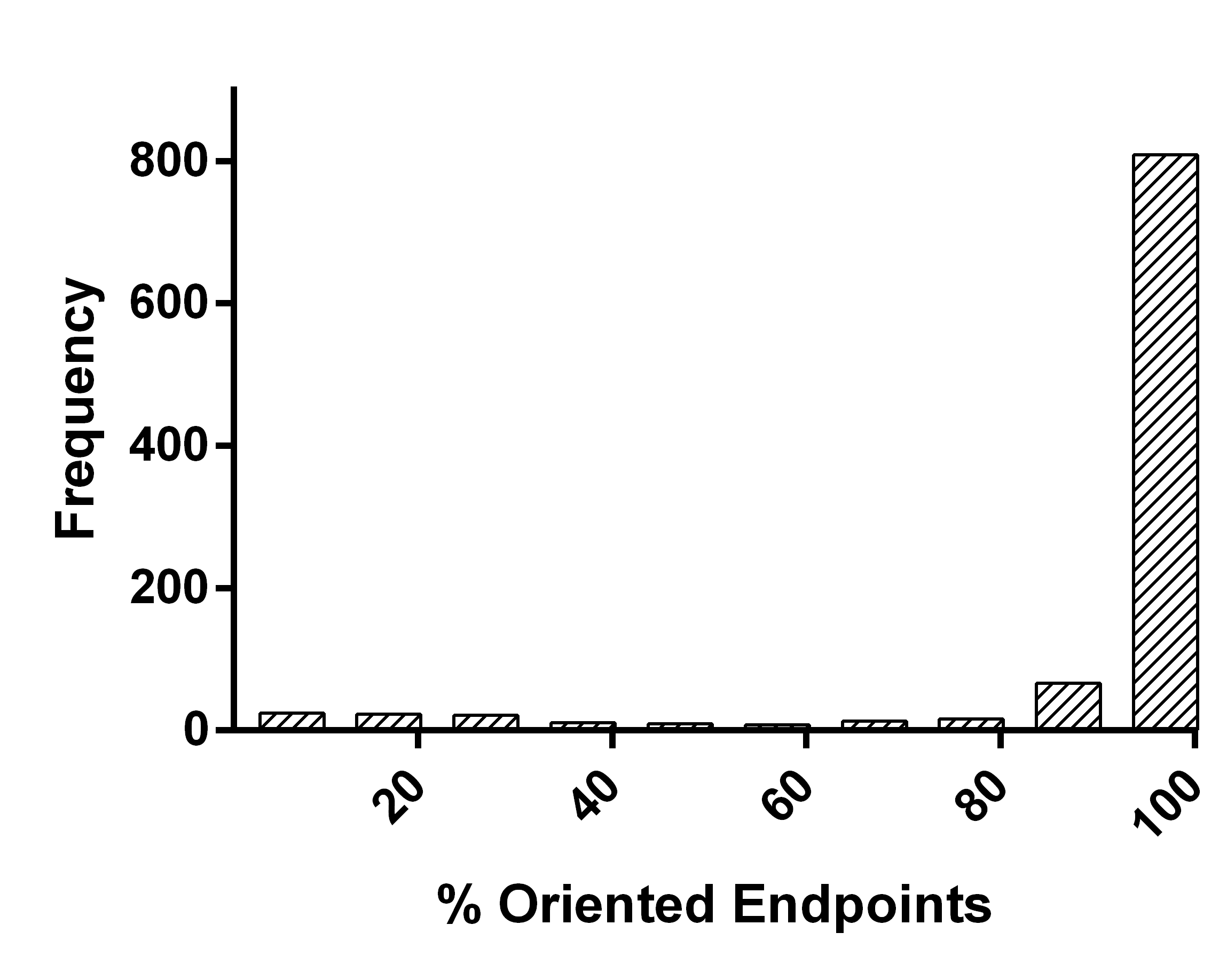}
\caption{}\label{fig_acyc:hist}
\end{subfigure}
\caption{CCI versus FCI and RFCI in recovering acyclic graphs with latent variables and selection bias. (a) FCI and RFCI outperform CCI by a slight margin on sample sizes $\leq 10000$. (b) CCI takes slightly longer to complete than FCI. (c) CCI orients the majority of endpoints oriented by FCI.} 
\label{fig_acyc}
\end{figure}

We next compared CCI to FCI and RFCI in the acyclic case. Here, we expect CCI to perform worse than FCI and RFCI on average, because CCI does not assume acyclicity. However, we hope that CCI will not underperform by a large margin.

We have summarized the SHD results in Figure \ref{fig_acyc:SHD} and the timing results in Figure \ref{fig_acyc:time}. CCI recovered acyclic causal graphs less accurately than FCI by a significant margin with sample sizes $\leq 10000$ (min t = 4.23, p = 2.59E-5). We found no statistically significant difference at larger sample sizes $(p>0.05/6)$. CCI was also outperformed by RFCI with sample sizes between $1000$ to $10000$ (min t = 2.78, p = 5.50E-3). The effect sizes were nonetheless very small; CCI had mean SHDs at most $0.91$ points greater than FCI and RFCI across all sample sizes. We conclude that CCI underperforms FCI and RFCI in the acyclic cause but only by a negligible margin.

We also sought to answer the follow question: how many edges does CCI orient compared to FCI in the acyclic case? It is impossible for CCI to orient 100\% of the endpoints oriented by FCI, because FCI assumes acyclicity whereas CCI does not. We would however ideally like CCI to orient most of the endpoints oriented by FCI in the acyclic case. 

In order to answer the question, we ran the CCI and FCI algorithms on 1000 random DAGs with a CI oracle. CCI oriented 89.98\% (SE: 0.74\%) of the endpoints oriented by FCI on average. Moreover, the histogram of percentages had a heavy left skew (Figure \ref{fig_acyc:hist}), so the median of CCI vs FCI was 100\%. We conclude that CCI orients the majority of endpoints oriented by FCI in the acyclic case.

\subsection{Real Data}

\begin{figure*}
\centering
\begin{subfigure}{0.49\textwidth}
  \centering
  \includegraphics[width=1\linewidth]{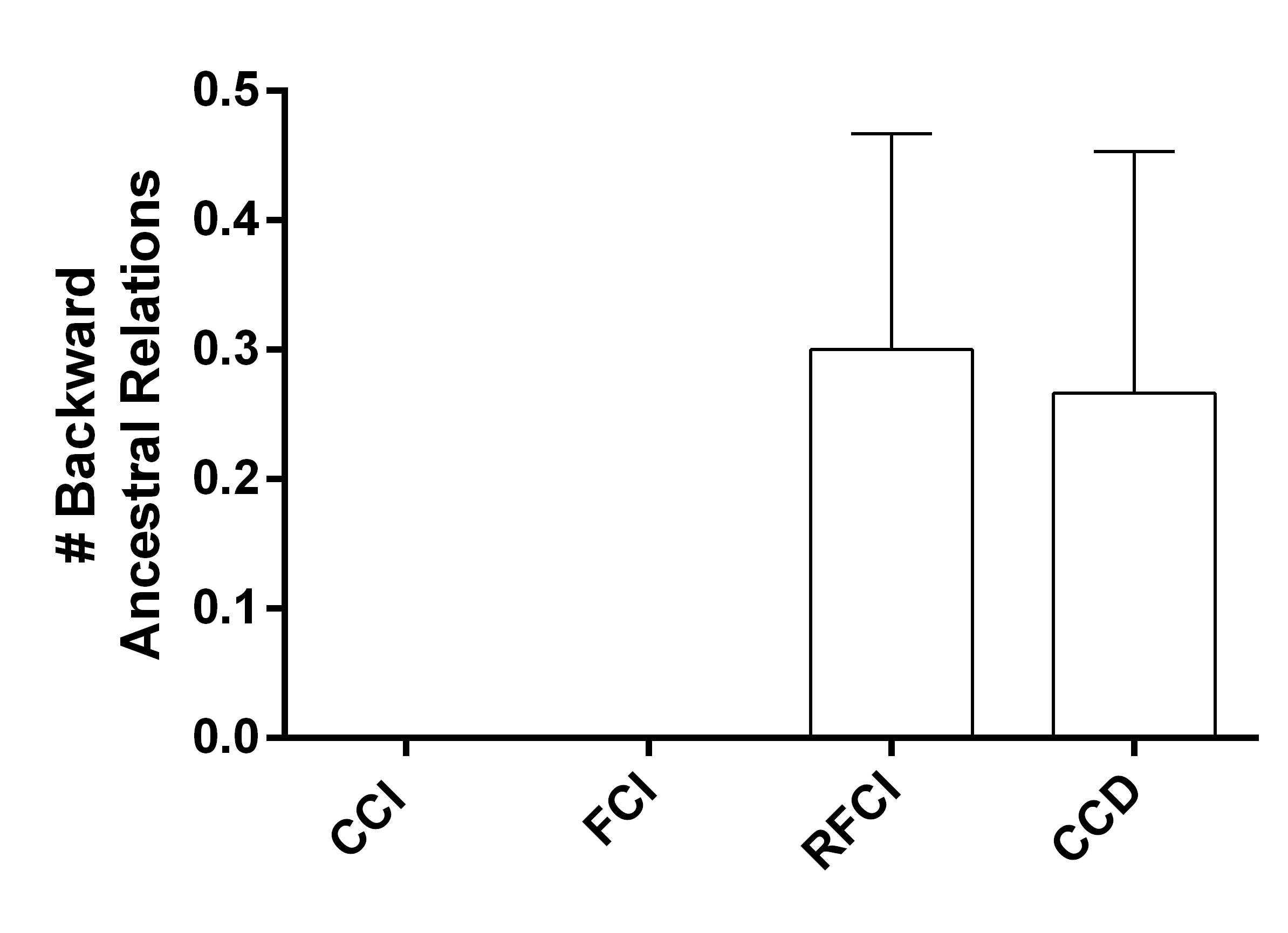}
  \caption{}
  \label{fig_real:anc}
\end{subfigure}
\begin{subfigure}{0.49\textwidth}
  \centering
  \includegraphics[width=1\linewidth]{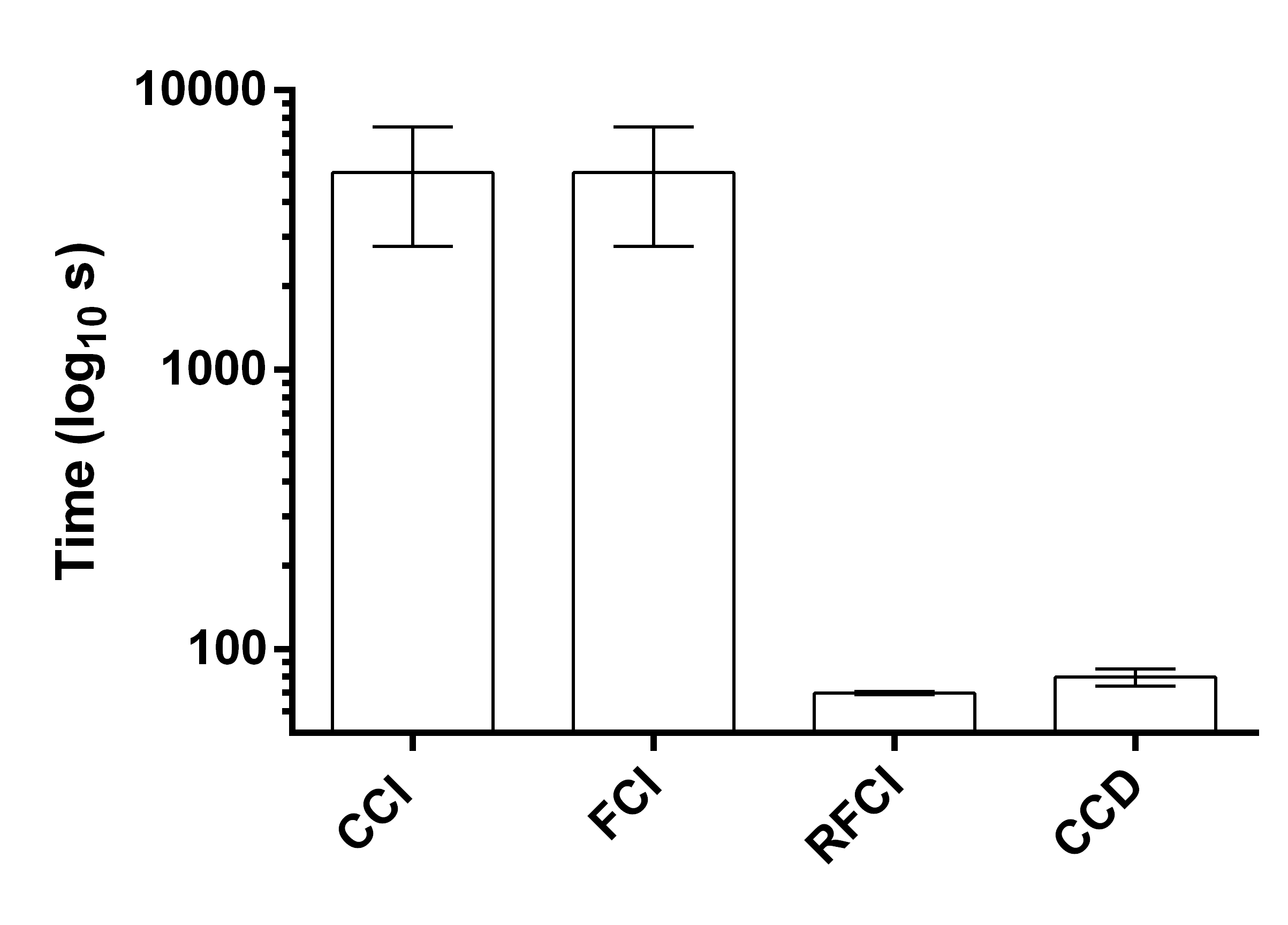}
  \caption{}
  \label{fig_real:time}
\end{subfigure}

\caption{Results of algorithms on real data. (a) FCI and CCI perform the best because they do not discover any backward ancestral relations. (b) However, both of these algorithms take much longer to complete than the others. } \label{fig_real}
\end{figure*}

We finally ran the same algorithms using the nonparametric CI test called RCoT \citep{Strobl17} at $\alpha=0.01$ on a publicly available longitudinal dataset from the Framingham Heart Study \citep{Mahmood14}, where scientists measured a variety of clinical variables related to cardiac health. The dataset contains 28 variables, 3 waves and 2008 samples after performing list-wise deletion. All but 2 variables were measured in all 3 waves.

Note that we do not have access to a gold standard solution set in this case. We can however develop an approximate solution set by utilizing time information because we cannot have ancestral relations directed backwards in time. A variable in wave $b$ thus cannot be an ancestor of a variable in wave $a<b$. In terms of a partially oriented MAAG, this means that any edge between wave $a$ and wave $b$ with both a tail and an arrowhead at a vertex in wave $b$ is incorrect. We thus evaluated the algorithms using the average number of incorrect ancestral relations directed backwards in time.

We have summarized the results in Figure \ref{fig_real:anc} averaged over 30 bootstrapped datasets. Notice that FCI and CCI outperform CCD by a significant margin because FCI and CCI do not make any errors (t=-2.80, p=8.90E-3). Moreover, RFCI performs the worst, highlighting the price one must pay for speed (Figure \ref{fig_real:time}). We conclude that accounting for latent variables allows for more accurate causal discovery on this dataset.

\section{Conclusion} \label{sec_conc}

This report introduced an algorithm called CCI for performing causal discovery with CLS provided that we can represent the cyclic causal process as a linear SEM-IE. As far as I am aware, CCI is the most general CB algorithm proposed to date. The experimental results in the previous section highlight the superior or comparable performance of CCI when compared to previous algorithms that do not allow selection bias, latent variables and/or cycles.

\section*{References}
\bibliography{biblio}

\section{Appendix: Algorithms} \label{sec_algs}

We will utilize ideas developed for the PC, FCI, RFCI and CCD algorithms in order to construct CCI. We therefore briefly review PC, FCI, RFCI and CCD in the next four subsections.

\subsection{The PC Algorithm} \label{sec_PC}

The PC algorithm considers the following problem: assume that $\mathbb{P}$ is d-separation faithful to an unknown DAG $\mathbb{G}$. Then, given oracle information about the conditional independencies between any pair of variables $X_i$ and $X_j$ given any $\bm{W} \subseteq \bm{X}\setminus \{X_i,X_j \}$ in $\mathbb{P}$, reconstruct as much of the underlying DAG as possible. The PC algorithm ultimately accomplishes this goal by reconstructing the DAG up to its \textit{Markov equivalence class}, or the set of DAGs with the same conditional dependence and independence relations between variables in $\bm{X}$ \citep{Spirtes00,Meek95_2}.

The PC algorithm represents the Markov equivalence class of DAGs using a \textit{completed partially directed acyclic graph} (CPDAG). A \textit{partially directed acyclic graph} (PDAG) is a graph with both directed and undirected edges. A PDAG is \textit{completed} when the following conditions hold: (1) every directed edge also exists in every DAG belonging to the Markov equivalence class of the DAG, and (2) there exists a DAG with $X_i \rightarrow X_j$ and a DAG with $X_i \leftarrow X_j$ in the Markov equivalence class for every undirected edge $X_i - X_j$. Each edge in the CPDAG also has the following interpretation:
\begin{enumerate}[label=(\roman*)]
\item An edge (directed or undirected) is absent between two vertices $X_i$ and $X_j$ if and only if there exists some $\bm{W} \subseteq \bm{X}\setminus \{X_i, X_j\}$ such that $X_i \ci X_j | \bm{W}$.
\item If there exists a directed edge from $X_i$ to $X_j$, then $X_i \in \textnormal{Pa}(X_j)$.
\end{enumerate}

\begin{algorithm}[]
 \KwData{CI oracle}
 \KwResult{$\widehat{\mathbb{G}}$, \textnormal{Sep}, $\mathcal{M}$}
 \BlankLine
 
 Form a complete graph $\widehat{\mathbb{G}}$ on $\bm{O}$ with vertices $\circlinecirc$ \\
 $l \leftarrow -1$ \\
 \Repeat{all pairs of adjacent vertices $(O_i, O_j)$ in $\widehat{\mathbb{G}}$ satisfy $|\textnormal{Adj}(O_i)\setminus O_j| \leq l$}{
 Let $l=l+1$ \\
 \Repeat{all ordered pairs of adjacent vertices $(O_i, O_j)$ in $\widehat{\mathbb{G}}$ with $|\textnormal{Adj}(O_i)\setminus O_j| \geq l$ have been considered}{
 \ForAll{vertices in $\widehat{\mathbb{G}}$}{
 	Compute $\textnormal{Adj}(O_i)$
 }
 Select a new ordered pair of vertices $(O_i, O_j)$ that are adjacent in $\widehat{\mathbb{G}}$ and satisfy $|\textnormal{Adj}(O_i)\setminus O_j| \geq l$ \\
 
 \Repeat{$O_i$ and $O_j$ are no longer adjacent in $\widehat{\mathbb{G}}$ or all $\bm{W} \subseteq \textnormal{Adj}(O_i)\setminus O_j$ with $|\bm{W}| = l$ have been considered }{
 
 Choose a new set $\bm{W} \subseteq \textnormal{Adj}(O_i)\setminus O_j$ with $|\bm{W}|=l$ \\
 
 \If{$O_i \ci O_j|\bm{W} \cup \bm{S}$}{
 	Delete the edge $O_i \circlinecirc O_j$ from $\widehat{\mathbb{G}}$ \\
 	Let $\textnormal{Sep}(O_i,O_j) = \textnormal{Sep}(O_j,O_i) = \bm{W}$
 }

 }
 }
 }
 
 Form a list $\mathcal{M}$ of all unshielded triples $\langle O_k,\cdot,O_m \rangle$ (i.e., the middle vertex is left unspecified) in $\widehat{\mathbb{G}}$ with $k < m$

 \BlankLine

 \caption{PC's skeleton discovery procedure} \label{pc_skel}
\end{algorithm}

The PC algorithm learns the CPDAG through a three step procedure. First, the algorithm initializes a fully connected undirected graph and then determines the presence or absence of each undirected edge using the following fact: under d-separation faithfulness, $X_i$ and $X_j$ are non-adjacent if and only if $X_i$ and $X_j$ are conditionally independent given some subset of $\textnormal{Pa}(X_i)\setminus X_j$ or some subset of $\textnormal{Pa}(X_j)\setminus X_i$. Note that PC cannot differentiate between the parents and children of a vertex from its neighbors using an undirected graph. Thus, PC tests whether $X_i$ and $X_j$ are conditionally independent given all subsets of $\textnormal{Adj}(X_i)\setminus X_j$ and all subsets of $\textnormal{Adj}(X_j)\setminus X_i$, where $\textnormal{Adj}(X_i)$ denotes the vertices adjacent to $X_i$ in $\mathbb{G}$ (a superset of $\textnormal{Pa}(X_i)$), in order to determine the final adjacencies; we refer to this sub-procedure of PC as \textit{skeleton discovery} and list the pseudocode in Algorithm \ref{pc_skel}. The PC algorithm therefore removes the edge between $X_i$ and $X_j$ during skeleton discovery if such a conditional independence is found.

Step 2 of the PC algorithm orients unshielded triples to v-structures $X_i \rightarrow X_j \leftarrow X_k$ if $X_j$ is not in the set of variables which rendered $X_i$ and $X_k$ conditionally independent in the skeleton discovery phase of the algorithm. The final step of the PC algorithm involves the repetitive application of three orientation rules to replace as many tails as possible with arrowheads \citep{Meek95_2}.

\subsection{The FCI Algorithm} \label{sec_FCI}

The FCI algorithm considers the following problem: assume that the distribution of $\bm{X} = \bm{O} \cup \bm{L} \cup \bm{S}$ is d-separation faithful to an unknown DAG. Then, given oracle information about the conditional independencies between any pair of variables $O_i$ and $O_j$ given any $\bm{W} \subseteq \bm{O}\setminus \{O_i,O_j \}$ as well as $\bm{S}$, reconstruct as much information about the underlying DAG as possible \citep{Spirtes00}. The FCI algorithm ultimately accomplishes this goal by reconstructing a MAG up to its Markov equivalence class, or the set of MAGs with the same conditional dependence and independence relations between variables in $\bm{O}$ given $\bm{S}$ \citep{Zhang08}.

The FCI algorithm represents the Markov equivalence class of MAGs using a \textit{completed partial maximal ancestral graph} (CPMAG).\footnote{The CPMAG is also known as a partial ancestral graph (PAG). However, we will use the term CPMAG in order to mimic the use of the term CPDAG.} A \textit{partial maximal ancestral graph} (PMAG) is nothing more than a MAG with possibly some circle endpoints. A PMAG is \textit{completed}  (and hence a CPMAG) when the following conditions hold: (1) every tail and arrowhead also exists in every MAG belonging to the Markov equivalence class of the MAG, and (2) there exists a MAG with a tail and a MAG with an arrowhead in the Markov equivalence class for every circle endpoint. Each edge in the CPMAG also has the following interpretations:
\begin{enumerate}[label=(\roman*)]\label{edge_interp_1}
\item An edge is absent between two vertices $O_i$ and $O_j$ if and only if there exists some $\bm{W} \subseteq \bm{O}\setminus \{O_i, O_j\}$ such that $O_i \ci O_j | \bm{W} \cup \bm{S}$. That is, an edge is absent if and only if there does not exist an inducing path between $O_i$ and $O_j$.
\item If an edge between $O_i$ and $O_j$ has an arrowhead at $O_j$, then $O_j \not \in \textnormal{Anc}(O_i \cup \bm{S})$.
\item If an edge between $O_i$ and $O_j$ has a tail at $O_j$, then $O_j \in \textnormal{Anc}(O_i \cup \bm{S})$.
\end{enumerate}

The FCI algorithm learns the CPMAG through a three step procedure involving skeleton discovery, v-structure orientation and orientation rule application. The skeleton discovery procedure involves running PC's skeleton discovery procedure, orienting v-structures using Algorithm \ref{fci_vstruc}, and then re-performing skeleton discovery using possible d-separating sets (see Definition \ref{def_pdsep}) constructed after the v-structure discovery process. FCI then orients v-structures again using Algorithm \ref{fci_vstruc} on the final skeleton. The third step of FCI involves the repetitive application of 10 orientation rules \citep{Zhang08}.

\begin{algorithm}[] \label{fci_vstruc}
 \KwData{$\widehat{\mathbb{G}}$, \textnormal{Sep}, $\mathcal{M}$}
 \KwResult{$\widehat{\mathbb{G}}$}
 \BlankLine
 
 \ForAll{elements $\langle O_i, O_j, O_k \rangle$ in $\mathcal{M}$}{
 	\If{$O_j \not \in \textnormal{Sep}(O_i, O_k)$}{
    	Orient $O_i * \!\! \linecirc O_j \circline \!\! * O_k$ as $O_i* \!\! \rightarrow O_j \leftarrow \!\! * O_k$ in $\widehat{\mathbb{G}}$
    }
 }
 \BlankLine

 \caption{Orienting v-structures} \label{fci_vstruc}
\end{algorithm}

\begin{algorithm}[]
 \KwData{$\widehat{\mathbb{G}}$, Sep}
 \KwResult{$\widehat{\mathbb{G}}$, Sep, $\mathcal{M}$}
 \BlankLine
 
 \ForAll{vertices $O_i$ in $\widehat{\mathbb{G}}$}{
 	Compute $\textnormal{PD-SEP}(O_i)$ \\
    \ForAll{vertices $O_j \in \textnormal{Adj}(O_i)$}{
     Let $l = -1$ \\
 \Repeat{$O_i$ and $O_j$ are no longer adjacent in $\widehat{\mathbb{G}}$ or $|\textnormal{PD-SEP}(O_i)\setminus O_j|<l$ }{
 		Let $l=l+1$ \\
        \Repeat{$O_i$ and $O_j$ are no longer adjacent in $\widehat{\mathbb{G}}$ or all $\bm{W} \subseteq \textnormal{PD-SEP}(O_i)\setminus O_j$ with $|\bm{W}| = l$ have been considered}{
 		Choose a (new) set $\bm{W} \subseteq \textnormal{PD-SEP}(O_i) \setminus {O_j}$ with $|\bm{W}| = l$ \\
         \If{$O_i \ci O_j | \bm{W} \cup \bm{S}$}{
         	Delete edge $O_i * \!\! - \!\! * O_j$ in $\widehat{\mathbb{G}}$\\
            Let \textnormal{Sep}($O_i, O_j$) = \textnormal{Sep}($O_j, O_i$) $= \bm{W}$
         }
 	}
    }
    
    } 
 }
 Reorient all edges in $\widehat{\mathbb{G}}$ as $\circlinecirc$\\
 Form a list $\mathcal{M}$ of all unshielded triples $\langle O_k, \cdot, O_m \rangle$ in $\widehat{\mathbb{G}}$ with $k<m$
 \BlankLine

 \caption{Obtaining the final skeleton in the FCI algorithm}  \label{fci_final_skel}
\end{algorithm} 

\subsection{The RFCI Algorithm}

Discovering inducing paths can require large possible d-separating sets, so the FCI algorithm often takes too long to complete. The RFCI algorithm \citep{Colombo12} resolves this problem by recovering a graph where the presence and absence of an edge have the following modified interpretations:
\begin{enumerate}[label=(\roman*)]\label{edge_interp_2}
\item The absence of an edge between two vertices $O_i$ and $O_j$ implies that there exists some $\bm{W} \subseteq \bm{O}\setminus \{O_i, O_j\}$ such that $O_i \ci O_j | \bm{W} \cup \bm{S}$.
\item The presence of an edge between two vertices $O_i$ and $O_j$ implies that $O_i \not \ci O_j | \bm{W} \cup \bm{S}$ for all $\bm{W} \subseteq \textnormal{Adj}(O_i) \setminus O_j$ and for all $\bm{W} \subseteq \textnormal{Adj}(O_j) \setminus O_i$. Here $\textnormal{Adj}(O_i)$ denotes the set of vertices adjacent to $O_i$ in RFCI's graph.
\end{enumerate}
\noindent We encourage the reader to compare these edge interpretations to the edge interpretations of FCI's CPMAG. 

The RFCI algorithm learns its graph (not necessarily a CPMAG) also through a three step process. The algorithm performs skeleton discovery using PC skeleton discovery procedure (Algorithm \ref{pc_skel}). RFCI then orients v-structures using Algorithm \ref{rfci_vstruc}. Notice that Algorithm \ref{rfci_vstruc} requires more steps than Algorithm \ref{fci_vstruc} used in FCI because an inducing path may not exist between any two adjacent vertices after only running PC's skeleton discovery procedure. RFCI must therefore check for additional conditional dependence relations in order to infer the non-ancestral relations. RFCI finally repetitively applies the 10 orientation rules of FCI in the last step with some modifications to the fourth orientation rule (see \citep{Colombo12} for further details).

\begin{algorithm}[]
 \KwData{Initial skeleton $\widehat{\mathbb{G}}$, Sep, $\mathcal{M}$}
 \KwResult{$\widehat{\mathbb{G}}$, Sep}
 \BlankLine
 
 Let $\mathcal{L}$ denote an empty list \\ \label{rfci_vs_start}
 
 \While{$\mathcal{M}$ is non-empty}{
	 Choose an unshielded triple $ \langle O_i, O_j, O_k \rangle$ from $\mathcal{M}$ \\
     \If{ $O_i \ci O_j | \textnormal{Sep}(O_i,O_k) \cup \bm{S}$ and $O_j \ci O_k | \textnormal{Sep}(O_i,O_k) \cup \bm{S}$}{
     	Add $ \langle O_i, O_j, O_k \rangle$ to $\mathcal{L}$ 
	} \Else{
    	\For{ $r \in \{i,k\}$ }{
        	\If{ $O_r \ci O_j| ( \textnormal{Sep}(O_i,O_k) \setminus O_j ) \cup \bm{S}$}{
            	Find a minimal separating set $\bm{W} \subseteq \textnormal{Sep}(O_i, O_k)$ for $O_r$ and $O_j$ \\
                Let $\textnormal{Sep}(O_r, O_j) = \textnormal{Sep}(O_j, O_r) = \bm{W}$ \\
                Add all triples $\langle O_{\text{min}(r,j)}, \cdot, O_{\text{max}(r,j)} \rangle$ that form a triangle in $\widehat{\mathbb{G}}$ into $\mathcal{M}$ \\
                Delete from $\mathcal{M}$ and $\mathcal{L}$ all triples containing $(O_r,O_j):$ $\langle O_r, O_j,\cdot \rangle$, $\langle O_j, O_r,\cdot \rangle$, $\langle \cdot,O_j, O_r \rangle$ and $\langle \cdot,O_r, O_j \rangle$ \\
                Delete edge $O_r * \!\! - \!\! * O_j$ in $\widehat{\mathbb{G}}$
            }
        
        }
    }
     Remove $\langle O_i, O_j, O_k \rangle$ from $\mathcal{M}$
     }  \label{rfci_vs_end}
    \ForAll{elements $\langle O_i, O_j, O_k \rangle$ of $\mathcal{L}$}{ 
    \If{ $O_j \not \in \textnormal{Sep}(O_i,O_k)$ and both $O_i * \!\! - \!\! * O_j$ and $O_j * \!\! - \!\! * O_k$ are present in $\widehat{\mathbb{G}}$ }{
    	Orient $O_i* \!\! \linecirc O_j \circline \!\! * O_k$ as $O_i* \!\! \rightarrow O_j \leftarrow \!\! * O_k$ in $\widehat{\mathbb{G}}$ 
    }
    }

 \BlankLine

 \caption{Orienting v-structures in the RFCI algorithm}  \label{rfci_vstruc}
\end{algorithm}

\subsection{The CCD Algorithm}

The CCD algorithm considers the following problem: assume that $\mathbb{P}$ is d-separation faithful to an unknown \textit{possibly cyclic} directed graph $\mathbb{G}$. Then, given oracle information about the conditional independencies between any pair of variables $X_i$ and $X_j$ given any $\bm{W} \subseteq \bm{X}\setminus \{X_i,X_j \}$ in $\mathbb{P}$, output a partial oriented MAAG (see Section \ref{sec_MAAGs} for a definition) of the underlying directed graph \citep{Richardson96,Richardson99}. Notice that CCD does not consider latent or selection variables.

The CCD algorithm involves six steps. The first step corresponds to skeleton discovery and is analogous to PC's procedure (Algorithm \ref{pc_skel}). CCD also orients v-structures like PC. The algorithm then however checks for certain long-range d-separation relations in its third step in order to infer additional non-ancestral relations. The fourth step proceeds similarly (but not exactly) to CCI's Step \ref{alg_addD} by discovering additional non-minimal d-separating sets. Finally, the fifth and sixth steps of CCD utilize the aforementioned non-minimal d-separating sets in order to orient additional endpoints. Note that CCD does not apply orientation rules.

\section{Appendix: Proofs} \label{sec_proofs}

In the arguments to follow, I will always consider a directed graph (cyclic or acyclic) with vertices $\bm{X} = \bm{O} \cup \bm{L} \cup \bm{S}$, where $\bm{O}, \bm{L}$ and $\bm{S}$ are disjoint sets.

\subsection{Utility Lemmas}

\begin{lemma} \label{lem_d_conn} (Lemma 2.5 in Colombo et al., 2011)
Suppose that $X_i$ and $X_j$ are not in $\bm{W} \subseteq \bm{X} \setminus \{X_i, X_j\}$, there is a sequence $\sigma$ of distinct vertices in $\bm{X}$ from $X_i$ to $X_j$, and there is a set $\mathcal{T}$ of paths such that:
\begin{enumerate}
\item for each pair of adjacent vertices $X_v$ and $X_w$ in $\sigma$, there is a unique path in $\mathcal{T}$ that d-connects $X_v$ and $X_w$ given $\bm{W}$; 
\item if a vertex $X_q$ in $\sigma$ is in $\bm{W}$, then the paths in $\mathcal{T}$ that contain $X_q$ as an endpoint collide at $X_q$;
\item if for three vertices $X_v$, $X_w$ and $X_q$ occurring in that order in $\sigma$, the d-connecting paths in $\mathcal{T}$ between $X_v$ and $X_w$, and between $X_w$ and $X_q$ collide at $X_w$, then $X_w$ has a descendant in $\bm{W}$.
\end{enumerate}
Then there is a path $\Pi_{X_i X_j}$ in $\mathbb{G}$ that d-connects $X_i$ and $X_j$ given $\bm{W}$. In addition,
if all of the edges in all of the paths in $\mathcal{T}$ that contain $X_i$ are into (out of ) $X_i$, then $\Pi_{X_i X_j}$ is into (out of ) $X_i$, and similarly for $X_j$.
\end{lemma}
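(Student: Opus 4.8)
The plan is to build $\Pi_{X_iX_j}$ by first concatenating the paths in $\mathcal{T}$ along $\sigma$ into a single walk, then checking that this walk is d-connecting ``as a walk'' given $\bm{W}$, and finally extracting an honest d-connecting path from it. Write $\sigma = \langle X_i = U_0, U_1, \dots, U_m = X_j \rangle$ and let $\pi_t \in \mathcal{T}$ be the path d-connecting $U_{t-1}$ and $U_t$ given $\bm{W}$ guaranteed by condition 1. Glue $\pi_1, \dots, \pi_m$ end to end to obtain a walk $\omega$ from $X_i$ to $X_j$ whose edges are exactly the edges of $\pi_1, \dots, \pi_m$.

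Next I would verify that at every non-endpoint occurrence of a vertex on $\omega$, a collider there has a descendant in $\bm{W}$ and a non-collider there is not in $\bm{W}$. Occurrences strictly inside a single $\pi_t$ inherit this property because $\pi_t$ is d-connecting given $\bm{W}$. The only remaining occurrences are the junction vertices $U_t$ with $0 < t < m$, where the last edge of $\pi_t$ meets the first edge of $\pi_{t+1}$: if $U_t \in \bm{W}$, then condition 2 forces these two edges to collide at $U_t$, so $U_t$ is an active collider (it is its own descendant in $\bm{W}$); if $U_t \notin \bm{W}$ and the two edges collide at $U_t$, condition 3 supplies a descendant of $U_t$ in $\bm{W}$; and if they do not collide, $U_t$ is a non-collider not in $\bm{W}$. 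Hence $\omega$ is active as a walk.

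It then remains to pass from an active walk to an active path. Here I would use the standard shortcutting argument: while some vertex $Y$ is repeated on the walk, excise a suitable sub-walk between two of its occurrences and check that the result is still active; iterating eliminates all repetitions and produces a path $\Pi_{X_iX_j}$ that d-connects $X_i$ and $X_j$ given $\bm{W}$. For the ``in addition'' clause, every edge of $\omega$ incident to $X_i$ lies on a path of $\mathcal{T}$ containing $X_i$ and is therefore into (resp.\ out of) $X_i$ by hypothesis; since shortcutting only deletes sub-walks and never reverses edges, $\Pi_{X_iX_j}$ retains one of these edges at $X_i$, and symmetrically at $X_j$.

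The main obstacle is this walk-to-path step. A naive splice between the first and last occurrence of $Y$ can make $Y$ a collider on the new walk even when $Y \notin \bm{W}$ was a non-collider at both occurrences, so one must be careful about which occurrences to splice and in which direction to traverse any residual loop; this is the technical core of Lemma~2.5 in Colombo et al. Everything else --- the junction bookkeeping in the second step and the endpoint-orientation conclusion --- is routine once the ``into/out of'' status of endpoints is tracked consistently.
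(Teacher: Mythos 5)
First, note that the paper does not actually prove this lemma: it is imported verbatim (as ``Lemma 2.5 in Colombo et al., 2011'') and used as a black box, so there is no in-paper proof to compare against. Your outline --- concatenate the paths of $\mathcal{T}$ along $\sigma$ into a walk $\omega$, verify activity at the junction vertices via conditions 2 and 3, then reduce the active walk to an active path --- is the standard route, and your junction bookkeeping is correct and complete (interior occurrences inherit activity from the individual $\pi_t$'s; a junction in $\bm{W}$ is forced to be a collider by condition 2 and is its own descendant in $\bm{W}$; a colliding junction outside $\bm{W}$ gets its descendant from condition 3; a non-colliding junction outside $\bm{W}$ is harmless). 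The endpoint claim also goes through as you say, provided one notes that if $X_i$ recurs on $\omega$ then the post-shortcut initial edge is still an edge of some path in $\mathcal{T}$ incident to $X_i$, hence still into (out of) $X_i$ by hypothesis.

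The genuine gap is the step you yourself flag and then defer: the walk-to-path reduction. Since that is where all the content lies, leaving it as ``the standard shortcutting argument'' with an acknowledged failure mode is not a proof. To close it: take an active walk from $X_i$ to $X_j$ given $\bm{W}$ with the minimum number of edges (starting from your $\omega$, such a minimum exists) and show it is a path. If a vertex $V$ occurs at positions $s<t$, splice out the segment between them; every vertex other than $V$ keeps its collider status, so only $V$ can go wrong. Let $a,b$ be the edges at occurrence $s$ and $c,d$ the edges at occurrence $t$ (so the spliced walk meets $V$ via $a$ and $d$). If $V$ is a non-collider on the spliced walk, then $V$ was a non-collider at occurrence $s$ or $t$ on $\omega$ (otherwise both $a$ and $d$ point into $V$), hence $V\notin\bm{W}$ and the splice is active. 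If $V$ is a collider on the spliced walk and either $b$ or $c$ points into $V$, then $V$ was already an active collider on $\omega$ and has a descendant in $\bm{W}$. The only delicate case is $a,d$ into $V$ with $b,c$ out of $V$: then the excised loop leaves $V$ along a directed edge and re-enters $V$ along a directed edge pointing back toward $V$'s other occurrence, so the loop must contain a collider $C$ reached from $V$ by a directed path; since $C$ is an active collider on $\omega$ it has a descendant in $\bm{W}$, and $V$, being an ancestor of $C$, inherits one. This makes the spliced walk active, contradicting minimality. With that paragraph added, your argument is a complete and self-contained proof of the lemma.
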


\begin{lemma} \label{lem_ancR}
Consider a directed graph with vertices $O_i$ and $O_j$ as well as a set of vertices $\bm{R}$ such that $O_i, O_j \not \in \bm{R}$. Suppose that there is a set $\bm{W} \setminus \{O_i, O_j\}$ such that $\bm{R} \subseteq \bm{W}$ and every proper subset $\bm{V} \subset \bm{W}$ where $\bm{R} \subseteq \bm{V}$ d-connects $O_i$ and $O_j$ given $\bm{V} \cup \bm{S}$. If $O_i$ and $O_j$ are d-separated given $\bm{W} \cup \bm{S}$ where $O_k \in \bm{W}$, then $O_k$ is an ancestor of $\{O_i, O_j\} \cup \bm{R} \cup \bm{S}$.
\end{lemma}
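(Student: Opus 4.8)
The plan is to argue by contradiction and show that the set $\bm{B} := \bm{W} \setminus \textnormal{Anc}(\{O_i, O_j\} \cup \bm{R} \cup \bm{S})$ is empty; since $O_k \in \bm{W}$, this is exactly what the lemma asserts. Two bookkeeping facts should be recorded up front: every vertex is its own ancestor, so $\bm{B}$ is disjoint from $\bm{R}$ and from $\bm{S}$ (in particular $\bm{B} \subseteq \bm{W} \setminus \bm{R}$); and $\textnormal{Anc}(\cdot)$ distributes over unions and is idempotent, even when $\mathbb{G}$ is cyclic, because reachability along directed edges is transitive.

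First I would suppose $\bm{B} \neq \emptyset$ and set $\bm{W}^{\dagger} := \bm{W} \setminus \bm{B} = \bm{W} \cap \textnormal{Anc}(\{O_i, O_j\} \cup \bm{R} \cup \bm{S})$, so that $\bm{R} \subseteq \bm{W}^{\dagger}$ and $\bm{W}^{\dagger}$ is a proper subset of $\bm{W}$. By the hypothesis applied to $\bm{W}^{\dagger}$, the vertices $O_i$ and $O_j$ are d-connected given $\bm{W}^{\dagger} \cup \bm{S}$; fix an active path $q$ between them given $\bm{W}^{\dagger} \cup \bm{S}$. Since $O_i$ and $O_j$ are d-separated given $\bm{W} \cup \bm{S} = (\bm{W}^{\dagger} \cup \bm{S}) \cup \bm{B}$ and $\bm{B}$ is disjoint from $\bm{W}^{\dagger} \cup \bm{S}$, adding $\bm{B}$ to the conditioning set must deactivate $q$. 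Enlarging a conditioning set can never deactivate a collider that is already active, and every collider on $q$ is active given $\bm{W}^{\dagger} \cup \bm{S}$; hence the only possibility is that some non-collider $N$ on $q$ lies in $\bm{B}$.

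The central step is then to contradict $N \in \bm{B}$ by showing $N \in \textnormal{Anc}(\{O_i, O_j\} \cup \bm{R} \cup \bm{S})$. Because $\mathbb{G}$ is a directed graph and $N$ is a non-collider on $q$, at least one edge of $q$ at $N$ points out of $N$; I would follow that edge and trace a directed path $N \rightarrow A_1 \rightarrow A_2 \rightarrow \cdots$ along $q$ toward the corresponding endpoint, noting that at each intermediate vertex that is a non-collider on $q$ the outgoing direction is forced, since the edge just traversed already carries an arrowhead into that vertex. The trace terminates in one of two ways: it reaches an endpoint of $q$, giving $N \in \textnormal{Anc}(\{O_i, O_j\})$; or it first meets a vertex $C$ that is a collider \emph{on $q$}, in which case activeness of $q$ given $\bm{W}^{\dagger} \cup \bm{S}$ supplies a descendant of $C$ in $\bm{W}^{\dagger} \cup \bm{S}$, so $N \in \textnormal{Anc}(C) \subseteq \textnormal{Anc}(\bm{W}^{\dagger} \cup \bm{S})$. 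In either case, combining with $\bm{W}^{\dagger} \subseteq \textnormal{Anc}(\{O_i, O_j\} \cup \bm{R} \cup \bm{S})$ and the distributivity/idempotence of $\textnormal{Anc}(\cdot)$ gives $N \in \textnormal{Anc}(\{O_i, O_j\} \cup \bm{R} \cup \bm{S})$, contradicting $N \in \bm{B}$. Hence $\bm{B} = \emptyset$.

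I expect the delicate point to be the directed-path trace: one must verify that the walk out of $N$ stays a directed path (no repeated vertex, which is automatic since it rides on the path $q$) and is genuinely forced until it either exits $q$ at an endpoint or reaches a vertex that is a collider of $q$ itself — it matters that this $C$ is a collider of $q$ rather than merely of the sub-walk, so that the activeness condition of $q$ can be invoked to place a descendant of $C$ in the conditioning set. The remaining ingredients — that enlarging a conditioning set deactivates a path only through a newly conditioned non-collider, and the set algebra around $\bm{B}$, $\bm{W}^{\dagger}$ and $\bm{S}$ — are routine but worth stating explicitly to keep the contradiction airtight.
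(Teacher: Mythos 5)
Your proof is correct and is essentially the paper's argument in mirror image: the paper proves the contrapositive by taking $\bm{W}^* = \bm{W} \cap \textnormal{Anc}(\{O_i,O_j\}\cup\bm{R}\cup\bm{S})$ (your $\bm{W}^{\dagger}$), invoking the minimality hypothesis to obtain an active path given $\bm{W}^*\cup\bm{S}$, and observing that no non-ancestor of $\{O_i,O_j\}\cup\bm{R}\cup\bm{S}$ can lie on that path, which is exactly your contradiction at the non-collider $N$. The only cosmetic difference is that you prove inline (via the directed-path trace) the standard fact that every vertex on an active path is an ancestor of the endpoints or the conditioning set, which the paper simply cites from the definition of d-connection.
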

\begin{proof}
We will prove the claim by contrapositive. That is, we will prove the following statement: suppose that there is a set $\bm{W} \setminus \{O_i, O_j\}$ and every proper subset $\bm{V} \subset \bm{W}$ where $\bm{R} \subseteq \bm{V}$  d-connects $O_i$ and $O_j$ given $\bm{V} \cup \bm{S}$. If $O_k$ is not an ancestor of $\{O_i, O_j\} \cup \bm{R} \cup \bm{S}$, then $O_i$ and $O_j$ are d-connected given $\bm{W} \cup \bm{S}$ where $O_k \in \bm{W}$.

Let $\bm{W}^* = \textnormal{Anc}(\{O_i, O_j \} \cup \bm{R} \cup \bm{S}) \cap \bm{W}$. Note that $\bm{W}^*$ is a proper subset of $\bm{W}$ because $\bm{W}^*$ is a subset of $\bm{W} \setminus O_k$, so $O_i$ and $O_j$ must be d-connected given $\bm{W}^* \cup \bm{S}$ by a path $\Pi$ by assumption. By the definition of a d-connecting path, we know that every element in $\Pi$ must be an ancestor of $O_i$, $O_j$, $\bm{R}$, $\bm{S}$ or $\bm{W}^*$ (or some union). Moreover, because $\bm{W}^* = \textnormal{Anc}(\{O_i, O_j \} \cup \bm{R} \cup \bm{S}) \cap \bm{W}$, every element in $\bm{W}^*$ is an ancestor of $\{O_i, O_j\} \cup \bm{R} \cup \bm{S}$. Thus every element on the path $\Pi$ is an ancestor of $\{O_i, O_j\} \cup \bm{R} \cup \bm{S}$. Since $\bm{W}^* \subset \bm{W}$, the only way in which $\Pi$ could fail to d-connect $O_i$ and $O_j$ given $\bm{W} \cup \bm{S}$ would be if some element of $\bm{W} \setminus \bm{W}^*$ were located on $\Pi$. But neither $O_k$ nor any element in $\bm{W} \setminus \bm{W}^*$ is an ancestor of $\{O_i, O_j\} \cup \bm{R} \cup \bm{S}$, so it follows that no vertex in $\bm{W} \setminus \bm{W}^*$ lies on $\Pi$. We conclude that $O_i$ and $O_j$ are d-connected given $\bm{W} \cup \bm{S}$.
\end{proof}

\subsection{Step 1: Skeleton Discovery}

\begin{replemma}{lem_IP_dsep}
There exists an inducing path between $O_i$ and $O_j$ if and only if $O_i$ and $O_j$ are d-connected given $\bm{W} \cup \bm{S}$ for all possible subsets $\bm{W} \subseteq \bm{O} \setminus \{O_i, O_j \}$.
\end{replemma}
\begin{proof}
I first prove the forward direction. Consider any set $\bm{W} \subseteq \bm{O} \setminus \{O_i, O_j\}$. Suppose there exists an inducing path $\Pi$ between $O_i$ and $O_j$. We have two situations:
\begin{enumerate}
\item There exists a collider $C_1$ on $\Pi$ that is an ancestor of $O_i$ via a directed path $C_1 \leadsto O_i$ but not an ancestor of $\bm{W} \cup \bm{S}$. Let $C_1$ more specifically be such a collider on $\Pi$ closest to $O_j$. Now one of the following two conditions will hold:
\begin{enumerate}
\item There also exists a collider $C_2$ on $\Pi$ that is an ancestor of $O_j$ via a directed path $C_2 \leadsto O_j$ but not an ancestor of $\bm{W} \cup \bm{S}$. Let $C_2$ more specifically denote such a collider which is closest to $C_1$ on $\Pi$ (if two such colliders are equidistant from $C_1$, then choose one arbitrarily). Let $\Pi_{C_1 C_2}$ denote the part of the inducing path between $C_1$ and $C_2$. Recall that every non-collider on $\Pi_{C_1 C_2}$ is a member of $\bm{L}$ because $\Pi$ is an inducing path. Moreover, every collider on $\Pi_{C_1 C_2}$ is an ancestor $\bm{W}\cup \bm{S}$ by construction. Then the path $\mathcal{T} = \{ O_i \backleadsto C_1, \Pi_{C_1 C_2}, C_2 \leadsto O_j \}$ is a d-connecting path by invoking Lemma \ref{lem_d_conn} with $\mathcal{T}$.

\item There does not exist a collider $C_2$ on $\Pi$ that is an ancestor of $O_j$ via a directed path $C_2 \leadsto O_j$ and not an ancestor of $\bm{W} \cup \bm{S}$. It follows that all colliders on $\Pi$ are ancestors of $O_i \cup \bm{W} \cup \bm{S}$. More specifically, all of the colliders on $\Pi_{O_j C_1}$ are ancestors of $\bm{W} \cup \bm{S}$ by construction. Recall also that every non-collider on $\Pi_{O_j C_1}$ is a member of $\bm{L}$ because $\Pi$ is an inducing path. We conclude that the path $\mathcal{T} = \{\Pi_{O_j C_1}, C_1 \leadsto O_i \}$ is a d-connecting path by invoking Lemma \ref{lem_d_conn} with $\mathcal{T}$.

\end{enumerate}

\item There does not exist a collider $C_1$ on $\Pi$ that is an ancestor of $O_i$ via a directed path $C_1 \leadsto O_i$ and not an ancestor of $\bm{W} \cup \bm{S}$. This implies that all colliders on $\Pi$ are ancestrs of $O_j \cup \bm{W} \cup \bm{S}$. Let $\Pi_{O_i C_3}$ correspond to the part of the inducing path between $O_i$ and $C_3$, where $C_3$ corresponds to the collider closest to $O_i$ that is an ancestor of $O_j$ via a directed path $C_3 \leadsto O_j$ but not an ancestor of $\bm{W} \cup \bm{S}$; if we do not encounter such a collider, then set $C_3 = O_j$. Notice then that all colliders on $\Pi_{O_i C_3}$ are ancestors of $\bm{W} \cup \bm{S}$. Recall also that every non-collider on $\Pi_{O_i C_3}$ is a member of $\bm{L}$ because $\Pi$ is an inducing path. Thus the path $\mathcal{T} = \{ \Pi_{O_iC_3}, C_3 \leadsto O_j \}$ is a d-connecting path by invoking Lemma \ref{lem_d_conn} with $\mathcal{T}$. 

\end{enumerate}

For the backward direction, assume $O_i$ and $O_j$ are d-connecting given $\bm{W} \cup \bm{S}$ for all possible subsets $\bm{W} \subseteq \bm{O} \setminus \{O_i, O_j\}$. Then $O_i$ and $O_j$ are d-connected given $((\textnormal{Anc}(\{O_i, O_j \} \cup \bm{S}) \cap \bm{O} ) \cup \bm{S}) \setminus \{O_i, O_j\}$. The backward direction follows by invoking Lemma 8 in \citep{Spirtes99} whose argument remains unchanged even for a cyclic directed graph.
\end{proof}

\begin{replemma}{lem_dsep}
If there does not exist an inducing path between $O_i$ and $O_j$, then $O_i$ and $O_j$ are d-separated given $\textnormal{D-SEP}(O_i,O_j) \cup \bm{S}$. Likewise, $O_i$ and $O_j$ are d-separated given $\textnormal{D-SEP}(O_j,O_i) \cup \bm{S}$.
\end{replemma}
\begin{proof}
We will prove this by contradiction. Assume that we have $O_i \not \ci_d O_j|\textnormal{D-SEP}(O_i,O_j) \cup \bm{S}$. If there does not exist an inducing path between $O_i$ and $O_j$, then there exists some $\bm{W} \subseteq \bm{O} \setminus \{O_i, O_j\}$ such that $O_i \ci_d O_j|\bm{W} \cup \bm{S}$ by Lemma \ref{lem_IP_dsep}. Let $\Pi$ correspond to the path d-connecting $O_i$ and $O_j$ given $\textnormal{D-SEP}(O_i,O_j) \cup \bm{S}$.

We have two conditions:
\begin{enumerate}
\item Suppose that every vertex in $\bm{O}$ on $\Pi$ is a collider on $\Pi$. This implies that all non-colliders on $\Pi$ must be in $\bm{L} \cup \bm{S}$. But no non-collider on $\Pi$ can be in $\bm{S}$ because $\Pi$ would be inactive in that case. Thus all non-colliders on $\Pi$ must more specifically be in $\bm{L}$. Now recall that we assumed that $O_i \not \ci_d O_j|\textnormal{D-SEP}(O_i,O_j) \cup \bm{S}$, so every collider on $\Pi$ (including those in $\bm{O}$) must be an ancestor of $\textnormal{D-SEP}(O_i,O_j) \cup \bm{S}$ and hence also an ancestor of $\{O_i, O_j\} \cup \bm{S}$. The above facts imply that there exists an inducing path between $O_i$ and $O_j$; contradiction.
\item Suppose that there exists at least one vertex in $\bm{O}$ on $\Pi$ that is a non-collider. Let $O_k$ denote the first such vertex on $\Pi$ closest to $O_i$. Note that every vertex on $\Pi$ is an ancestor of $\{O_i,O_j\} \cup \textnormal{D-SEP}(O_i,O_j) \cup \bm{S}$ by the definition of d-connection and hence an ancestor of $\{O_i,O_j\} \cup \bm{S}$. This implies that $O_k$ is an ancestor of $\{O_i,O_j\} \cup \bm{S}$.

We will show that $O_k \in \textnormal{D-SEP}(O_i,O_j)$ in order to arrive at the contradiction that $\Pi$ does not d-connect $O_i$ and $O_j$ given $\textnormal{D-SEP}(O_i,O_j) \cup \bm{S}$. Consider the subpath $\Pi_{O_iO_k}$. Let $\langle C_1, \dots, C_m \rangle$ denote the possibly empty sequence of colliders on $\Pi_{O_iO_k}$ which are ancestors of $\textnormal{D-SEP}(O_i,O_j)$ but not $\bm{S}$. Also let $C_n$ denote an arbitrary collider in $\langle C_1, \dots, C_m \rangle$. Notice that there is a directed path $C_n \leadsto O_n$ with $O_n \in \textnormal{D-SEP}(O_i,O_j)$. Let $F_n$ denote the first observable on $C_n \leadsto O_n$ which may be $O_n$ if no other observable lies on $C_n \leadsto O_n$. 

We will show that there exists an inducing path between $F_n$ and $F_{n+1}$, where $F_{n+1}$ corresponds to the first observable on $C_{n+1} \leadsto O_{n+1}$. First note that $F_n, F_{n+1} \not \in \textnormal{Anc}(\bm{S})$ because $C_n,C_{n+1} \not \in \textnormal{Anc}(\bm{S})$. Consider the path $\Phi_n$ constructed by concatenating the paths $C_n \leadsto F_n$, $\Pi_{C_n C_{n+1}}$ and $C_{n+1} \leadsto F_{n+1}$. Notice that, by construction, the only observables in $\Phi_n$ lie on $\Pi_{C_n C_{n+1}}$. Moreover, every observable on $\Pi_{C_n C_{n+1}}$ is a collider because $O_k$ is the first observable that is a non-collider on $\Pi$; this implies that only a latent or a selection variable on $\Pi_{C_n C_{n+1}}$ can be a non-collider. But no selection variable is also a non-collider on $\Pi_{C_n C_{n+1}}$ because $\Pi$ d-connects $O_i$ and $O_j$ given $\textnormal{D-SEP}(O_i,O_j) \cup \bm{S}$. We conclude that only a latent variable can be a non-collider on $\Pi_{C_n C_{n+1}}$. Next, every collider on $\Pi_{C_n C_{n+1}}$ is an ancestor of $\bm{S}$ by construction of $\langle C_1, \dots, C_m \rangle$. We have shown that all colliders on $\Phi_n$ are ancestors of $\bm{S}$ and all non-colliders on $\Phi_n$ are in $\bm{L}$. This implies that $\Phi_n$ is an inducing path between $F_n$ and $F_{n+1}$; specifically one that is into $F_n$ and into $F_{n+1}$ by construction. 

We will now tie up the endpoints. We can also concatenate the paths $\Pi_{O_iC_1}$ and $C_1 \leadsto F_1$ in order to form an inducing path $\Phi_0$ between $O_i$ and $F_1$ that is into $F_1$. Similarly, we can concatenate the paths $\Pi_{O_kC_m}$ and $C_m \leadsto F_m$ in order to form an inducing path $\Phi_m$ between $O_k$ and $F_m$ that is into $F_m$.

We have constructed a sequence of vertices $\langle O_i \equiv F_0, F_1, \dots, F_m, F_{m+1} \equiv O_k \rangle$, where each vertex is an ancestor of $\{O_i,O_j\} \cup \bm{S}$ and any given $F_l$ is connected to $F_{l-1}$ by an inducing path into $F_l$ and to $F_{l+1}$ by an inducing path also into $F_l$. Hence, $O_k \in \textnormal{D-SEP}(O_i, O_j)$. But this implies that $\Pi$ does not d-connect $O_i$ and $O_j$ given $\textnormal{D-SEP}(O_i,O_j) \cup \bm{S}$ because $O_k$ is a non-collider on $\Pi$; contradiction.

\end{enumerate}

We have shown that, if there does not exist an inducing path between $O_i$ and $O_j$, then $O_i \ci_d O_j|\textnormal{D-SEP}(O_i,O_j) \cup \bm{S}$. Now $O_i \ci_d O_j|\textnormal{D-SEP}(O_i,O_j) \cup \bm{S}$ $\implies$  $O_j \ci_d O_i|\textnormal{D-SEP}(O_j,O_i) \cup \bm{S}$ because $i$ and $j$ are arbitrary indices. Moreover, $O_j \ci_d O_i|\textnormal{D-SEP}(O_j,O_i) \cup \bm{S}$ $\implies$ $O_i \ci_d O_j|\textnormal{D-SEP}(O_j,O_i) \cup \bm{S}$ because $O_j \ci_d O_i|\textnormal{D-SEP}(O_j,O_i) \cup \bm{S}$ if and only if $O_i \ci_d O_j|\textnormal{D-SEP}(O_j,O_i) \cup \bm{S}$ by symmetry of d-separation. We conclude that, if there does not exist an inducing path between $O_i$ and $O_j$, then we also have $O_i \ci_d O_j|\textnormal{D-SEP}(O_j,O_i) \cup \bm{S}$.

\end{proof}

\begin{replemma}{lem_pdsep}
If an inducing path does not exist between $O_i$ and $O_j$ in $\mathbb{G}$, then $O_i$ and $O_j$ are d-separated given $\bm{W} \cup \bm{S}$ with $\bm{W} \subseteq \textnormal{PD-SEP}(O_i)$ in the MAAG $\mathbb{G}^{\prime}$. Likewise, $O_i$ and $O_j$ are d-separated given some $\bm{W} \cup \bm{S}$ with $\bm{W} \subseteq \textnormal{PD-SEP}(O_j)$ in $\mathbb{G}^{\prime}$.
\end{replemma}
\begin{proof}
It suffices to show that $\textnormal{D-SEP}(O_i, O_j) \subseteq \textnormal{PD-SEP}(O_i)$ by Lemma \ref{lem_dsep}. The argument will hold analogously for $\textnormal{D-SEP}(O_j, O_i) \subseteq \textnormal{PD-SEP}(O_j)$. If $O_k \in \textnormal{D-SEP}(O_i, O_j)$, then there exists a sequence of observables $\Pi_{O_i, O_k}$ between $O_i$ and $O_k$ such that an inducing path exists between any two consecutive observables $\langle O_h, O_{h+1} \rangle$ in $\Pi_{O_i, O_k}$. Thus there also exists a path $\Pi^\prime_{O_i, O_k}$ between $O_i$ and $O_k$ in $\mathbb{G}^\prime$ whose vertices involve all and only the vertices in $\Pi_{O_i, O_k}$. We also know that, in every consecutive triplet $\langle O_{h-1}, O_h, O_{h+1} \rangle$, the inducing path from $O_{h-1}$ to $O_h$ is into $O_h$, and the inducing path from $O_{h+1}$ to $O_h$ is also into $O_h$; hence, $O_h$ is a collider in $\mathbb{G}$. We now need to show that 
any triplet $\langle O_{h-1}, O_h, O_{h+1} \rangle$ on $\Pi^\prime_{O_i, O_k}$ is a v-structure in $\mathbb{G}^\prime$ or a triangle in $\mathbb{G}^\prime$. We have two situations:
\begin{enumerate}
\item Suppose that the collider $O_h \not \in \textnormal{Anc}(\{O_{h-1}, O_{h+1}\} \cup \bm{S})$. Then, the path between $O_{h-1}$ and $O_h$ and then between $O_h$ and $O_{h+1}$ is not an inducing path. Hence, $O_h$ lies in an unshielded triple involving $\langle O_{h-1}, O_h, O_{h+1} \rangle$ on $\Pi^\prime_{O_i, O_k}$. If $O_h$ lies in the unshielded triple, then $O_h$ more specifically lies in a v-structure because $O_h \not \in \textnormal{Anc}(\{O_{h-1}, O_{h+1}\} \cup \bm{S})$ by assumption. 
\item Suppose that $O_h \in \textnormal{Anc}(\{O_{h-1}, O_{h+1}\} \cup \bm{S})$. Then there exists an inducing path between $O_{h-1}$ and $O_{h+1}$, so $O_h$ is in a triangle on $\Pi^\prime_{O_i, O_k}$.
\end{enumerate}
\end{proof}

\begin{replemma}{lem_pdsep2}
If an inducing path does not exist between $O_i$ and $O_j$ in $\mathbb{G}$, then $O_i$ and $O_j$ are d-separated given $\bm{W} \cup \bm{S}$ with $\bm{W} \subseteq \textnormal{PD-SEP}(O_i)$ in $\mathbb{G}^{\prime\prime}$. Likewise, $O_i$ and $O_j$ are d-separated given some $\bm{W} \cup \bm{S}$ with $\bm{W} \subseteq \textnormal{PD-SEP}(O_j)$ in $\mathbb{G}^{\prime\prime}$.
\end{replemma}
\begin{proof}
In light of Lemma \ref{lem_pdsep}, it suffices to show that $\textnormal{PD-SEP}(O_i)$ formed using the MAAG $\mathbb{G}^\prime$ is a subset of $\textnormal{PD-SEP}(O_i)$ formed using $\mathbb{G}^{\prime\prime}$. Recall that all edges in $\mathbb{G}^\prime$ are also in $\mathbb{G}^{\prime\prime}$. Hence, all triangles in $\mathbb{G}^\prime$ are also triangles in $\mathbb{G}^{\prime\prime}$. We now need to show that all v-structures in $\mathbb{G}^\prime$ are also v-structures in $\mathbb{G}^{\prime\prime}$ or are triangles in $\mathbb{G}^{\prime\prime}$. Let $\langle O_{h-1}, O_h, O_{h+1} \rangle$ denote an arbitrary v-structure in $\mathbb{G}^\prime$. The edge between $O_{h-1}$ and $O_h$ as well as the edge between $O_h$ and $O_{h+1}$ must be in $\mathbb{G}^{\prime\prime}$, because again all edges in $\mathbb{G}^\prime$ are also in $\mathbb{G}^{\prime\prime}$. We have two cases:
\begin{enumerate}
\item An edge exists between $O_{h-1}$ and $O_{h+1}$ in $\mathbb{G}^{\prime\prime}$. Then the triple $\langle O_{h-1}, O_h, $ $O_{h+1} \rangle$ forms a triangle in $\mathbb{G}^{\prime\prime}$.
\item An edge does not exist between $O_{h-1}$ and $O_{h+1}$ in $\mathbb{G}^{\prime\prime}$. Recall that $\langle O_{h-1}, O_h, O_{h+1} \rangle$ is a v-structure in $\mathbb{G}^\prime$, so $O_h \not \in \textnormal{Anc}(\{O_{h-1}, O_{h+1}\} \cup \bm{S})$. Note that PC's skeleton discovery procedure only discovers minimal separating sets so, if we have $O_{h-1} \ci_d O_{h+1} | \bm{W} \cup \bm{S}$ with $\bm{W} \subseteq \bm{O} \setminus \{O_{h-1}, O_{h+1}\}$ and $O_h \in \bm{W}$, then $O_h \in \textnormal{Anc}(\{O_{h-1}, O_{h+1}\} \cup \bm{S})$ by Lemma \ref{lem_ancR} with $\bm{R} = \emptyset$; but this contradicts the fact that $O_h \not \in \textnormal{Anc}(\{O_{h-1}, O_{h+1}\} \cup \bm{S})$. Hence $O_h \not \in \bm{W}$, so $\langle O_{h-1}, O_h, O_{h+1} \rangle$ is also a v-structure in $\mathbb{G}^{\prime\prime}$.
\end{enumerate}
\end{proof}

\subsection{Steps 3 \& 4: Short and Long Range Non-Ancestral Relations}

\begin{lemma}\label{lem_addV}
If $O_i$ is an ancestor of $O_j \cup \bm{S}$, $O_j$ and some vertex $O_k$ are d-separated given $\bm{W} \cup \bm{S}$ with $\bm{W} \subseteq \bm{O} \setminus \{O_j, O_k\}$, $O_i$ and $O_j$ are d-connected given $\bm{W} \cup \bm{S}$, and $O_i \not \in \bm{W}$, then $O_i$ and $O_k$ are d-separated given $\bm{W} \cup \bm{S}$.
\end{lemma}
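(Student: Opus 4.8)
The plan is a proof by contradiction built on the path-gluing lemma, Lemma~\ref{lem_d_conn}. Assume the conclusion fails, so that $O_i$ and $O_k$ are d-connected given $\bm{W}\cup\bm{S}$ by some path $\Pi_{ik}$. I will splice $\Pi_{ik}$ (traversed from $O_k$ to $O_i$) onto a carefully chosen d-connecting path between $O_i$ and $O_j$, feed the sequence $\sigma=\langle O_k,O_i,O_j\rangle$ to Lemma~\ref{lem_d_conn}, and extract a path d-connecting $O_j$ and $O_k$ given $\bm{W}\cup\bm{S}$ --- contradicting the assumption that they are d-separated. The cases $O_i=O_j$ and $O_i=O_k$ are trivial (the first makes the conclusion identical to a hypothesis; the second makes the hypotheses contradictory), so I assume $O_i,O_j,O_k$ distinct. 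Note that $O_j,O_k\in\bm{O}\setminus\bm{W}$ and $O_i\in\bm{O}\setminus\bm{W}$, so none of the three vertices of $\sigma$ lies in $\bm{W}\cup\bm{S}$.

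The one genuine decision is which path to use for the $O_i$--$O_j$ leg, and this is exactly where the hypothesis $O_i\in\textnormal{Anc}(O_j\cup\bm{S})$ is used. I would split on whether $O_i$ has a descendant in $\bm{W}\cup\bm{S}$. In \emph{Case A}, $O_i$ does have such a descendant (for example whenever $O_i\in\textnormal{Anc}(\bm{S})$, or whenever some directed path $O_i\leadsto O_j$ meets $\bm{W}$); here I use the given d-connecting path $\Pi_{ij}$ between $O_i$ and $O_j$ as the second leg. In \emph{Case B}, no descendant of $O_i$ lies in $\bm{W}\cup\bm{S}$; then $O_i\notin\textnormal{Anc}(\bm{S})$, so $O_i\in\textnormal{Anc}(O_j)$ through an honest directed path $D:O_i\leadsto O_j$. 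Every vertex of $D$ other than $O_i$ is a proper descendant of $O_i$, hence (by the Case~B hypothesis) none lies in $\bm{W}\cup\bm{S}$; since $D$ is directed it has no collider, and since no vertex of it is in the conditioning set it has no blocked non-collider either, so $D$ itself d-connects $O_i$ and $O_j$ given $\bm{W}\cup\bm{S}$ --- and, crucially, its first edge is $O_i\to\cdots$, so it is \emph{out of} $O_i$. In Case~B I use $D$ as the second leg.

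Now apply Lemma~\ref{lem_d_conn} with $\sigma=\langle O_k,O_i,O_j\rangle$ and $\mathcal{T}$ equal to the reversal of $\Pi_{ik}$ together with the chosen second leg. Condition~1 holds by construction (the two paths d-connect consecutive pairs of $\sigma$ given $\bm{W}\cup\bm{S}$), and condition~2 is vacuous because no vertex of $\sigma$ lies in $\bm{W}\cup\bm{S}$. For condition~3 the only relevant triple is $\langle O_k,O_i,O_j\rangle$: in Case~B the second leg is out of $O_i$, so the two paths cannot collide at $O_i$ and the condition is vacuous; in Case~A a collision at $O_i$ is permitted precisely because $O_i$ already has a descendant in $\bm{W}\cup\bm{S}$. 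Lemma~\ref{lem_d_conn} therefore produces a path of $\mathbb{G}$ d-connecting $O_j$ and $O_k$ given $\bm{W}\cup\bm{S}$, contradicting the hypothesis; the conclusion follows.

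The main obstacle is Case~B: the reason the argument goes through is the observation that when the conditioning set contains no descendant of $O_i$, the ancestral hypothesis automatically supplies a d-connecting path that leaves $O_i$ through a tail (a directed path down to $O_j$ that sidesteps $\bm{W}\cup\bm{S}$), which is precisely what dissolves the collider obligation in Lemma~\ref{lem_d_conn}. The rest is routine verification of which of the three conditions of that lemma is vacuous in each case.
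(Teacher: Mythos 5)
Your proposal is correct and follows essentially the same route as the paper's proof: contradiction via Lemma \ref{lem_d_conn}, with the same case split on whether $O_i$ has a descendant in $\bm{W}\cup\bm{S}$, using the given d-connecting path $\Pi_{ij}$ in the first case and a directed path $O_i\leadsto O_j$ avoiding $\bm{W}\cup\bm{S}$ in the second. Your write-up is in fact slightly more careful than the paper's in spelling out why condition~3 of Lemma \ref{lem_d_conn} is satisfied in each case and in disposing of the degenerate coincidences among $O_i,O_j,O_k$.
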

\begin{proof}
Suppose for a contradiction that $O_i$ and $O_k$ are d-connected given $\bm{W} \cup \bm{S}$. There are two cases.

In the first case, suppose that $O_i$ has a descendant in $\bm{W} \cup \bm{S}$. Recall however that we have $O_i \not \in \bm{W} \cup \bm{S}$, so we can merge the d-connecting path $\Pi_{O_jO_i}$ between $O_j$ and $O_i$ and the d-connecting path $\Pi_{O_iO_k}$ between $O_i$ and $O_k$ by invoking Lemma \ref{lem_d_conn} with $\mathcal{T} = \{ \Pi_{O_jO_i}, \Pi_{O_iO_k}\}$ in order to form a d-connecting path between $O_j$ and $O_k$ given $\bm{W} \cup \bm{S}$. We have arrived at a contradiction.

In the second case, suppose that $O_i$ does not have a descendant in $\bm{W} \cup \bm{S}$. Recall also that $O_i$ is an ancestor of $O_j \cup \bm{S}$ by assumption. These two facts imply that there exists a directed path $O_i \leadsto O_j$ that does not include $\bm{W} \cup \bm{S}$; hence the $O_i \leadsto O_j$ is d-connecting. We can again invoke Lemma \ref{lem_d_conn} with $\mathcal{T} = \{ O_j \backleadsto O_i, \Pi_{O_iO_k}\}$ in order to form a d-connecting path between $O_j$ and $O_k$ given $\bm{W} \cup \bm{S}$. We have thus arrived at another contradiction.

We have exhausted all possibilities and therefore conclude that $O_i$ and $O_k$ are in fact d-separated given $\bm{W} \cup \bm{S}$.
\end{proof}

We can write the contrapositive of the above lemma as follows:
\begin{corollary} \label{cor_VS}
Let  $\bm{W} \subseteq \bm{O} \setminus \{O_j, O_k\}$. If $O_i$ and $O_j$ are d-connected given $\bm{W} \cup \bm{S}$, $O_k$ and $O_i$ are d-connected given $\bm{W} \cup \bm{S}$, $O_k$ and $O_j$ are d-separated given $\bm{W} \cup \bm{S}$, and $O_i \not \in \bm{W}$, then $O_i$ is not an ancestor of $O_j \cup \bm{S}$.
\end{corollary}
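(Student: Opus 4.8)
The plan is to observe that Corollary \ref{cor_VS} is precisely the contrapositive of Lemma \ref{lem_addV}, so the proof collapses to a short deduction. First I would suppose, for contradiction, that $O_i$ is an ancestor of $O_j \cup \bm{S}$. Then I would verify that, under this assumption together with the hypotheses of the corollary, all four antecedents of Lemma \ref{lem_addV} hold: (i) $O_i \in \textnormal{Anc}(O_j \cup \bm{S})$ is exactly the contradiction hypothesis; (ii) $O_j$ and $O_k$ are d-separated given $\bm{W} \cup \bm{S}$ with $\bm{W} \subseteq \bm{O} \setminus \{O_j, O_k\}$ is given; (iii) $O_i$ and $O_j$ are d-connected given $\bm{W} \cup \bm{S}$ is given; and (iv) $O_i \not\in \bm{W}$ is given. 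Invoking Lemma \ref{lem_addV} then gives $O_i \ci_d O_k \mid \bm{W} \cup \bm{S}$.

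The conclusion follows immediately: the hypothesis of the corollary states that $O_k$ and $O_i$ are d-connected given $\bm{W} \cup \bm{S}$, which by symmetry of d-separation is the same as saying $O_i \not\ci_d O_k \mid \bm{W} \cup \bm{S}$. This directly contradicts the consequence $O_i \ci_d O_k \mid \bm{W} \cup \bm{S}$ just derived. Hence the supposition is untenable, and $O_i$ is not an ancestor of $O_j \cup \bm{S}$, as claimed.

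There is essentially no obstacle here, since all of the genuine work — merging the d-connecting paths $\Pi_{O_jO_i}$ and $\Pi_{O_iO_k}$ via Lemma \ref{lem_d_conn}, and handling the two cases according to whether $O_i$ has a descendant in $\bm{W} \cup \bm{S}$ — is already carried out inside the proof of Lemma \ref{lem_addV}. The only point requiring a moment's care is the cosmetic reindexing between the phrasing ``$O_i$ and $O_k$'' in Lemma \ref{lem_addV} and ``$O_k$ and $O_i$'' in the corollary, which is harmless because d-separation and d-connection are symmetric relations (as already used, e.g., at the end of the proof of Lemma \ref{lem_dsep}).
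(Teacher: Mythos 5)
Your proof is correct and matches the paper's approach exactly: the paper offers no separate argument, simply presenting Corollary \ref{cor_VS} as the contrapositive of Lemma \ref{lem_addV}, which is precisely the deduction you spell out. Your extra care about the symmetry of d-connection in the reindexing of $O_i$ and $O_k$ is harmless and accurate.
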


\begin{replemma}{lem_VS1}
Consider a set $\bm{W} \subseteq \bm{O} \setminus \{O_i, O_j\}$. Now suppose that $O_i$ and $O_k$ are d-connected given $\bm{W} \cup \bm{S}$, and that $O_j$ and $O_k$ are d-connected given $\bm{W} \cup \bm{S}$. If $O_i$ and $O_j$ are d-separated given $\bm{W} \cup \bm{S}$ such that $O_k \not \in \bm{W}$, then $O_k$ is not an ancestor of $\{O_i, O_j\} \cup \bm{S}$.
\end{replemma}
\begin{proof}
Follows by applying Corollary \ref{cor_VS} twice with $O_i$ and $O_k$ d-connected and with $O_j$ and $O_k$ d-connected.
\end{proof}

\subsection{Step 5: Orienting with Non-Minimal D-Separating Sets}

\begin{replemma}{lem_sup_anc2}
Consider a quadruple of vertices $\langle O_i, O_j, O_k, O_l \rangle$. Suppose that we have:
\begin{enumerate}
\item $O_i$ and $O_k$ non-adjacent.
\item $O_i * \!\! \rightarrow O_l \leftarrow \!\! * O_k$.
\item $O_i$ and $O_k$ are d-separated given some $\bm{W} \cup \bm{S}$ with $O_j \in \bm{W}$ and $\bm{W} \subseteq \bm{O} \setminus \{O_i, O_k\}$;
\item $O_j * \!\! \linecirc O_l$.
\end{enumerate}
If $O_l \not \in \bm{W} = \textnormal{Sep}(O_i,O_k)$, then we have $O_j * \!\! \rightarrow O_l$. If $O_i * \!\! \rightarrow O_j \leftarrow \!\! * O_k$ and $O_l \in \bm{W} = \textnormal{SupSep}(O_i,O_j,O_k)$, then we have $O_j * \!\! - O_l$.
\end{replemma}
\begin{proof}
We prove the first conclusion by contrapositive. Assume that we have $O_j * \!\! - O_l$. Now suppose for a contradiction that $O_l \not \in \bm{W}$ (but $O_j \in \bm{W}$). Note that $O_j \cup \bm{S}$ contains at least one descendant of $O_l$ because $O_l \in \textnormal{Anc}(O_j \cup \bm{S})$.  With Lemma \ref{lem_d_conn}, we can use the d-connecting path between $O_i$ and $O_l$ given $\bm{W} \cup \bm{S}$ as well as the d-connecting path between $O_k$ and $O_l$ given $\bm{W} \cup \bm{S}$ to form a d-connecting path between $O_i$ and $O_k$ given $\bm{W} \cup \bm{S}$ irrespective of whether or not the paths collide at $O_l$; this contradicts the fact that $O_i$ and $O_k$ are d-separated given $\bm{W} \cup \bm{S}$.

For the second conclusion, assume that we have $O_l \in \bm{W}$. We know from Lemma \ref{lem_ancR} with $\bm{R}=O_j \cup \textnormal{Sep}(O_i,O_k)$ that $O_l$ is an ancestor of $\{O_i, O_j, O_k\} \cup \textnormal{Sep}(O_i,O_k) \cup \bm{S}$. Recall that every member of $\textnormal{Sep}(O_i,O_k)$ is an ancestor of $\{O_i, O_k\} \cup \bm{S}$ by setting $\bm{R} = \emptyset$. Hence, $O_l$ is more specifically an ancestor of $\{O_i, O_j, O_k\} \cup \bm{S}$. Now since we have $O_i * \!\! \rightarrow O_l \leftarrow \!\! * O_k$, we can also claim that we have $O_l \in \textnormal{Anc}(O_j)$. Hence, we have $O_j * \!\! - O_l$.
\end{proof}

\subsection{Step 6: Long Range Ancestral Relations}

\begin{replemma}{lem_sup_anc1}
If $O_i$ and $O_k$ are d-separated given $\bm{W} \cup \bm{S}$, where $\bm{W} \subseteq \bm{O} \setminus \{ O_i, O_k\}$, and $\bm{Q} \subseteq \textnormal{Anc}(\{O_i, O_k\} \cup \bm{W} \cup \bm{S}) \setminus \{O_i, O_k \}$, then $O_i$ and $O_k$ are also d-separated given $\bm{Q} \cup \bm{W} \cup \bm{S}$.
\end{replemma}
\begin{proof}
We will prove this by contrapositive. Suppose that there is a path $\Pi_{O_iO_k}$ which d-connects $O_i$ and $O_k$ given some $\bm{Q} \cup \bm{W} \cup \bm{S}$. Then every vertex on $\Pi_{O_iO_k}$ is an ancestor of $\{O_i, O_k\} \cup \bm{Q} \cup \bm{W} \cup \bm{S}$ by the definition of a d-connecting path. Since $\bm{Q} \subseteq \textnormal{Anc}(\{O_i, O_k\} \cup \bm{W} \cup \bm{S})  \setminus \{O_i, O_k \} $, every vertex on $\Pi_{O_iO_k}$ must more specifically be an ancestor of $\{O_i, O_k\} \cup \bm{W} \cup \bm{S}$.

Let $O_a$ denote the collider furthest from $O_i$ on $\Pi_{O_iO_k}$ which is an ancestor of $O_i \cup \bm{S}$ and not in $\bm{W} \cup \bm{S}$ (or $O_i$ if no such collider exists). Similarly, let $O_b$ denote the first collider after $O_a$ on $\Pi_{O_iO_k}$ which is an ancestor of $O_k \cup \bm{S}$ and not in $\bm{W} \cup \bm{S}$ (or $O_k$ if no such collider exists). The directed path $\Pi_{O_aO_i}$ from $O_a$ to $O_i \cup \bm{S}$, and the directed path $\Pi_{O_bO_k}$ from $O_b$ to $O_k \cup \bm{S}$ are d-connecting given $\bm{W} \cup \bm{S}$, since no vertices on the path $\Pi_{O_aO_i}$ or $\Pi_{O_bO_k}$are in $\bm{W} \cup \bm{S}$. The subpath of $\Pi_{O_aO_b}$ between $O_a$ and $O_b$ on $\Pi_{O_iO_k}$ is also d-connecting given $\bm{W} \cup \bm{S}$ because every collider is an ancestor of $\bm{W} \cup \bm{S}$, and every non-collider is in $\bm{L}$. Lemma \ref{lem_d_conn} implies that we can take $\mathcal{T} = \{\Pi_{O_aO_i},\Pi_{O_aO_b},\Pi_{O_bO_k} \}$ to form a d-connecting path between $O_i$ and $O_k$ given $\bm{W} \cup \bm{S}$.
\end{proof}

\subsection{Step 7: Orientation Rules}

\begin{replemma}{lem_or_1}
Suppose that there is a set $\bm{W} \setminus \{O_i, O_j\}$ and every proper subset $\bm{V} \subset \bm{W}$ d-connects $O_i$ and $O_j$ given $\bm{V} \cup \bm{S}$. If $O_i$ and $O_j$ are d-separated given $\bm{W} \cup \bm{S}$ where $O_k \in \bm{W}$, then $O_k$ is an ancestor of $\{O_i, O_j\} \cup \bm{S}$.
\end{replemma}
\begin{proof}
This is a special case of Lemma \ref{lem_ancR} with $\bm{R} = \emptyset$.
\end{proof}

\begin{replemma}{lem_triang_main}
If we have $O_i * \!\! \rightarrow O_j \text{---} O_k$ with $O_i$ and $O_k$ non-adjacent, then $O_i * \!\! \rightarrow O_j$ is in a triangle involving $O_i,O_j$ and $O_l$ ($l \not = k$) with $O_j \text{---} O_l$ and $O_i * \!\! \rightarrow O_l$. Moreover, there exists a sequence of undirected edges between $O_l$ and $O_k$ that does not include $O_j$.
\end{replemma}
\begin{proof}
Note that $O_j$ or $O_k$ (or both) cannot be ancestors of $\bm{S}$ because this would contradict the arrowhead at $O_j$. Therefore, $O_j$ is an ancestor of $O_k$, and $O_k$ is an ancestor of $O_j$, so there is a cycle involving $O_j$ and $O_k$. Since we have an arrowhead at $O_j$, there must be an inducing path $\Pi_{O_i O_j}$ between $O_i$ and $O_j$ that is either out of $O_j$ or into $O_j$: 
\begin{enumerate}
\item Suppose that $\Pi_{O_i O_j}$ is out of $O_j$. Every vertex on $\Pi_{O_i O_j}$ is an ancestor of $\{O_i, O_j\} \cup \bm{S}$ by the definition of an inducing path. Thus, $O_j \in \textnormal{Anc}(\{O_i, O_j\} \cup \bm{S})$. Recall that we also have the arrowhead $O_i * \rightarrow O_j$, so we more specifically have the obvious relation $O_j \in \textnormal{Anc}(O_j)$. Let $C_1$ denote the collider closest to $O_j$ on $\Pi_{O_i O_j}$. Such a collider must exist or else $O_j \in \textnormal{Anc}(O_i)$ which contradicts the arrowhead $O_i * \rightarrow O_j$. Since $\Pi_{O_i O_j}$ is an inducing path, we must have $C_1 \in \textnormal{Anc}(\{O_i, O_j\} \cup \bm{S})$. However, $C_1$ cannot be an ancestor of $O_i \cup \bm{S}$ because that would imply that we have $O_j \in \textnormal{Anc}(O_i \cup \bm{S})$. We therefore more specifically have $C_1 \in \textnormal{Anc}(O_j)$. Let $C_1 \leadsto O_j$ denote a directed path to $O_j$. We have two scenarios:
\begin{enumerate}
\item $C_1 \leadsto O_j$ contains a member of $\bm{O}$ besides $O_j$. Denote that member of $\bm{O}$ closest to $C_1$ as $O_l$ (note that we may have $C_1 = O_l$). Then $\Pi_{O_i C_1}$, the part of $\Pi_{O_i O_j}$ between $O_i$ and $C_1$, as well as $C_1 \leadsto O_l$ together form an inducing path between $O_i$ and $O_l$ (every non-collider on $C_1 \leadsto O_l$ is in $\bm{L}$ by construction). Moreover, we must have $O_i * \rightarrow O_l$ because $O_l \not \in \textnormal{Anc}(O_i \cup \bm{S})$ by construction. There also exists an inducing path $\Pi_{O_j O_l}$ between $O_j$ and $O_l$ because all non-colliders on $\Pi_{O_j O_l}$ are in $\bm{L}$. We more specifically must have $O_j - O_l$ because $O_j \in \textnormal{Anc}(O_l)$ and $O_l \in \textnormal{Anc}(O_j)$ by construction. Finally, there exists a sequence of undirected edges to $O_k$ because every member of $\bm{O}$ on $C_1 \leadsto O_j$ between $O_l$ and $O_k$ is an ancestor of $O_k$ and $O_k$ is an ancestor of them.
\item $C_1 \leadsto O_j$ does not contain a member of $\bm{O}$ besides $O_j$. But then $\Pi_{O_i C_1}$ as well as $C_1 \leadsto O_j$ form an inducing path because every non-collider on $C_1 \leadsto O_j$ must be in $\bm{L}$. Hence, there exists an inducing path between $O_i$ and $O_j$ that is into $O_j$. See below for the continuation of the argument.
\end{enumerate}

\item Suppose that $\Pi_{O_i O_j}$ is into $O_j$. We also know that there is an inducing path between $O_j$ and $O_k$. Furthermore, there exists a directed path from $O_k$ to $O_j$ by the first paragraph. Hence, there exists an inducing path $\Pi_{O_j O_l}$ between some variable $O_l$ ($O_l$ is in the cycle involving $O_j$ and $O_k$ with possibly $l=k$) and $O_j$ which is into $O_j$. Suppose $l=k$; but this would imply that $O_i$ and $O_k$ are adjacent in the MAAG, since $\Pi_{O_i O_j}$ and $\Pi_{O_j O_k}$ would together form an inducing path between $O_i$ and $O_k$ (the collider $O_j$ is an ancestor of $O_k$). Hence, the inducing path must involve $O_j$ and some other observable $O_l$ where $l \not = k$. Call this inducing path $\Pi_{O_j O_l}$. Note that the path $\{\Pi_{O_i O_j}, \Pi_{O_j O_l}\}$ is an inducing path between $O_i$ and $O_l$ because $O_j$ is an ancestor of $O_l$. Thus $O_i * \!\! \rightarrow O_j$ is in a triangle involving $O_i, O_j$ and $O_l$. 

Finally recall that $O_l$ is a member of a cycle involving $O_j$ and $O_k$. Hence $O_l$ is an ancestor of $O_j$ and $O_j$ is an ancestor of $O_l$. Now $O_l$ is also not an ancestor of $\bm{S}$ because otherwise both $O_j$ and $O_k$ would also be ancestors of $\bm{S}$. Next suppose for a contradiction that $O_l$ is an ancestor of $O_i$. Then $O_j$ must be an ancestor of $O_i$ which contradicts the arrowhead $O_i * \!\! \rightarrow O_j$.
\end{enumerate}
\end{proof}

\subsection{Main Result}
\begin{reptheorem}{thm_main}
(Soundness) Consider a DAG or a linear SEM-IE with directed cyclic graph $\mathbb{G}$. If d-separation faithfulness holds, then CCI outputs a partially oriented MAAG of $\mathbb{G}$.
\end{reptheorem}
\begin{proof}
Under d-separation faithfulness, $O_i$ and $O_j$ are d-separated given $\bm{W} \subseteq \bm{O} \setminus \{O_i, O_j \}$ if and only if $O_i \ci O_j | \{ \bm{W} \cup \bm{S} \}$. Hence, we may use the terms d-separation and conditional independence as well as d-connection and conditional dependence interchangeably.

Lemma \ref{lem_IP_dsep} implies that an inducing path exists in a maximal ancestral graph if and only if $O_i$ and $O_j$ are conditionally independent given all possible subsets of $\bm{O} \setminus \{O_i, O_j\}$ as well as $\bm{S}$. Lemmas \ref{lem_dsep} and \ref{lem_pdsep} imply that we can discover the inducing paths using subsets of $\textnormal{PD-SEP}(O_i)$ and $\textnormal{PD-SEP}(O_j)$. Hence, Step \ref{alg_skeleton} of CCI is sound.

We can justify Steps \ref{alg_vstruc} and \ref{alg_addV} by invoking Lemma \ref{lem_VS1}. Correctness of Step \ref{alg_step5} follows by by Lemma \ref{lem_sup_anc2}, and Step \ref{alg_step6} by the contrapositive of Lemma \ref{lem_sup_anc1}. Finally, correctness of the orientation rules follows by invoking Lemmas \ref{lem_OR123}, \ref{lem_OR45} and \ref{lem_OR67} for orientation rules 1-3, 4-5 and 6-7, respectively. 
\end{proof}

\clearpage


\end{document}